\newcommand{\weight}{\lambda}
\newcommand{\Ac}{A_c}
\newcommand{\agcn}[1]{\mathcal{F}_{GCN}(#1)} 
\newcommand{\ncluster}{K}
\newcommand{\clustersubscript}{k}
\newcommand{\dregc}{s}
\newcommand{\Ao}{A_\dregc}
\newcommand{\Acomp}{A'}
\newcommand{\ncomp}{n'}
\newcommand{\Accomp}{\Ac'}
\newcommand{\lossGAE}{\mathcal{L}_{\text{GAE}}}
\newcommand{\lossVGAE}{\mathcal{L}_{\text{VGAE}}}
\newcommand{\lossMAGAE}{\tilde{\mathcal{L}}_{\text{GAE}}}
\newcommand{\lossMAVGAE}{\tilde{\mathcal{L}}_{\text{VGAE}}}
\newcommand{\mati}{\Phi}
\newcommand{\matii}{\Psi}
\newcommand{\evali}{\phi}
\newcommand{\evalii}{\psi}
\newcommand{\eveci}{u}
\newcommand{\eqnref}[1]{(\ref{#1})}
\newtheorem{theorem}{Theorem}
\theoremstyle{definition}
\newtheorem{proposition}[theorem]{Proposition}
\newtheorem{definition}[theorem]{Definition}
\journal{Neural Networks (accepted: June 2022)}
\begin{document}

\begin{frontmatter}

\title{Modularity-Aware Graph Autoencoders\\ for Joint Community Detection and Link Prediction}

\author[1,2]{Guillaume Salha-Galvan\corref{cor1}}
\author[2]{Johannes F. Lutzeyer}
\author[2]{George~Dasoulas}
\author[1]{Romain~Hennequin}
\author[2,3]{Michalis~Vazirgiannis}

\cortext[cor1]{Corresponding author: \texttt{research@deezer.com}}
\address[1]{Deezer Research, Paris, France}
\address[2]{LIX, \'{E}cole Polytechnique, Institut Polytechnique de Paris, Palaiseau, France}
\address[3]{Athens University of Economics and Business (AUEB), Athens, Greece}

\begin{abstract}
Graph autoencoders (GAE) and variational graph autoencoders (VGAE) emerged as powerful methods for link prediction. Their performances are less impressive on community detection problems where, according to recent and concurring experimental evaluations, they are often outperformed by simpler alternatives such as the Louvain method. It is currently still unclear to which extent one can improve community detection with GAE and VGAE, especially in the absence of node features. It is moreover uncertain whether one could do so while simultaneously preserving good performances on link prediction. In this paper, we show that jointly addressing these two tasks with high accuracy is possible. For this purpose, we introduce and theoretically study a community-preserving message passing scheme, doping our GAE and VGAE encoders by considering both the initial graph structure and modularity-based prior communities when computing embedding spaces. We also propose novel training and optimization strategies, including the introduction of a modularity-inspired regularizer complementing the existing reconstruction losses for joint link prediction and community detection. We demonstrate the empirical effectiveness of our approach, referred to as Modularity-Aware GAE and VGAE, through in-depth experimental validation on various real-world graphs.
\end{abstract}

\begin{keyword}
Graph Autoencoders, Node Embedding, Modularity, Graph Neural Networks, Link Prediction, Community Detection
\end{keyword}

\end{frontmatter}

\section{Introduction}

\label{introduction}
\paragraph{\textbf{Context and Motivation}} Graph structures became ubiquitous in various fields ranging from social networks and web mining to biology \cite{hamilton2017representation,hamilton2020graph}, due to the proliferation of data representing entities, also known as (a.k.a.) \textit{nodes}, connected by links, a.k.a. \textit{edges}. Extracting relevant information from these nodes and edges is essential to tackle a wide range of graph-based machine learning problems \cite{hamilton2017representation,hamilton2020graph,zhang2018network,wu2019comprehensive}, such as the challenging~tasks~of:
\begin{itemize}
    \item \textit{link prediction} \cite{liben2007link,kumar2020link}, which consists in inferring the presence of new or unobserved edges between some pairs of nodes, based on observable edges in the graph;
    \item \textit{community detection} \cite{blondel2008louvain,malliaros2013clustering}, which consists in clustering nodes into similar subgroups, according to a chosen similarity metric.  
\end{itemize}

To effectively address such graph-based problems, significant research efforts were recently devoted to the development of \textit{node embedding} methods. In a nutshell, these methods aim to learn vectorial representations of nodes, in an \textit{embedding space} where node positions should reflect and summarize the initial graph structure \cite{hamilton2017representation,hamilton2020graph,kipf2020phd}. Then, they assess the probability of a missing edge between two nodes, or their likelihood of belonging to the same community, by evaluating the proximity of these nodes in the learned space \cite{wang2017mgae,kipf2016-2,choong2018learning}. Node embedding methods are at the core of promising improvements in real-world graph learning applications \cite{hamilton2020graph,kumar2020link}, often outperforming more traditional graph mining methods relying on hand-engineered indicators \cite{liben2007link,kumar2020link}. 

In particular, \textit{graph autoencoders} (GAE) and \textit{variational graph autoencoders} (VGAE) \cite{wang2017mgae,kipf2016-2,tian2014learning,wang2016structural} recently emerged as two powerful families of node embedding methods. They both rely on an \textit{encoding-decoding} strategy that, in a broad sense, consists in \textit{encoding} nodes into an embedding space from which \textit{decoding}, i.e., reconstructing the original graph should ideally be possible, by
leveraging either a deterministic (for GAE) or a probabilistic (for VGAE) approach. Originally mainly designed for link prediction (at least in their modern formulation leveraging \textit{graph neural networks} architectures \cite{kipf2016-2}), the overall effectiveness of GAE and VGAE and of their extensions on this specific task has been widely experimentally confirmed over the past few years \cite{pan2018arga,tran2018multi,salha2019-1,salha2019-2,huang2019rwr,grover2019graphite,semiimplicit2019,aaai20}.

On the other hand, several concurring studies \cite{choong2018learning,salha2019-1,choong2020optimizing,salha2021fastgae} have simultaneously pointed out the limitations of these models on community detection. They emphasized that standard GAE and VGAE are often outperformed by simpler node clustering alternatives, such as the popular Louvain method \cite{blondel2008louvain}. While some recent studies worked on this issue (see Section~\ref{s2} for an overview), their solutions strongly relied on \textit{clustering-oriented probabilistic} priors that only fit the VGAE setting and can not be directly transposed to GAE. They also benefited greatly from the presence of \textit{node features} a.k.a. \textit{attributes} complementing the graph structure, but provided only little to no empirical gain on featureless graphs that are nonetheless ubiquitous. 
Thirdly, they did not explicitly try to preserve the good performances of GAE and VGAE on link prediction. 
In practice, as we will argue in this work, learning node embedding spaces that jointly enable good link prediction and community detection performances is often desirable, both for real-world applications and in pursuit of learning accurate and general representations of a graph structure. 

\paragraph{\textbf{Research Questions}} In summary, the question of how to improve community detection with GAE and VGAE remains incompletely addressed, especially in the absence of node features, and it is still unclear to which extent one can improve community detection with these models without simultaneously deteriorating link prediction. In this paper, we propose to tackle these important problems by investigating the following two research questions:
\begin{itemize}
    \item \textbf{Question 1:} Can we improve community detection for \textit{both} the GAE \textit{and} VGAE settings? And does this improvement persist for \textit{featureless} graphs?
    \item \textbf{Question 2:} Do improvements in the community detection task necessarily incur a loss in the link prediction performance or can they be \textit{jointly} addressed with high accuracy? 
\end{itemize}


\paragraph{\textbf{Contributions}} In this paper, we propose several novel contributions to both the GAE and VGAE frameworks, which allow us to answer both of these research questions positively. More precisely, our contributions are listed as follows. 

\begin{enumerate}
    \item We first diagnose the reasons why GAE and VGAE models tend to perform well on link prediction but to underperform on community detection.
    \item Then, based on insights from this diagnosis, we improve GAE and VGAE for community detection while preserving their ability to identify missing edges. Our strategy leverages concepts inspired by \textit{modularity-based} clustering \cite{blondel2008louvain,brandes2007modularity,shiokawa2013fast}:
    \begin{enumerate}
        \item Specifically, we first present and theoretically study a novel \textit{community-preserving message passing scheme}, doping our GAE and VGAE encoders by considering both the initial graph structure and modularity-based prior communities when computing embedding spaces;
        \item We also introduce revised training and optimization strategies with respect to (w.r.t.) current practices in the scientific literature, including the introduction of modularity-inspired losses complementing the existing reconstruction losses with the aim of jointly ensuring good performances on link prediction and community detection.
    \end{enumerate}
    
    \item Backed by in-depth experiments on several real-world graphs, including on industrial-scale data provided by a global music streaming service, we demonstrate the empirical effectiveness of our approach at addressing:
    \begin{enumerate}
        \item Pure community detection problems;
        \item Joint community detection and link prediction problems.
    \end{enumerate}
    \item Lastly, along with this paper, we publicly release our source code on GitHub, to ensure the reproducibility of our results and to encourage future usage~of~our~method.
\end{enumerate}     

\paragraph{\textbf{Organization of this Paper}} The remainder of this paper is organized as follows. In Section \ref{s2}, we recall key concepts related to GAE and VGAE models, as well as their existing applications to link prediction and to community detection. We also point out the limits of current GAE and VGAE models on community detection. In Section \ref{s3}, we diagnose the reasons explaining these limits. We subsequently introduce and theoretically study our proposed solution, referred to as \textit{Modularity-Aware GAE and VGAE}, to overcome these limitations. We report and discuss our experimental evaluation in Section \ref{s4}, and we conclude in Section \ref{s5}.

\section{Background and Related Work}
\label{s2}

We begin this section by providing a description of the GAE and VGAE models in Sections \ref{sec:GAE} and \ref{sec:VGAE}, respectively. We then introduce the two learning tasks we address in this work and previous solution approaches. Specifically, we discuss the link prediction task in Section \ref{sec:link_prediction} and the community detection task in Section \ref{sec:comm_detection}.

Throughout this paper, we consider an undirected graph $\mathcal{G} = (\mathcal{V},\mathcal{E})$ with $|\mathcal{V}| = n$ nodes and $|\mathcal{E}| = m$ edges. We denote by $A$ the $n\times n$ adjacency matrix of $\mathcal{G}$, that is either:
\begin{itemize}
    \item binary, i.e., for all $(i,j) \in \mathcal{V} \times \mathcal{V},$ we have $ A_{ij} \in \{0,1\}$; 
    \item or weighted and normalized,  i.e.,  for all $ (i,j) \in \mathcal{V} \times \mathcal{V},$ we have $ A_{ij} \in [0,1].$ 
\end{itemize}Each node $i \in \mathcal{V}$ is also equipped with an $f$-dimensional feature vector $x_i$. In the following, $X$ denotes the $n \times f$ feature matrix stacking up all feature vectors. When dealing with featureless graphs, we will simply set $X = I_n$, where $I_n$ denotes the $n \times n$ identity matrix. In Table \ref{tab:notation} we provide an overview of our frequently used notation.

\begin{table}[t]
\centering
\caption{Overview of frequently used notation.}
 \vspace{0.1cm}
\label{tab:notation}
{\small
\begin{tabularx}{\textwidth}{llX}
\toprule
\textbf{Notation} & \textbf{Domain}& \textbf{Description} \\
\midrule
\midrule
$\mathcal{G} = (\mathcal{V},\mathcal{E})$ & -- & A graph composed of a node set $\mathcal{V}$ and an edge set $\mathcal{E}$\\
$n, m$ & $\mathbb{N}^{+}$ & Number of nodes and edges in $\mathcal{G},$ respectively\\
$f$ & $\mathbb{N}^{+}$ & Dimension of feature vectors describing nodes \\
$x_i$ & $\mathbb{R}^{f}$ & Feature vector describing node $i \in \mathcal{V}$  \\
$X=[x_1, \ldots, x_n]^T $ &$\mathbb{R}^{n\times f}$ & Node feature matrix, stacking up $x_i$ vectors of all nodes\\
$A$ & $[0,1]^{n \times n}$& Adjacency matrix of $\mathcal{G}$\\
$D=\mathrm{diag}(A \mathbf{1}_n)  $ & $\mathbb{R}^{n \times n}$& Degree matrix corresponding to the adjacency matrix $A$\\
$\agcn{A}$ & $[0,1]^{n \times n}$ & Symmetric normalization of $A$ defined in Equation~\eqref{eq:norm}, and used as message passing operator in Graph Convolutional Networks (GCN)\\
$d$ & $\mathbb{N}^+$ & Dimension of the node embedding space learned by GAE and VGAE models \\ 
$z_i$ & $\mathbb{R}^{d}$ & Node embedding vector of node $i \in \mathcal{V}$\\
$Z=[z_1, \ldots, z_n]^T$ & $\mathbb{R}^{n \times d}$& Node embedding matrix, stacking up $z_i$ vectors of all nodes\\
$\hat{A}$ & $ [0,1]^{n \times n}$&  Adjacency matrix reconstructed by the GAE or VGAE\\
$\lossGAE$ & $ \mathbb{R}^{+}$& Reconstruction loss minimized in standard GAE \\
$\lossVGAE$ &$\mathbb{R}^{-}$& Evidence lower bound maximized in standard VGAE\\
$\lossMAGAE$ & $\mathbb{R}$\ &Revised loss minimized in our Modularity-Aware GAE \\
$\lossMAVGAE$ &$\mathbb{R}$ & Revised objective maximized in our Modularity-Aware VGAE\\
$C_1, \ldots, C_\ncluster$ &$\mathcal{V}$& A partition of $\mathcal{V}$ into $\ncluster$ disjoint node clusters/communities\\
$n_1, \ldots, n_\ncluster$ & $\{1,...,n\}$ & Number of nodes in $C_1, \ldots, C_\ncluster$ respectively\\
$\Ac$ & $\{0,1\}^{n \times n}$ & Community membership matrix defined in Equation~\eqref{eqAc}, corresponding to the 
adjacency matrix of a graph defined by the $C_1, \ldots, C_\ncluster$ partition\\
$\Ao$ & $\{0,1\}^{n \times n}$ & $\dregc$-regular sparsification of $\Ac$, in which each node in $C_l$ is only connected to $\dregc < n_l$ randomly selected  nodes in $C_l$\\
$\lambda$ &$\mathbb{R}^{+}$& Hyperparameter regulating the relative importance of $\Ao$ in encoders of the Modularity-Aware GAE and VGAE\\
$\beta, \gamma$ &$\mathbb{R}^{+}$& Hyperparameters regulating the modularity component in the revised losses of the Modularity-Aware GAE and VGAE\\
\bottomrule
\end{tabularx}
}
\end{table}

\subsection{\textbf{Graph Autoencoders}} \label{sec:GAE}

The term \textit{graph autoencoders} (GAE) refers to a family of unsupervised models learning node embedding spaces from graph data~\cite{kipf2020phd,kipf2016-2,tian2014learning,wang2016structural,salha2020simple}. They involve the combination of two components, an \textit{encoder} and a \textit{decoder}, that jointly learn low-dimensional vectorial representations of nodes from which one should be able to reconstruct the initial graph. Intuitively, the ability to accurately reconstruct a graph from a node embedding space indicates that this space preserves some important information about the graph~structure. 

Albeit under various formulations, this encoding-decoding strategy has been widely adopted over the last years to learn node embedding spaces in the absence of node labels \cite{kipf2020phd,wang2017mgae,kipf2016-2,tian2014learning,wang2016structural}. In this section, we choose to mainly follow the formulation of Kipf and Welling \cite{kipf2016-2}, as their work is explicitly mentioned as the seminal reference in the majority of the most recent advances in GAE, including \cite{choong2018learning,pan2018arga,salha2019-1,salha2019-2,huang2019rwr,grover2019graphite,semiimplicit2019,aaai20,choong2020optimizing,salha2020simple,pei2021generalization,li2020dirichlet}.

\subsubsection{Encoder}

The first component of a GAE model is an \textit{encoder}. In its most general formulation, it is a parameterized function processing $A$ and $X$, and aiming to map each node $i \in \mathcal{V}$ from $\mathcal{G}$ to a low-dimensional \textit{embedding vector} $z_i \in \mathbb{R}^d$, with $d \ll n$. Denoting by $Z$ the $n \times d$ matrix stacking up all vectors $z_i$, we have:
\begin{equation}
Z = \text{Encoder}(A,X).    
\end{equation}

In practice, a \textit{graph neural network} architecture \cite{hamilton2020graph,zhang2018network} often acts as the encoder. In particular, Kipf and Welling \cite{kipf2016-2} leverage multi-layer \textit{graph convolutional network} (GCN) encoders \cite{kipf2016-1}. In a $L$-layer GCN, with $L \geq 2$, with input layer $H^{(0)} =X$, and with output
layer $H^{(L)} = Z$, we have:
\begin{equation} 
\begin{cases}
H^{(0)} = X, \\
H^{(l)} = \text{ReLU} (\agcn{A} H^{(l-1)} W^{(l-1)}), \hspace{5pt} \text{for } l \in \{1,...,L-1\}, \\
H^{(L)} = Z = \agcn{A} H^{(L-1)} W^{(L-1)},
\end{cases}
\label{eq:gcn}
\end{equation}
where 
\begin{equation}
\agcn{A} = (D+I_n)^{-\frac{1}{2}}(A+I_n)(D+I_n)^{-\frac{1}{2}
\label{eq:norm}
}
\end{equation} is the \textit{symmetric normalization} of the adjacency matrix~$A$, and where $D=\mathrm{diag}(A \mathbf{1}_n)$ ($\mathbf{1}_n$ denotes the vector containing $n$ entries all equal to $1$) is the diagonal degree matrix of $A.$ At each layer $l>1$, the GCN computes a vectorial representation for each node $i\in \mathcal{V}$, by computing a weighted average of the representations from layer $l-1$ of $i$'s direct neighbors and of $i$ itself. This averaging operation is composed with a linear transformation via trainable weight matrices $W^{(0)},\ldots,W^{(L-1)}$ and a ReLU activation function: $\text{ReLU}(x) = \max(x,0)$. The tuning of these \textit{weight matrices}, as proposed in \cite{kipf2016-2}, will be detailed in Section~\ref{s213}.



To this day, GCNs remain popular encoders in GAE extensions \cite{choong2018learning,pan2018arga,salha2019-2,huang2019rwr,grover2019graphite,semiimplicit2019,aaai20,pei2021generalization,salha2021fastgae} building upon Kipf and Welling \cite{kipf2016-2}.
This can be explained by the recent successes of GCN-based models \cite{hamilton2020graph,zhang2018network,kipf2016-1}, as well as by the  simplicity of GCNs in comparison to other graph neural networks \cite{zhang2018network,bruna2013spectral} and their linear time complexity w.r.t. the number of edges $m$ in the graph \cite{kipf2016-1}. Nonetheless, the choice of GCN encoders is made without loss of generality, as they can be replaced by alternatives, including by faster \cite{chen2018fastgcn,chiang2019cluster,hamilton2017inductive}, by more sophisticated \cite{bruna2013spectral,defferrard2016,velivckovic2019graph} or, on the contrary, by simpler \cite{choong2020optimizing,salha2020simple} models.

\subsubsection{Decoder}

The second component of a GAE is a \textit{decoder}. This function aims to reconstruct an $n \times n$ adjacency matrix $\hat{A}$, estimated from the embedding vectors:
\begin{equation}
\hat{A} = \text{Decoder}(Z).    
\end{equation}
While another neural network could act as a decoder \cite{wang2016structural,park2019symmetric,li2020graph}, Kipf and Welling \cite{kipf2016-2} and most of the aforementioned extensions rely on simpler \textit{inner~product decoders}:
\begin{equation}
\hat{A} = \sigma(ZZ^T),
\end{equation}
where $\sigma(\cdot)$ denotes the sigmoid function $\sigma(x) = 1/(1 + e^{-x})$. Therefore, for all node pairs $(i,j) \in \mathcal{V}\times\mathcal{V}$, we have $\hat{A}_{ij} = \sigma (z^T_i z_j) \in [0, 1]$. In such a setting, a large and positive inner~product $z^T_i z_j$ in the node embedding space indicates the likely presence of an edge between nodes $i$ and $j$ in $\mathcal{G}$, according to the model. Again, the choice of inner~product decoders is made without loss of generality, and recent efforts considered replacing them by alternatives verifying some desirable properties, such as the ability to reconstruct directed edges \cite{salha2019-2}, to capture triads structures \cite{aaai20} or to reconstruct biologically plausible graphs in the case of autoencoders for molecular structures \cite{molecule1,simonovsky2018graphvae}.

\subsubsection{Optimization}
\label{s213}

Recall that GAE models aim to learn node embedding spaces from which one can accurately reconstruct graphs. They are trained to minimize \textit{reconstruction losses}, that evaluate the similarity between the decoded adjacency matrix $\hat{A}$ and the original one $A$. Specifically,
Kipf and Welling \cite{kipf2016-2} train weight matrices of their GCN encoders by iteratively minimizing, using gradient descent \cite{goodfellow2016deep}, the following \textit{cross-entropy} loss:
\begin{align}
\lossGAE = \frac{-1}{n^2}\sum_{(i,j) \in \mathcal{V}\times \mathcal{V}} \Big[A_{ij}\log(\hat{A}_{ij}) + (1-A_{ij})\log(1 - \hat{A}_{ij})\Big].
\label{lossGAE}
\end{align}
In the case of sparse graphs where unconnected node pairs significantly outnumber the connected ones, i.e., the graph's edges, it is common to reweight the ``positive terms'' in Equation (\ref{lossGAE}) by a factor $w_{\text{pos}} > 1$ \cite{kipf2016-2,salha2021fastgae}, or alternatively to subsample ``negative terms'' \cite{kipf2020phd,pytorchgeometric}. We also note that an exact evaluation of $\lossGAE$ requires the reconstruction of the entire matrix $\hat{A}$, which suffers from a quadratic $O(dn^2)$ time complexity. For scalability concerns, recent works proposed faster training strategies \cite{salha2019-1,salha2021fastgae,pytorchgeometric}. This includes the FastGAE method \cite{salha2021fastgae} that approximates $\lossGAE$ by reconstructing stochastic \textit{subgraphs} of $O(n)$ size, and that we will also use in Section~\ref{s4} in our experiments on large graphs.

\subsection{\textbf{Variational Graph Autoencoders}} \label{sec:VGAE}

Kipf and Welling \cite{kipf2016-2} also considered a probabilistic variant of GAEs, extending \textit{variational autoencoders} (VAE) from Kingma and Welling \cite{kingma2013vae}. Besides constituting generative models with promising recent applications to graph generation \cite{molecule1,simonovsky2018graphvae,molecule3}, variants of \textit{variational graph autoencoders} (VGAE) also turned out to be effective alternatives to GAE in some link prediction or community detection tasks \cite{kipf2016-2,salha2019-2,semiimplicit2019,choong2020optimizing,salha2021fastgae,salha2020simple}. Consequently, we see value in considering both GAE and VGAE in our work.

\subsubsection{Encoder}

VGAE models provide an alternative strategy to learn a matrix $Z,$ stacking up one embedding vector $z_i$ for each node $i$ of a graph $\mathcal{G}$, by assuming that these vectors are drawn from specific distributions. In particular, Kipf and Welling \cite{kipf2016-2} assume that each vector $z_i$ is a sample drawn from a $d$-dimensional Gaussian distribution, with mean vector $\mu_i \in \mathbb{R}^d$ and variance matrix $\text{diag}(\sigma_i^2)  \in \mathbb{R}^{d \times d}$ (with $\sigma_i \in \mathbb{R}^d$). They rely on \textit{two encoders} to learn these parameters. Denoting the $n \times d$ matrices stacking up the $d$-dimensional mean
and (log)-variance vectors for each node by $\mu$ and by $\log \sigma$, respectively, they set:
\begin{equation} \mu = \text{Encoder}_{\mu}(A,X) \text{ and } \log \sigma = \text{Encoder}_{\sigma}(A,X).
\label{vaeencoder}
\end{equation}
As is the case for GAE, multi-layer GCNs often act as encoders, i.e., $\mu = \text{GCN}_{\mu}(A,X)$ and $\log \sigma = \text{GCN}_{\sigma}(A,X)$. Then, they adopt a \textit{mean-field inference model} for $Z$  \cite{kipf2016-2}, i.e.,
\begin{align} \label{eqn:distnZ}
q(Z \mid A,X) = \prod_{i=1}^n q(z_i\mid A,X), \text{ with } q(z_i \mid A,X) = \mathcal{N}(z_i\mid \mu_i, \text{diag}(\sigma_i^2)),
\end{align}
where $\mathcal{N}(\cdot)$ denotes the normal distribution.

\subsubsection{Decoder}
In the VGAE setting, the actual embedding vectors $z_i$ are sampled  from the aforementioned normal distributions. From such embedding representations, VGAE models then require a \textit{generative model} $p(A \mid Z,X)$, to act as a graph \textit{decoder}. As for GAE, Kipf and Welling \cite{kipf2016-2} rely on inner~products together with sigmoid activation functions to reconstruct edges: 
\begin{align} \hat{A}_{ij} = p(A_{ij} = 1 \mid z_i, z_j) = \sigma(z_i^Tz_j),
\end{align}
where the embeddings $z_i, z_j$ are sampled from the distribution in Equation \eqnref{eqn:distnZ}.
Then, the authors assume the following generative model which factorizes over the edges:
\begin{align}p(A \mid Z,X) = \prod\limits_{i=1}^n  \prod\limits_{j=1}^n  p(A_{ij} \mid z_i, z_j).\end{align}

\subsubsection{Optimization}
\label{s223}
During training, and similarly to standard VAE models \cite{kingma2013vae}, Kipf and Welling \cite{kipf2016-2} iteratively maximize a tractable variational lower bound of the model's likelihood, a.k.a. the \textit{evidence lower bound} (ELBO) \cite{kingma2013vae}, written as follows in the context of VGAE:
\begin{align}\lossVGAE = \mathbb{E}_{q(Z \mid A,X)} \Big[\log
p(A \mid Z,X)\Big] - \mathcal{D}_{KL}\Big(q(Z \mid A,X)||p(Z)\Big).
\label{elbo}
\end{align}
This ELBO is iteratively maximized w.r.t. weights of the two GCN encoders, by gradient descent and potentially using approximation strategies such as the strategies described in Section~\ref{s213}. In the above Equation (\ref{elbo}), $\mathcal{D}_{KL}(\cdot||\cdot)$ denotes the Kullback-Leibler divergence~\cite{kullback1951information}, and $p(Z)$ corresponds to a unit Gaussian 
prior on the distribution of the latent vectors, that can also be interpreted as a regularization term on the magnitude of the embedding vectors. We refer to \cite{kipf2020phd,kingma2013vae,doersch2016tutorial} for complete details and derivations of such ELBO bounds in the context of VAE models.

\subsection{\textbf{Evaluating GAE and VGAE: Link Prediction}} \label{sec:link_prediction} 

The question of how to properly determine the quality of node embedding representations learned from GAE and VGAE models is crucial. While one could directly report reconstruction losses \cite{wang2016structural}, recent research work instead strives to apply the GAE and VGAE models to \textit{downstream evaluation tasks}, which permit reporting more insightful and interpretable evaluation metrics \cite{wang2016structural,tian2014learning,kipf2016-2}. In particular, Kipf and Welling \cite{kipf2016-2} evaluate their GAE and VGAE models on \textit{link~prediction} problems in citation networks~\cite{sen2008collective}.

\subsubsection{The Link Prediction Task}
\label{s231}

Kipf and Welling \cite{kipf2016-2} follow an evaluation methodology consisting of:
\begin{itemize}
    \item Training their models on an \textit{incomplete} version of an original graph, for which only a certain percentage of randomly sampled edges (85\% in their case) are visible;
    \item Constructing \textit{validation} and \textit{test sets} gathering:
    \begin{itemize}
        \item node pairs corresponding to missing edges ($5\%$ and $10\%,$ respectively, in \cite{kipf2016-2});
        \item the same number of randomly picked unconnected node pairs in the graph; 
    \end{itemize} 
    \item Evaluating the models' abilities to distinguish edges from non-edges in~these~sets, using node embedding representations learned on the incomplete graph.
\end{itemize}  
Indeed, while all node pairs in the validation and test sets are observed to be unconnected during training, half of them actually correspond to missing edges from the original graph. Link prediction acts as a binary classification downstream task, evaluating to which extent the decoder's predictions $\hat{A}_{ij} = \sigma(z^T_i z_j)$ correctly locate and reconstruct these missing edges despite their absence during training. Performance in the link prediction task is evaluated using metrics such as the \textit{Area Under the ROC Curve} (AUC) and \textit{Average Precision} (AP) scores \cite{pedregosa2011scikit}.

\subsubsection{Link Prediction with GAE, VGAE and Extensions}

Kipf and Welling \cite{kipf2016-2} show that their proposed GAE and VGAE reach competitive link prediction scores w.r.t. some popular node embedding methods, such as DeepWalk \cite{perozzi2014deepwalk} and Laplacian eigenmaps \cite{von2007tutorial}. They also emphasize an additional benefit of the GCN-based GAE and VGAE over baseline methods such as DeepWalk and Laplacian eigenmaps, which is the ability to leverage both the graph structure and node features when learning embedding spaces. 

Over the last few years, the overall effectiveness of the GAE and VGAE paradigms at addressing link prediction has been widely confirmed experimentally \cite{pan2018arga,tran2018multi,salha2019-1,salha2019-2,huang2019rwr,grover2019graphite,semiimplicit2019,aaai20,salha2021fastgae,salha2020simple,pei2021generalization,hao2020inductive,rennard2020graph,berg2018matrixcomp}. Numerous research efforts proposed and evaluated variants of GAE and VGAE designed for this specific task, improving their performances by considering more refined encoders \cite{salha2019-1,semiimplicit2019,hao2020inductive,wu2021deepened}, decoders \cite{salha2019-2,grover2019graphite,semiimplicit2019,aaai20} or regularization techniques \cite{pan2018arga,huang2019rwr,pei2021generalization}. Other works also successfully addressed different downstream tasks that are closely related to link prediction, such as edge classification \cite{rennard2020graph} or graph-based recommendation \cite{hao2020inductive,berg2018matrixcomp,salha2021cold}.

\subsection{\textbf{Evaluating GAE and VGAE: Community Detection}} \label{sec:comm_detection}

While link prediction remains a prominent evaluation task for GAE and VGAE, they have also shown promising results on (semi-supervised) node classification \cite{tran2018multi,semiimplicit2019}, canonical correlation analysis \cite{kaloga2021multiview} and, in the case of VGAE, graph generation especially in the context of molecular graph data \cite{molecule1,simonovsky2018graphvae,molecule2}. However, their performances are less impressive on \textit{community detection} \cite{choong2018learning,choong2020optimizing}, on which we focus in this section.
\subsubsection{The Community Detection Task}
Among the fundamental problems in graph-based machine learning, \textit{community detection} (which we regard as a synonym of \textit{node clustering} in this work, consistently with Fortunato~\cite{fortunato2010community})  consists in identifying $\ncluster < n$ clusters a.k.a. \textit{communities} of nodes that, in some sense, are more similar to each other than to the other nodes \cite{malliaros2013clustering,choong2018learning,fortunato2010community}.  More formally, we aim to obtain a partition of the node set $\mathcal{V}$ into $\ncluster$ sets:
\begin{equation}
C_1 \subseteq \mathcal{V}, \ldots, C_\ncluster \subseteq \mathcal{V},
\end{equation}
with cardinality $|C_\clustersubscript| = n_\clustersubscript \leq n $ for $\clustersubscript \in \{1, \ldots, \ncluster\}$. 
The quality of such a partition is usually assessed through some predefined similarity metrics, e.g., unsupervised density-based metrics\footnote{Such metrics usually rely on \textit{homophily} assumptions. This term describes the tendency of nodes to connect to ``similar'' nodes in the graph, which is observed in numerous real-world applications \cite{kumar2020link}. Under such assumptions, intuitively, nodes from the same community should be more densely connected, and nodes from different communities should be more sparsely connected.} calculated from the intra- and inter-cluster edge density
\citep{malliaros2013clustering}, or scores such as the normalized \textit{Mutual Information} (MI) \cite{choong2018learning} that compares the partition to some ground-truth node labels hidden during training. 

Improving community detection on graphs has been the objective of significant efforts over the last decades (see, e.g., \cite{malliaros2013clustering,fortunato2010community} for a review), and still constitutes an active area of research \cite{choong2018learning,cavallari2017learning,tu2018unified,sun2019vgraph,he2021community} with numerous applications.
This includes the segmentation of websites in a web graph according to thematic categories, as well as the detection of densely connected subgroups of users in online social networks \cite{malliaros2013clustering}.

\subsubsection{Community Detection with GAE and VGAE}
\label{s242}
In the presence of node embedding representations, community detection boils down to the more standard problem of clustering $n$ vectors in a $d$-dimensional Euclidean space into $\ncluster$ groups \cite{macqueen1967some}. With this goal in mind, several studies specifically tried to perform community detection with GAE and VGAE by:
\begin{itemize}
    \item learning an embedding vector $z_i$ for each $i \in \mathcal{V}$, as described in Sections \ref{sec:GAE} and \ref{sec:VGAE};
    \item clustering the resulting vectors $z_i$ into $\ncluster$ groups, through one of the numerous clustering methods for Euclidean data, such as the popular $k$-means algorithm \cite{macqueen1967some}.
\end{itemize} 

However, concurring experimental evaluations \cite{choong2018learning,salha2019-1,choong2020optimizing,salha2021fastgae}  recently pointed out the limitations of such an approach. They emphasized its lower performance w.r.t. simpler community detection alternatives, that sometimes even directly operate on the graph structure without considering node features, such as the popular Louvain method~\cite{blondel2008louvain}. 

For instance, Choong et al. \cite{choong2020optimizing} show that, on the (featureless) Cora citation network~\cite{sen2008collective}, a VGAE+$k$-means strategy reaches a mean normalized MI score of 23.84\%,   way below the Louvain method (43.36\%). Salha et al. \cite{salha2021fastgae} show that, on the same graph, a GAE+$k$-means also reaches an underwhelming 30.88\% mean normalized MI score. These authors obtain comparable conclusions on several other popular graph datasets, such as the featureless versions \cite{sen2008collective} of Citeseer (9.85\% MI for VGAE+$k$-means vs 16.39\% for Louvain, in \cite{salha2021fastgae}) and Pubmed (20.41\% MI for VGAE+$k$-means in \cite{choong2020optimizing}, which is comparable to Louvain, but significantly below the 29.46\% MI score obtained by running a $k$-means on node embedding vectors learned via DeepWalk \cite{perozzi2014deepwalk}).

\subsubsection{Community Detection with Extensions of GAE and VGAE} \label{sec:CD_GAE_extensions}

Several studies have worked on the issue of the underwhelming performance of GAE and VGAE in the community detection task \cite{choong2018learning,choong2020optimizing,li2020dirichlet}. Choong~et~al.~\cite{choong2018learning} introduced VGAECD, a \textit{VGAE for Community Detection (CD)} model that replaces Gaussian priors by learnable \textit{Gaussian mixtures}. Such a choice permits recovering communities from node embedding spaces without relying on an additional $k$-means step. In a subsequent study \cite{choong2020optimizing}, the same authors proposed VGAECD-OPT, an improved version of VGAECD. Specifically, VGAECD-OPT replaces GCN encoders with simpler linear models \cite{wu2019simplifying}, as proposed in Salha et al. \cite{salha2020simple}. It also adopts a different optimization procedure based on neural expectation-maximization \cite{greff2017neural}, which guarantees that communities do not collapse during training \cite{choong2020optimizing} and experimentally leads to better performances.

More recently, Li et al. \cite{li2020dirichlet} introduced \textit{Dirichlet Graph Variational Autoencoder} (DGVAE), another extension of VGAE which uses Dirichlet distributions as priors on latent vectors, acting as indicators of community memberships. The \textit{Marginalized GAE} (MGAE) model from \citet{wang2017mgae} is also evaluated on community detection. However, the MGAE model does not explicitly leverage embedding representations for this task; instead the \textit{spectral clustering} \cite{von2007tutorial} is applied to the decoded~graphs.
Lastly, while community detection was not the main focus in \cite{pan2018arga,huang2019rwr,aaai20,park2019symmetric,pei2021generalization}, these works all proposed various encoding-decoding methods that, to different extents, seem to outperform standard GAE and VGAE on the community detection task, in the reported evaluations. They consider alternatives encoder or training choices, which we further discuss and investigate~in~Section~\ref{s3}.

\subsubsection{Limitations} \label{sec:limits}

While the models discussed in Section \ref{sec:CD_GAE_extensions} will constitute relevant baselines in our experiments (see Section~\ref{s4}), they still suffer from several fundamental limitations that motivate our work.
\begin{itemize}
    \item Firstly, all extensions explicitly designed for community detection \cite{choong2018learning,choong2020optimizing,li2020dirichlet} \textit{rely on clustering-oriented probabilistic priors}. They are only applicable in the VGAE paradigm, and cannot be directly transposed to the deterministic GAE setting. The question of how to design clustering-efficient GAE models thus remains widely open.
    
    \item More importantly, a closer look at these models reveals that their \textit{empirical gains often mostly stem from the addition of node features} to the graph. As an illustration, Table~\ref{tab:vgaecd} displays the reported performances of VGAECD and VGAECD-OPT on the \textit{featureless} versions of two graphs \cite{choong2020optimizing}. We observe that they offer little to no empirical advantage when features are absent. This draws into question the extent to which these models are able to capture communities from~(only)~graph~data.

  \begin{table}
      \caption{Normalized mutual information scores (in \%) for community detection on the Cora and Pubmed citation networks, \textit{with} and \textit{without} node features. Results are directly taken from the evaluation of Choong et al.~\cite{choong2020optimizing}. This table emphasizes that, in the absence of node features, VGAECD and VGAECD-OPT bring little (to no) advantage w.r.t. standard VGAE, and remain below the Deepwalk and/or Louvain baselines. Scores of VGAECD and VGAECD-OPT significantly increase when adding features to the graph. Recall: in this table, Deepwalk and Louvain both ignore node features.}
       \vspace{0.2cm}
    \label{tab:vgaecd}
   \resizebox{\textwidth}{!}{
  \begin{tabular}{c|c|cc|ccc|cc}
     \textbf{Dataset} & \textbf{VGAE} & \textbf{VGAECD} & \textbf{VGAECD-OPT} & \textbf{DeepWalk} & \textbf{Louvain} \\
    \midrule \midrule
    Cora \textit{without} node features & 23.84 & 28.22 & 37.35 & 37.96 & \textbf{43.36} \\

    Pubmed \textit{without} node features & 20.41 & 16.42 & 25.05 & \textbf{29.46} & 19.83 \\
\midrule
    Cora \textit{with} node features& 31.73 & 50.72 & \textbf{54.37} & 37.96 & 43.36\\
    Pubmed \textit{with} node features & 19.81 & 32.53 & \textbf{35.52} & 29.46 & 19.83\\
    
    \bottomrule
  \end{tabular}}

    \vspace{-0.1cm}
\end{table}

    The important role of node features has subsequently been confirmed (e.g., Park et al.~\cite{park2019symmetric} show that, on the Pubmed dataset, a straightforward $k$-means on the node features alone reaches comparable MI scores w.r.t. VGAE and MGAE). On the other hand, most of the aforementioned other studies with empirical improvements \cite{pan2018arga,huang2019rwr,aaai20,park2019symmetric,pei2021generalization} only reported results on graphs equipped with node features. This motivates the need for a proper investigation of the \textit{featureless} case where models cannot rely on the additional node feature information. 

    \item Lastly, previous studies centered around community detection \cite{wang2017mgae,choong2018learning,choong2020optimizing,li2020dirichlet} \textit{did not explicitly try to preserve the good performances of GAE and VGAE on link prediction}. Overall, most of the aforementioned existing works learn node representations specific to a particular learning task. Therefore, it is still unclear whether one can improve community detection with GAE or VGAE without simultaneously deteriorating link prediction.
    
    With the general aim of learning high-quality node embeddings, one can wonder to which extent these models can learn representations that are \textit{jointly} useful for several tasks. Besides providing a more accurate summary of the graph structure under consideration, such representations could also lead to significant resource savings in real-world applications. As an illustration, our experiments in Section~\ref{s4} will consider industrial-scale graph data obtained from the music streaming service Deezer, providing a concrete example of an application for which learning representations jointly effective at link prediction and community detection is desirable.
\end{itemize}
In conclusion to this section, the question of how to effectively improve community detection with GAE and VGAE remains incompletely addressed.

\section{Modularity-Aware (Variational) Graph Autoencoders}
\label{s3}

\begin{figure}[t]
    \centering
    \includegraphics[width=\textwidth]{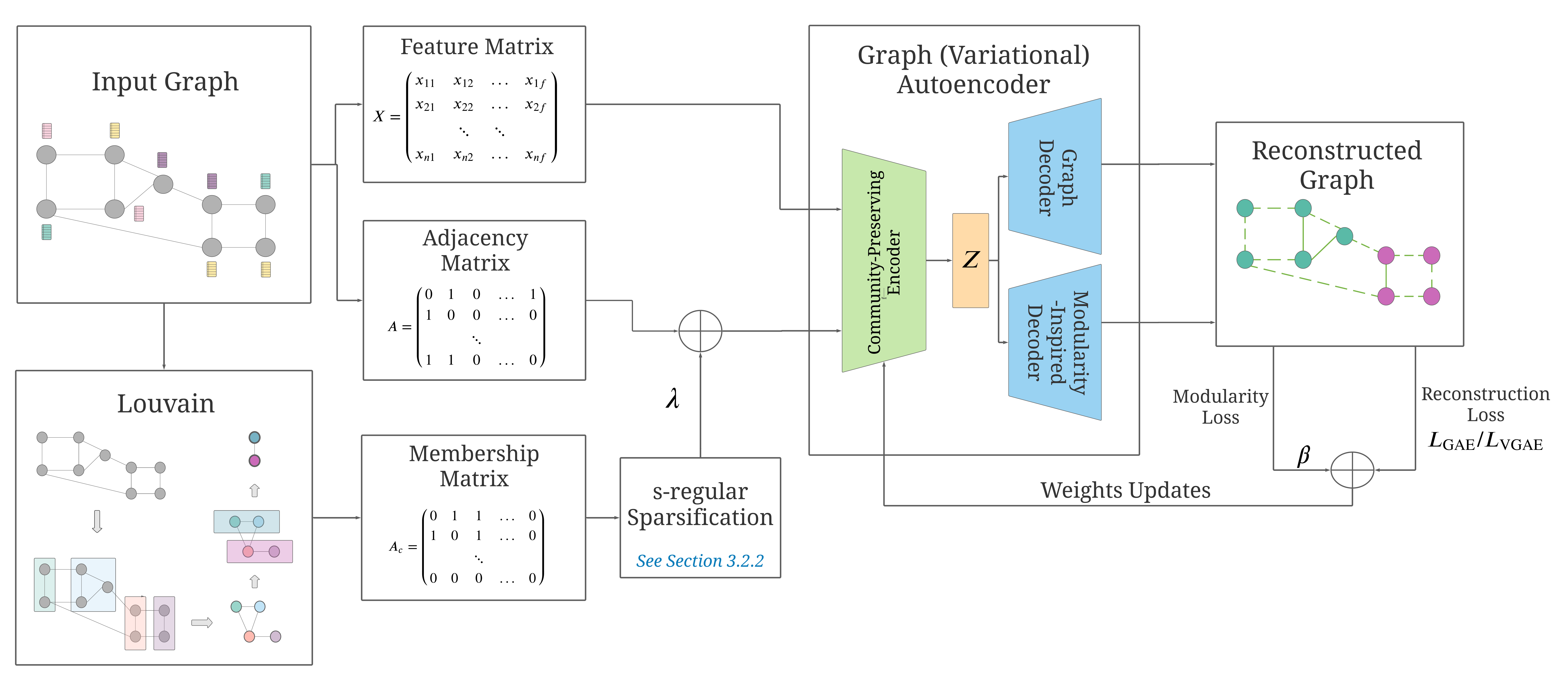} 
    \caption{Overview of our proposed \textit{Modularity-Aware GAE/VGAE} model. Firstly, input graph data $A$ and $X$ are combined with the $\dregc$-regular sparsified prior community membership matrix $\Ao$, derived through iterative modularity maximization via the Louvain algorithm, as described in Section~\ref{s32}. Then, they are processed by our revised community-preserving (Linear or GCN) encoders, encoding each node $i$ as an embedding vector $z_i$ of dimension $d \ll n$. Neural weights of encoders are optimized through a procedure combining reconstruction and modularity-inspired losses, and described in Section~\ref{s331}. Furthermore, other hyperparameters from this model are tuned via the method described in Section~\ref{s332} and designed for joint link prediction and community detection applications.}
    \label{fig:workflow}
\end{figure}

We now introduce our approach, referred to as \textit{Modularity-Aware GAE and VGAE} in the following, to address the aforementioned limitations.  
In Section~\ref{s31}, we first provide a general overview of the key components of our solution. They transpose concepts from \textit{modularity-based} clustering \cite{blondel2008louvain,brandes2007modularity,shiokawa2013fast} to GAEs and VGAEs, and are illustrated in Figure~\ref{fig:workflow}. 
We subsequently detail these solution components in Sections~\ref{s32}~and~\ref{s33}. 

\subsection{\textbf{Diagnosis and Overview of our Proposed Solution}}
\label{s31}

Based on our literature review, we diagnose three main reasons that can explain why previous GAE and VGAE models still suffer from the limitations described in Section~\ref{sec:limits}.
\begin{itemize}
    \item Firstly, they leveraged \textit{encoders} that were not specifically designed to preserve the intrinsic communities from the graph structure under consideration in the node embedding space. This includes the popular GCN, as well as refined neural models that rather aimed to preserve clusters from node features (but not necessarily the actual communities from the graph under consideration).
    
    In \textit{Modularity-Aware GAE and VGAE}, we overcome this issue by incorporating a novel encoding scheme for graph community-preserving representation learning. It consists in an improvement of the GCN \textit{message passing operator}, boosting both GAE and VGAE models by simultaneously considering the initial graph structure and \textit{modularity-based node communities} when computing node embedding spaces. We present and theoretically study this encoder in Section~\ref{s32}.
    
    \item Besides the encoder's architecture, previous models were often \textit{optimized} in a fashion that, by design, favors link prediction over community detection. In particular, the standard cross-entropy (Equation \eqref{lossGAE}) and ELBO (Equation \eqref{elbo}) losses, used to learn neural weight matrices, directly involve the reconstruction of \textit{node pairs} from the embedding space\footnote{In the case of the probabilistic VGAE paradigm, another limitation of the ELBO loss - and of the underlying generative decoder - lies in the use of standard Gaussian priors. Replacing these priors by for example \textit{Gaussian mixtures} as in \cite{choong2018learning,choong2020optimizing}, appears to be an intuitive approach for community-based learning. However, as this approach 1) does not extend to deterministic GAE, and 2) has been extensively studied in \cite{choong2018learning,choong2020optimizing}, we do not further develop it in this work. We will nonetheless compare to \cite{choong2018learning,choong2020optimizing} in experiments, and will argue in Section~\ref{s331} that Gaussian mixtures could straightforwardly be incorporated in our proposed Modularity-Aware VGAE.}. However, as we will detail, a good reconstruction of \textit{local} pairwise connections does not necessarily imply a good reconstruction of the \textit{global}~community~structure. 
    
    In \textit{Modularity-Aware GAE and VGAE}, we instead optimize an alternative loss inspired by the \textit{modularity} \cite{blondel2008louvain}. Such a loss acts as a simple yet effective global regularization over pairwise reconstruction losses, with desirable properties for joint link prediction and community detection. It will empirically enable a refined optimization of the weight matrices from our encoders. We present this aspect in more details~in~Section~\ref{s331}.
    
    \item Lastly, in addition to these weight matrices, GAE and VGAE models involve several other hyperparameters, ranging from the number of training iterations to the learning rate \cite{kipf2016-2}. While they also impact the model  performance, the selection procedure for such hyperparameters was sometimes omitted in previous works \cite{choong2020optimizing} or based on link prediction validation sets \cite{salha2019-1,salha2021fastgae} (while, intuitively, the best hyperparameters for community detection might differ from those for link prediction).
    
    For the \textit{Modularity-Aware GAE and VGAE} we adopt an alternative graph-based model selection procedure. It completes the previous two aspects, by providing the most relevant GAE/VGAE hyperparameters for joint link prediction and community selection. We present and discuss this procedure in Section~\ref{s332}.
    \end{itemize}

\subsection{\textbf{Community-Preserving Encoders for GAE and VGAE}}
\label{s32}

Following this diagnosis and overview, we now provide more detail on the first of the three bullet points in Section \ref{s31}, i.e., our proposed revised \textit{encoding}~strategy. We recall that our proposed solution aims to encode nodes as embedding vectors $z_i$  \textit{more suitable for community detection}. Essentially, intrinsic communities in the graph under consideration should be easily retrievable from these representations, e.g., from their $L_2$ distances via a straightforward $k$-means clustering. These vectors should also simultaneously remain relevant for \textit{link prediction,} i.e., as for existing GAE and VGAE, the likelihood of a missing edge between two nodes should also be inferred from the learned representations $z_i$. In the following, for consistency with previous work (see Section~\ref{s2}), we continue using the inner product $\hat{A}_{ij} = \sigma(z^T_i z_j) \in [0,1]$ as the probability of an edge between nodes $i$ and $j$.

\subsubsection{Revising the Message Passing Operator}

Existing graph encoders usually involve normalized versions of the adjacency matrix~$A$, or some generalized \textit{message passing operator} matrix that also captures each node's direct connections in the graph under consideration \cite{Dasoulas2021}. For instance, in the popular multi-layer GCN in Equation~\eqref{eq:gcn}, the symmetric normalization $\agcn{A}$ from Equation~\eqref{eq:norm} is used such that at each layer $l$ a vectorial representation for each node is computed by taking a weighted average of the representations from layer
$l - 1$ of its direct neighbors and of itself. In this work, we adopt an alternative strategy that consists in computing the weighted average of, at each layer:
\begin{itemize}
    \item representations from the direct neighbors of each node, as above;
    \item but also representations from other \textit{unconnected nodes} that, according to some prior available knowledge and criteria, belong to the same graph community. 
\end{itemize}

More precisely, let us assume that we have, at our disposal, a preprocessing \textit{graph mining} technique that, based on the graph structure and on some fixed criteria, learns an initial \textit{prior partition of the node set} $\mathcal{V}$ into $\ncluster$ sets $C_1, \ldots, C_\ncluster,$ with $|C_\clustersubscript| = n_\clustersubscript$ for $\clustersubscript \in \{1, \ldots, \ncluster\}.$ Here, $\ncluster$ acts as a hyperparameter, that can differ from the actual number of communities eventually used for the community detection downstream evaluation task (i.e., the number of clusters in the $k$-means operated on the final vectors $z_i$). A concrete example of such a technique will be provided in Section~\ref{s323} (together with explanations on how to select $\ncluster$). We simply assume its availability throughout these paragraphs.

We propose to leverage such an initial partition as a \textit{prior node clustering signal} from which the GAE/VGAE encoder should benefit, but also have the ability to deviate during training, when learning the embedding space. Specifically, we propose to replace the standard input adjacency matrix $A$~by:
\begin{equation}
A + \lambda \Ac,   
\end{equation}
where $\lambda \geq 0$ is a scalar hyperparameter, and where $\Ac$ is the community membership matrix defined as follows:

\begin{definition}\label{def:Ac}
Let us consider a partition of the node set $\mathcal{V}$ into $\ncluster$ sets $C_1, \ldots, C_\ncluster$. The corresponding \textit{community membership matrix} is defined as:
\begin{equation}
A_c=MM^T-I_n,
\label{eqAc}
\end{equation}
 with $M\in\{0,1\}^{n\times \ncluster}$ denoting the $n\times \ncluster$ matrix where elements $M_{i\clustersubscript}=1$ if and only if $i\in C_\clustersubscript$ according to the prior clustering.
\end{definition}
We interpret $\Ac$ as the adjacency matrix of an alternative graph in which each cluster of our prior partition is represented by a fully connected graph, without self-loops. Since nodes are only allocated to one cluster, there exists a node ordering such that the matrix $\Ac$ is block-diagonal. In essence, $A + \lambda \Ac$ aims to capture refined node similarities, by simultaneously considering some \textit{local} information from direct neighborhoods, and some \textit{global} information from prior node communities. The hyperparameter $\lambda$ helps to balance these two aspects. In particular, setting $\lambda = 0$ results in the standard adjacency matrix.

\subsubsection{From Message Passing Operators to Encoding Schemes}
\label{s322}

At first glance, $A + \lambda \Ac$ could straightforwardly be incorporated as a refined message passing operator in popular GAE and VGAE encoders. For instance, one could consider its direct incorporation in:
\begin{itemize}
    \item variants of \textit{2-layer GCN encoders}, initially proposed by Kipf and Welling \cite{kipf2016-2}, as this neural architecture remains the most popular GAE/VGAE encoder in the literature \cite{choong2018learning,pan2018arga,salha2019-2,huang2019rwr,grover2019graphite,semiimplicit2019,aaai20,salha2021fastgae,pei2021generalization}. Specifically, one could consider:
    \begin{itemize}
        \item a version incorporating $A + \lambda \Ac$ in both layers. Then, for example the GAE formulation\footnote{For clarity of exposition we discuss the deterministic GAE framework (Section~\ref{sec:GAE}). However, the changes are equally applicable to the VGAE framework (Section~\ref{sec:VGAE}), for which $Z$ has to be replaced by $\mu$ and $\log \sigma$ as in Equation~\eqref{vaeencoder}.} in Equation~\eqref{eq:gcn} becomes, $Z = \text{GCN}^{(1)}(A+\weight\Ac, X) = \agcn{A+\weight\Ac} \text{ReLU} (\agcn{A+\weight\Ac} X W^{(0)}) W^{(1)}$.
        \item a version incorporating the prior communities only on the first layer, i.e.,  $Z = \text{GCN}^{(2)}(A+\weight\Ac, X) = \agcn{A} \text{ReLU} (\agcn{A+\weight\Ac} X W^{(0)}) W^{(1)}$.
    \end{itemize}
    \item or, a variant of the \textit{linear encoder}\footnote{A ``\textit{linear encoder}'' is actually a particular case of GCN with a single layer and without activation function. For consistency with previous works \cite{salha2020simple,waradpande2020graph,shin2020bipartite,salha2019keep}, we nonetheless adopt the ``\textit{linear encoder}'' naming in this work, and use ``\textit{GCN}'' to refer to the above \textit{multi-layer} graph convolutional networks.} proposed by Salha et al. \cite{salha2020simple}. Indeed, this simplified one-hop model without activation reached competitive performances w.r.t. multi-layer GCNs for GAE/VGAE-based community detection in recent studies \cite{choong2020optimizing,salha2020simple}. In this case: $Z = \text{Linear}(A+\weight\Ac, X) = \agcn{A+\weight\Ac} X W^{(0)}$.
\end{itemize}

However, the computational cost of evaluating each layer of a GCN or a linear encoder depends linearly on the number of edges $|\mathcal{E}| = m$ in the message passing operator \cite{salha2020simple,kipf2016-1}. As the graph represented by $A+\weight \Ac$ contains at least $\sum_{\clustersubscript=1}^\ncluster n_\clustersubscript^2$ edges, such a direct incorporation of $A+\weight \Ac$ in encoders could incur a large computational expense. 

To alleviate this cost, we will instead consider a \textit{$\dregc$-regular sparsification of $\Ac$}, denoted by $\Ao$ in the following. In $\Ao$, each node $i \in C_\clustersubscript$ is only connected to $\dregc < n_\clustersubscript$ randomly selected nodes in $C_\clustersubscript$ (instead of all other nodes in $C_\clustersubscript$). Therefore, the $A + \lambda \Ao$ message passing operator still contains some of the prior clustering information without necessarily incurring 
the cost implied by the use of $\Ac$. In particular, selecting $\dregc \approx \frac{m}{n}$ ensures that $A + \lambda \Ao$ has $O(2m)$ non-null elements, preserving the linear complexity w.r.t. $m$ of the aforementioned encoders (in our experiments, the optimal value of $s$ will be selected for each graph as described in Section~\ref{s413}). Note that we only sample $\Ao$ once at the beginning of the model training and then keep it fixed throughout training and testing. To sum up, in our upcoming experiments in Section~\ref{s4} we will instead consider the following two\footnote{We will favor  $\text{GCN}^{(2)}$ over $\text{GCN}^{(1)}$ in the remainder of this work, as the former outperformed the latter in our experiments. To simplify the notation $\text{GCN}^{(2)}$ will be referred to as GCN~in~experiments.} encoding schemes:
\begin{itemize}
    \item $Z = \text{GCN}^{(2)}(A+\weight \Ao, X) = \agcn{A} \text{ReLU} (\agcn{A+\weight \Ao} X W^{(0)}) W^{(1)}$.
    \item $Z = \text{Linear}(A+\weight \Ao, X) = \agcn{A+\weight \Ao} X W^{(0)}$.
\end{itemize}

In these encoders, our altered message passing scheme allows practitioners to incorporate information from prior communities in the resulting node embedding space. A given node $i \in \mathcal{C}_\clustersubscript \subset \mathcal{V}$, for $\clustersubscript \in \{1,...,\ncluster\},$ will aggregate information from its direct neighbors and from some nodes in  $\mathcal{C}_\clustersubscript$. By design, $i$ will thus have an embedding vector $z_i$  more similar to the embedding vectors of the other nodes in $\mathcal{C}_\clustersubscript$ than would be the case for the standard encoders based on $\agcn{A}$.
We recall that the choice of linear and 2-layer GCN encoders is made without loss of generality. $A + \lambda  \Ao$ could be incorporated into other encoders including deeper GCNs, ChebNets \cite{defferrard2016} or Graph Attention Networks \cite{velivckovic2019graph}.

The remainder of this Section~\ref{s32} on encoders is organized as follows. In Section~\ref{s323}, we now detail how we derive the matrix $\Ac$ (that has loosely been assumed to be ``available'' so far) in our work. Then, in Section~\ref{s324}, we provide a theoretical analysis of our novel encoding strategy. It notably aims to better understand our newly introduced operators $\Ac$ and $\Ao$ in terms of the spectral filtering they induce, as well as to assess the impact of the $\dregc$-regular sparsification of $\Ac.$

\subsubsection{Learning $\Ac$ and $\Ao$ with Modularity-Based Clustering}
\label{s323}

So far, for pedagogical purposes, we loosely assumed the availability of the $\Ac$ and $\Ao$ prior community membership matrices. In practice, how these matrices are \textit{learned} plays an important role, as the empirical performance of our strategy will directly depend on the quality of the underlying prior node clusters. Throughout this paper, we will rely on \textit{modularity} concepts to learn $\Ac$ -- hence the name \textit{Modularity-Aware GAE and VGAE}. More specifically, we will leverage the popular \textit{Louvain} algorithm \cite{blondel2008louvain}.

In the absence of node feature information, the Louvain greedy algorithm remains a popular and powerful approach for community detection \cite{blondel2008louvain}.
It iteratively aims to maximize the \textit{modularity} value \cite{Newman8577}, defined as follows:

\begin{definition}\label{def:modularity}
Let us consider a graph $\mathcal{G} = (\mathcal{V}, \mathcal{E})$ with adjacency matrix $A$ and nodes $i\in \mathcal{V}$ of degree $d_i = \sum_{j=1}^n A_{ij}$. We denote a partition of these nodes into $\ncluster \leq |\mathcal{V}|$ communities by $\{C_1,...,C_\ncluster\}.$ Then, the \textit{modularity} associated to this partition is:
\begin{equation}\label{eq:modularity}
Q = \frac{1}{2m} \sum_{i,j=1}^n \left[A_{ij} - \frac{d_id_j}{2m}\right] \delta(i,j),
\end{equation}
where $m$ is the sum of all edge weights in the graph (i.e., the number of edges for unweighted graphs), and where $\delta(i,j) = 1$ if nodes $i$ and $j$ belong to the same community and $0$ otherwise. 
\end{definition}
In essence, the modularity compares the density of connections inside communities to connections between communities. More specifically, Equation \eqnref{eq:modularity} returns a scalar $Q$ in the range $[-\frac{1}{2}, 1]$, that measures the difference between the observed fraction of (potentially weighted) edges that occur within the same community and the expected fraction of edges in a configuration model graph, which matches our observed degree distribution but allocates edges randomly without any specified community structure.

In the Louvain greedy algorithm, aiming to maximize the modularity of a graph over the set of possible cluster assignments, each node is initialized in its own community. Then, the algorithm iteratively completes two phases:

\begin{itemize}
    \item In phase 1, for each node $i \in \mathcal{V}$, one computes the change in modularity resulting from the allocation of node $i$ to the community of each of its neighbors. Then, $i$ is either placed into the community leading to the greatest modularity increase, or remains in its original group if no increase is possible;
\item In phase 2, a new graph is constructed. Nodes correspond to communities obtained in phase 1, and edges are formed by summing edge weights occurring between communities. Edges within a community are represented by self-loops in this new graph. One repeats phase 1 on this new graph until no further modularity improvement is possible.
\end{itemize}

\paragraph{\textbf{Why using the Louvain method?} }Our justification for the use of this method to derive $\Ac$ is threefold.
\begin{itemize}
    \item First and foremost, it automatically selects the relevant number of prior communities $\ncluster$, by iteratively maximizing the modularity value.
    \item Secondly, it runs in $O(n\log n)$ time  \cite{blondel2008louvain}, with $n = |\mathcal{V}|$. Therefore, it scales to large graphs with millions of nodes, such as those in our experiments in Section~\ref{s4}.
    \item Thirdly, such a modularity criterion complements the encoding-decoding paradigm of standard GAE and VGAE. We argue that learning node embedding spaces from complementary criteria is beneficial. Our experiments will confirm that leveraging prior modularity-based node clusters in the GAE/VGAE outperforms the individual use of the Louvain or of the GAE/VGAE \textit{alone}.
\end{itemize}  Note that, the use of the Louvain method is made without loss of generality as our framework remains valid for alternative graph mining methods deriving $\Ac$ and $\Ao$.

\subsubsection{Theoretical Analysis of the Encoder's Message Passing Operator}
\label{s324}


We now conduct a theoretical analysis of our newly introduced message passing operator, which we begin by motivating the spectral analysis of the matrices involved.
Recall, from Equation \eqnref{eq:gcn}, that the computations performed by a GCN at a given layer are the following,
\begin{equation} \label{eq:GCN_layer}
    \text{ReLU} (\agcn{A} H^{(l-1)} W^{(l-1)}).
\end{equation}
If we consider the spectral decomposition of the message passing operator that is used in Equation \eqnref{eq:GCN_layer}, $\agcn{A} = U\Theta U^T,$ where $U=[u_1, \ldots, u_n]$ denotes the matrix containing the eigenvectors $u_i$ of $\agcn{A}$ and $\Theta$ is a diagonal matrix containing the eigenvalues $\theta_i$ of $\agcn{A}.$ Then, the computation performed in Equation \eqnref{eq:GCN_layer} can be reformulated to be,
\begin{equation} \label{eq:GCN_spectral}
    \text{ReLU} (U\Theta U^T H^{(l-1)} W^{(l-1)}) = \text{ReLU} \left(\sum_{i=1}^n \theta_i u_iu_i^T H^{(l-1)} W^{(l-1)}\right).
\end{equation}
Therefore, performing one message passing step of the hidden states $H$ on a graph given by $\agcn{A},$ i.e., $\agcn{A} H^{(l-1)},$ can be interpreted as a Fourier transform of $H,$ called \textit{graph Fourier transform} \citep{Shuman2013}, where the eigenvectors of $\agcn{A}$ act as a Fourier basis and the eigenvalues of $\agcn{A}$ define the Fourier coefficients. 

When trying to perform a theoretical analysis of the message passing step in Equation~\eqnref{eq:GCN_layer} it often turns out to be more insightful to consider Equation \eqnref{eq:GCN_spectral} instead and analyze the eigenvalues and eigenvectors of the used message passing operator. Such a spectral perspective has 
given rise to a variety of architectures proposing learnable functions applied to the diagonal terms of $\Theta$ \citep{Bruna2014, defferrard2016, Levie2019}. 
Historically, the study of spectral graph theory \citep{Chung1997, Spielman2012}, and in particular the area of graph signal processing \citep{Sandryhaila2014, Ortega2018}, has yielded much insight in the study of graphs and therefore it is somewhat unsurprising that also in the study of the GNNs the spectral analysis of these architectures is a promising avenue of analysis \citep{gama2020,Balcilar2021, Dasoulas2021}. 

We, therefore, now provide spectral results allowing us to gain a better understanding of our proposed message passing operator and compare our proposed message passing operator to the standard message passing operators.  To characterize the eigenvectors of our newly introduced $\agcn{\Ac}$ we rely on the concept of 2-sparse eigenvectors. 

\begin{definition}
\citep{Teke2017}
The entries of \textit{2-sparse eigenvectors} are all equal to $0$ except for the $i^{\mathrm{th}}$ and $j^{\mathrm{th}}$ entry which equal to $1$ and $-1,$ where $i$ and  $j$ denote two nodes which share all their neighbors, i.e., $A_{ih}=A_{jh}$ for $h\in\{1,\ldots,n\}\backslash\{i,j\}.$ 

\end{definition}
An extended discussion of the literature related to such 2-sparse eigenvectors and their corresponding vertices, which are sometimes referred to as twin vertices, can be found in \citet[p.48-9]{Lutzeyer2020}. 
We are now able to characterize the spectrum and eigenvectors of $\agcn{\Ac}.$

\begin{proposition} \label{thm:Acspectrum}
The matrix $\agcn{\Ac}$ has eigenvalues $\{\{1\}^{\ncluster}, \{0\}^{n-\ncluster}\},$ where we denote the multiset containing a given element $x,$ $y$ times, by $\{x\}^y.$  Each non-zero eigenvalue has an associated eigenvector $v_\clustersubscript,$ with $\clustersubscript \in \{1, \ldots, \ncluster\},$  with entries $(v_\clustersubscript)_i=1$ for $i \in C_\clustersubscript$ and $(v_\clustersubscript)_i=0$ for $i \not\in C_\clustersubscript.$ 
The eigenspace corresponding to the zero eigenvalue has dimension $n-\ncluster$ and is spanned by, for example, a set of two-sparse eigenvectors on each of the connected components in the graph.

\end{proposition}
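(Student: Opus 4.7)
The plan is to exploit the fact that the self-loops added by the GCN normalization exactly cancel the $-I_n$ in the definition of $\Ac$, turning $\agcn{\Ac}$ into an orthogonal projection of rank $\ncluster$.

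First I would compute $\Ac + I_n = MM^T - I_n + I_n = MM^T$. Next, since a node $i \in C_\clustersubscript$ is connected in $\Ac$ to exactly the other $n_\clustersubscript - 1$ members of $C_\clustersubscript$, the degree matrix satisfies $D_{\Ac} + I_n = \mathrm{diag}(n_\clustersubscript)$, where the $i$-th diagonal entry is $n_\clustersubscript$ whenever $i \in C_\clustersubscript$. Substituting into the definition,
\begin{equation*}
\agcn{\Ac} = (D_{\Ac}+I_n)^{-1/2} M M^T (D_{\Ac}+I_n)^{-1/2} = \tilde M \tilde M^T,
\end{equation*}
where $\tilde M := (D_{\Ac}+I_n)^{-1/2} M$ is the column-normalized cluster indicator matrix, i.e.\ its $\clustersubscript$-th column is $(1/\sqrt{n_\clustersubscript})\, m_\clustersubscript$.

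The second step is to observe that the columns of $M$ are orthogonal with $\|m_\clustersubscript\|_2^2 = n_\clustersubscript$, so $\tilde M$ has orthonormal columns, i.e.\ $\tilde M^T \tilde M = I_\ncluster$. Hence $\agcn{\Ac} = \tilde M \tilde M^T$ is the orthogonal projection onto the $\ncluster$-dimensional column space of $\tilde M$. This immediately yields the spectrum $\{\{1\}^{\ncluster},\{0\}^{n-\ncluster}\}$, and the eigenspace associated to eigenvalue $1$ is $\mathrm{span}\{m_1,\ldots,m_\ncluster\}$, proving that each $v_\clustersubscript := m_\clustersubscript$ is an eigenvector for eigenvalue $1$ (a direct check gives $\tilde M^T v_\clustersubscript = \sqrt{n_\clustersubscript}\, e_\clustersubscript$, so $\tilde M \tilde M^T v_\clustersubscript = v_\clustersubscript$).

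For the zero eigenspace, I would verify the 2-sparse characterization. For any two distinct nodes $i,j \in C_\clustersubscript$, every other node $h \neq i,j$ satisfies $(\Ac)_{ih} = (\Ac)_{jh}$ (both equal $1$ if $h \in C_\clustersubscript$ and $0$ otherwise), so $i$ and $j$ are twin vertices and $e_i - e_j$ is a valid 2-sparse vector. A direct computation shows $M^T(e_i - e_j) = 0$, and since $(D_{\Ac}+I_n)^{-1/2}$ acts by the same scalar $1/\sqrt{n_\clustersubscript}$ on both coordinates, we obtain $\agcn{\Ac}(e_i - e_j) = 0$. Picking, say, $\{e_{i_1} - e_{i_t} : t = 2, \ldots, n_\clustersubscript\}$ inside each connected component $C_\clustersubscript$ yields $\sum_\clustersubscript (n_\clustersubscript - 1) = n - \ncluster$ linearly independent 2-sparse eigenvectors, matching the dimension of the null space and therefore spanning it.

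I do not anticipate a serious obstacle: the whole argument is driven by the identity $\Ac + I_n = MM^T$, which is the ``lucky'' cancellation that reduces the spectral analysis to recognizing $\agcn{\Ac}$ as a projector. The only point requiring some care is confirming that the 2-sparse vectors really are eigenvectors of the normalized matrix $\agcn{\Ac}$ rather than just of $\Ac$; this is ensured precisely because the normalization constants $(D_{\Ac} + I_n)^{-1/2}$ are constant within each cluster, so they commute with the sign pattern supported on a twin pair.
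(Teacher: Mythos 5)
Your proof is correct, but it takes a genuinely different route from the paper. You exploit the algebraic identity $\Ac + I_n = MM^T$ together with the fact that $(D_{\Ac}+I_n)^{-1/2}M$ has orthonormal columns, so that $\agcn{\Ac}$ is recognized globally as a rank-$\ncluster$ orthogonal projector; the spectrum $\{\{1\}^{\ncluster},\{0\}^{n-\ncluster}\}$, the indicator eigenvectors for eigenvalue $1$, and the $2$-sparse basis of the null space then all fall out of the projector structure in one shot. The paper instead works component by component: each block of $\Ac$ is a complete graph, whose unnormalized Laplacian spectrum is quoted from the literature; regularity of the complete graph gives the affine relation $\agcn{A} = I_n - \frac{1}{n}L$, so a polynomial-transformation lemma transfers eigenvalues and eigenvectors to $\agcn{A}$, and a block-diagonal union-of-spectra lemma assembles the full matrix. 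Your argument is shorter and more self-contained for this particular statement, and it identifies the eigenspaces directly without the component-wise bookkeeping; the paper's route has the advantage that the auxiliary lemmas it invokes (polynomial transformations, unions of spectra over connected components) are reused in the proofs of the subsequent Propositions \ref{thm:spectral_relation_A_Ac_AAc} and \ref{thm:Aospectrum}, where the projector trick no longer applies because $A+\weight\Ac$ and $\Ao$ are not of the form $NN^T - I_n$ for an indicator matrix $N$. Both arguments are complete; the only point you rightly flag, that the normalization constants are constant within each cluster so the $2$-sparse vectors survive the normalization, is handled correctly.
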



The proof of Proposition \ref{thm:Acspectrum} can be found in \ref{app:proof_Acspectrum}. 
The informal take-away from Proposition \ref{thm:Acspectrum} is that \textit{the cluster membership of nodes is encoded clearly and compactly in the spectrum and eigenvectors of $\agcn{\Ac}.$ }
More formally, in Proposition \ref{thm:Acspectrum} we observe that in the spectral domain the operator $\agcn{A_c},$ which we introduce to the encoder's message passing scheme, directly encodes the cluster membership of the different nodes and all other signals are filtered out by the $0$ eigenvalues. Also in the graph domain the matrix $\agcn{\Ac}$ clearly encodes the cluster structure by representing each cluster by a fully connected component of the graph. Therefore, the matrix $\agcn{\Ac}$ is an appropriate choice to introduce cluster information into the message passing scheme and does so clearly in both the graph and spectral domains.

In general, the spectral filtering performed by our message passing operator, $\agcn{A +\weight \Ac},$ and the standard message passing operator, $\agcn{A},$ are different. $\agcn{A +\weight \Ac}$ accounts for the clustering information which we introduce. In the following theorem we provide a result that allows us to establish under which conditions the spectral filtering performed by $\agcn{A}$ and $\agcn{A +\weight \Ac}$ are equal, to gain a better understanding of the action of $\agcn{A +\weight \Ac}.$

\begin{proposition}\label{thm:spectral_relation_A_Ac_AAc}
If $\mathcal{G}$ is composed of regular connected components, i.e., connected components containing only vertices of equal degree, and the partition of the node set defining these regular components equals the partition defining $\Ac,$ then the matrices $\agcn{\Ac}$ and $\agcn{A+\weight \Ac}$ have a shared set of eigenvectors and the spectrum of $\agcn{A+\weight \Ac},$ denoted by $\mathcal{S}(\agcn{A+\weight \Ac}),$ can be expressed in terms of the eigenvalues of $\agcn{A}$ and $\agcn{\Ac},$ denoted by $\theta$ and $\eta$, respectively,  as follows, 
$$
\mathcal{S}(\agcn{A+\weight \Ac}) = \{g_1(\theta_1) + g_2(\eta_{s(1)}), \ldots, g_1(\theta_n) + g_2(\eta_{s(n)})\},
$$
for affine functions $g_1, g_2$ parametrised by the node degrees and some permutation $s(\cdot)$ defined on the set $\{1, \ldots, n\}.$
\end{proposition}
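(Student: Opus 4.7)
The plan is to exploit the block-diagonal structure that the regularity hypothesis imposes on all three matrices, and then reduce the problem to a block-wise commutation analysis. Under the stated assumption, the connected components of $\mathcal{G}$ coincide with the communities $C_1, \ldots, C_\ncluster$ defining $\Ac$, so after reordering the vertices according to this partition, $A$ is block-diagonal with blocks $A^{(\clustersubscript)}$, where each $A^{(\clustersubscript)}$ is a $d_\clustersubscript$-regular graph on $n_\clustersubscript$ vertices. By construction $\Ac$ is block-diagonal on the same partition with blocks $\Ac^{(\clustersubscript)} = J_{n_\clustersubscript} - I_{n_\clustersubscript}$, where $J_{n_\clustersubscript}$ denotes the all-ones matrix. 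Consequently $A + \weight \Ac$ and the associated degree matrices inherit this block structure, and inside block $\clustersubscript$ every vertex has the same degree ($d_\clustersubscript$, $n_\clustersubscript - 1$, and $d_\clustersubscript + \weight(n_\clustersubscript-1)$, respectively), so each normalisation $(D+I)^{-1/2}$ reduces to a scalar multiple of the identity within each block.

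The next step is to establish the shared eigenbasis block by block. Because $A^{(\clustersubscript)}$ is $d_\clustersubscript$-regular, the all-ones vector $\mathbf{1}_{n_\clustersubscript}$ is an eigenvector of $A^{(\clustersubscript)}$, which immediately yields $A^{(\clustersubscript)} J_{n_\clustersubscript} = d_\clustersubscript J_{n_\clustersubscript} = J_{n_\clustersubscript} A^{(\clustersubscript)}$ and hence $A^{(\clustersubscript)}$ commutes with $\Ac^{(\clustersubscript)} = J_{n_\clustersubscript} - I_{n_\clustersubscript}$. Since commuting real symmetric matrices are simultaneously diagonalisable, the three matrices $A^{(\clustersubscript)}$, $\Ac^{(\clustersubscript)}$ and $A^{(\clustersubscript)} + \weight \Ac^{(\clustersubscript)}$ share an orthonormal eigenbasis in each block. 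Multiplication by the block-constant scalars from the normalisation preserves eigenvectors, so $\agcn{A}$, $\agcn{\Ac}$ and $\agcn{A+\weight\Ac}$ all share a common orthonormal eigenbasis, which establishes the first half of the statement.

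For the spectral identity, I would then compute the eigenvalues on this common basis. Within block $\clustersubscript$, the normalised all-ones direction contributes eigenvalue $1$ to each of the three matrices (since $\mathbf{1}_{n_\clustersubscript}$ is an eigenvector of $A^{(\clustersubscript)}$ with eigenvalue $d_\clustersubscript$ and of $\Ac^{(\clustersubscript)}$ with eigenvalue $n_\clustersubscript - 1$); the remaining $n_\clustersubscript - 1$ eigenvectors are orthogonal to $\mathbf{1}_{n_\clustersubscript}$, so they are annihilated by $J_{n_\clustersubscript}$, giving eigenvalue $0$ for $\agcn{\Ac}$ and turning an eigenvalue $\theta$ of $A^{(\clustersubscript)}$ into $(\theta+1)/(d_\clustersubscript+1)$ for $\agcn{A}$ and $(\theta - \weight + 1)/(d_\clustersubscript + \weight(n_\clustersubscript-1) + 1)$ for $\agcn{A+\weight\Ac}$. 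Inverting the first of these relations to write $\theta$ in terms of the eigenvalue of $\agcn{A}$ exhibits the eigenvalue of $\agcn{A+\weight\Ac}$ as an affine function $g_1$ of that of $\agcn{A}$, with a compensating affine function $g_2$ of the eigenvalue of $\agcn{\Ac}$ used to absorb the constant and to match the all-ones case where $\eta = 1$. The permutation $s(\cdot)$ simply records that, once we fix one ordering of the common eigenbasis, the corresponding eigenvalues of $\agcn{\Ac}$ may appear in a different order from those of $\agcn{A}$.

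The main obstacle I anticipate is purely bookkeeping: writing $g_1$ and $g_2$ in a way that uniformly handles both the ``cluster-indicator'' eigenvectors (for which $\eta = 1$) and the orthogonal-to-$\mathbf{1}_{n_\clustersubscript}$ eigenvectors (for which $\eta = 0$), while the coefficients of $g_1, g_2$ depend on the cluster-specific parameters $d_\clustersubscript$, $n_\clustersubscript$, and $\weight$. Once the block-diagonal commutation is in place this is a direct verification, so the substantive content of the proof lies entirely in the regularity-induced block decomposition and the commutation of $A^{(\clustersubscript)}$ with $J_{n_\clustersubscript} - I_{n_\clustersubscript}$.
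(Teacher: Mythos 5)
Your argument is correct and follows essentially the same route as the paper's proof: reduce to a single connected component, observe that regularity makes each normalisation $(D+I_n)^{-1/2}$ a scalar on that block, establish that the block of $A$ commutes with the corresponding all-ones block of $\Ac+I_n$ (you verify $A^{(\clustersubscript)}J = d_\clustersubscript J = JA^{(\clustersubscript)}$ directly where the paper cites Godsil), and conclude via simultaneous diagonalisability of commuting symmetric matrices. Your explicit two-case eigenvalue computation (the $\mathbf{1}_{n_\clustersubscript}$ direction versus its orthogonal complement) is a slightly more concrete substitute for the paper's appeal to the permutation-sum spectral identity for simultaneously diagonalisable matrices, but the substance is the same.
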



The proof of Proposition \ref{thm:spectral_relation_A_Ac_AAc} can be found in \ref{app:proof_spectral_relation_A_Ac_AAc}. 
Hence, we observe that for graphs consisting of regular connected components the spectral filtering performed by our proposed message passing operator $\agcn{A+\weight \Ac}$ is equal to that of the standard operator $\agcn{A}.$ For graphs consisting of regular connected components the clustering information is already contained in the spectrum of $\agcn{A}$ and therefore its further addition does not affect the eigenvectors of our proposed message passing operator. 

Note that, in general, the spectrum of the sum of two matrices cannot be characterized by the individual spectra of the two matrices, meaning that, in general, there does not exist an exact relation between the spectra of $\agcn{A}$ and $\agcn{\Ac}$ to $\agcn{A+\lambda\Ac}.$ We can however, make direct use of existing results such as Weyl's inequality \cite{Weyl1912,Kolotilina2005} and the extended Davis--Kahan theorem \cite{Lutzeyer2019}, which, respectively, upper bound the distance of the eigenvalues and spaces spanned by the eigenvectors of the sum of matrices and the individual matrices. 


\paragraph{\textbf{$\dregc$-regular sparsification of our message passing operator}}

We now turn to the analysis of our sparsified message passing operator  
$\Ao,$ which, as we will see now, still contains the external cluster information without incurring the large computation cost implied by the use of $\Ac.$ 

\begin{proposition} \label{thm:Aospectrum}
If the partition defining the connected components of $\Ao$ is a refinement of the partition defining the components of $\Ac$,
then the multiplicity of the largest eigenvalue of $\agcn{\Ao}$ is greater or equal to the multiplicity of the largest eigenvalue of $\agcn{\Ac}.$ Further, the largest eigenvalue of both $\agcn{\Ao}$ and $\agcn{\Ac}$ equals 1 and the eigenvectors corresponding to the eigenvalue 1 of $\agcn{\Ac}$ are also eigenvectors corresponding to the eigenvalue 1 of $\agcn{\Ao}.$

\end{proposition}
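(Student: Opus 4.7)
The plan is to exploit that $\Ao$ is $\dregc$-regular within each of its connected components together with the standard spectral theory of normalized adjacency matrices. First, I would note that $\agcn{\Ao} = (D_{\Ao} + I_n)^{-1/2}(\Ao + I_n)(D_{\Ao} + I_n)^{-1/2}$ is related by the diagonal similarity $P = (D_{\Ao} + I_n)^{1/2}$ to the row-stochastic matrix $P_{\text{rw}} := (D_{\Ao} + I_n)^{-1}(\Ao + I_n)$, so the two matrices share the same spectrum. By Perron--Frobenius applied to $P_{\text{rw}}$ (whose underlying graph has the same connectivity as $\Ao$, since adding self-loops does not change the connected components), the largest eigenvalue equals $1$ and its multiplicity equals the number of connected components of $\Ao$. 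The same argument applied to $\Ac$ recovers the multiplicity $\ncluster$ already exhibited in Proposition~\ref{thm:Acspectrum}.

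Second, I would produce explicit eigenvectors at eigenvalue $1$. Let $D \subseteq \mathcal{V}$ be any connected component of $\Ao$. Because $\Ao$ is $\dregc$-regular and $D$ is a maximal connected piece, every node $i \in D$ has all of its $\dregc$ neighbors inside $D$. Hence, for $i \in D$, the $i$-th row of $\Ao + I_n$ restricted to the columns indexed by $D$ sums to $\dregc + 1 = (D_{\Ao} + I_n)_{ii}$, while entries outside $D$ vanish. A direct computation then gives
\begin{equation*}
\bigl(\agcn{\Ao}\mathbf{1}_D\bigr)_i \;=\; \frac{1}{\sqrt{\dregc+1}} \sum_{j \in D} \frac{(\Ao + I_n)_{ij}}{\sqrt{\dregc+1}} \;=\; \mathbf{1}_D(i),
\end{equation*}
so $\mathbf{1}_D$ is an eigenvector of $\agcn{\Ao}$ at eigenvalue $1$.

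Third, I would invoke the refinement hypothesis to relate the two eigenspaces. Since the partition given by the connected components of $\Ao$ refines $\{C_1, \ldots, C_\ncluster\}$, each cluster is a disjoint union $C_\clustersubscript = \bigsqcup_{D \subseteq C_\clustersubscript} D$ of components of $\Ao$. Therefore, the eigenvector $v_\clustersubscript = \mathbf{1}_{C_\clustersubscript}$ of $\agcn{\Ac}$ at eigenvalue $1$ (from Proposition~\ref{thm:Acspectrum}) decomposes as $v_\clustersubscript = \sum_{D \subseteq C_\clustersubscript} \mathbf{1}_D$, which lies in the eigenspace of $\agcn{\Ao}$ at eigenvalue $1$ by the previous step. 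The multiplicity inequality is then immediate: the number of connected components of $\Ao$ is at least $\ncluster$ under the refinement assumption, so the $1$-eigenspace of $\agcn{\Ao}$ has dimension at least that of $\agcn{\Ac}$.

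The main obstacle I expect is the bookkeeping around self-loops and the symmetric normalization: one must verify carefully that within each connected component $D$ the row sums of $\agcn{\Ao}$ really do equal $1$, which hinges on the fact that no node in $D$ has any of its $\dregc$ neighbors outside of $D$. This in turn uses both the $\dregc$-regularity of $\Ao$ and the refinement hypothesis. Beyond this observation, the rest of the argument is a routine application of Perron--Frobenius and the linearity of eigenspaces; no delicate spectral perturbation machinery such as Weyl or Davis--Kahan is needed here, in contrast to the analysis of $\agcn{A + \weight \Ao}$.
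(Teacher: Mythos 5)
Your proof is correct and follows essentially the same route as the paper's: both arguments rest on the $\dregc$-regularity of each connected component of $\Ao$, which makes the constant vector on that component an eigenvector of $\agcn{\Ao}$ at eigenvalue $1$, and then use the block-diagonal structure together with the refinement hypothesis to write each cluster indicator $\mathbf{1}_{C_\clustersubscript}$ as a sum of component indicators and deduce the multiplicity inequality. The only difference is in the supporting machinery — you justify that $1$ is the largest eigenvalue via similarity to the row-stochastic matrix and Perron--Frobenius, whereas the paper cites the classical fact that a $\dregc$-regular adjacency matrix has top eigenvalue $\dregc$ and transfers it through the polynomial relation $\agcn{\Acomp}=\tfrac{1}{\dregc+1}(\Acomp+I)$ — and you spell out the refinement and multiplicity bookkeeping more explicitly than the paper's closing sentence does.
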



The proof of Proposition \ref{thm:Aospectrum} can be found in \ref{app:proof_Aospectrum}. 
Informally, Proposition \ref{thm:Aospectrum} can be interpreted to show that \textit{the sparsification of $\Ac$ producing $\Ao$ does not impact the ``informative'' part of the spectrum. } 
Recall, that the eigenvectors corresponding to the largest eigenvalue of $\agcn{\Ac}$ and $\agcn{\Ao}$ are indicator vectors of our introduced cluster membership. Since the remaining eigenvectors are orthogonal to these indicator vectors we know that none of them encode our cluster membership as compactly as the eigenvectors corresponding to the largest eigenvalue. 

For $\agcn{\Ao}$ the eigenvalues corresponding to the less informative eigenvectors correspond to nonzero eigenvalues in general and we expect the choice of $\dregc$ to influence the impact of this uninformative part of the spectrum. This uninformative part of the spectrum can be upper bounded by adapting the bound in \citet{Friedman2003} to our message passing operator. However, in our work we choose a more practice-oriented approach by treating $\dregc$ as a hyperparameter of our model and find its optimal values using the procedure which is described in the upcoming Section 
\ref{s33}.

\subsection{\textbf{Decoding and Training Strategies}}
\label{s33}

So far, our work mainly considered improvements of the encoder's \textit{architecture}. While this aspect is crucial, we also argue that previously proposed models were often \textit{optimized} in a fashion that, by design, favors link prediction over community detection. With this in mind, this Section~\ref{s33} now complements our contributions from Section~\ref{s32} with revised training and optimization strategies.

\subsubsection{Modularity-Inspired Losses for GAE and VGAE}
\label{s331}

As explained in Section~\ref{s2}, neural weight matrices of previous GAE and VGAE encoders were tuned by optimizing \textit{reconstruction losses}, capturing the similarity between the decoded graph and the original one. Usually, these losses directly evaluate the quality of reconstructed node pairs $\hat{A}_{ij}$ w.r.t. their ground-truth counterpart $A_{ij}$. This includes the cross-entropy loss $\lossGAE$ from Equation~\eqref{lossGAE} and the ELBO loss $\lossVGAE$ from Equation~\eqref{elbo}. We argue that this optimization strategy also contributes to explaining the underwhelming performance of some GAE and VGAE models on community detection~tasks.
\begin{itemize}
    \item By design, existing optimization strategies favor good performances on link prediction tasks, that precisely consist in accurately reconstructing connected/unconnected node pairs. However, some recent studies emphasized that a good reconstruction of \textit{local} pairwise connections does not always imply a good reconstruction of the \textit{global} community structure from the graph under consideration \cite{wang2017community,liu2019much}. This motivates the need for a revised loss function capturing some global community information.
    \item Besides, GAE/VGAE-based community detection experiments often consisted in running $k$-means algorithms in the final node embedding space (and, as stated at the beginning of Section~\ref{s32}, we also adopt this strategy). However, this results in clustering embedding vectors based on their $L_2$ \textit{distances} $\Vert z_i - z_j \Vert_2 $, whereas the aforementioned reconstruction losses instead often involve \textit{inner products} ($\hat{A}_{ij} = \sigma(z^T_i z_j)$). There is thus a discrepancy between the criterion ultimately used for $k$-means clustering, and the one used during training to assess node similarities.
\end{itemize}
To address these issues, we propose to complement standard GAE and VGAE losses with an additional loss term, involving $L_2$ distances and inspired by the \textit{modularity}\footnote{We emphasize that this new term does \textit{not} involve the prior Louvain clusters used in $\Ac$.} in Equation~\eqref{eq:modularity}. In the case of the GAE, we will iteratively minimize by gradient descent:
\begin{equation}
 \lossMAGAE = \lossGAE - \frac{\beta}{2m} \sum_{i,j=1}^n \left[A_{ij} - \frac{d_id_j}{2m}\right]  e^{-\gamma \Vert z_i - z_j \Vert^2_2},
 \label{eq:tildeAE}
 \end{equation}
with hyperparameters $\beta \geq 0$, $\gamma \geq 0.$ Also, for VGAE we will maximize\footnote{We recall that $\lossGAE$ is \textit{minimized} while $\lossVGAE$ is \textit{maximized}, hence the occurrence of a \textit{minus} term in Equation \eqref{eq:tildeAE} but a \textit{plus} term in Equation \eqref{eq:tildeVAE}.}:
\begin{equation}
\lossMAVGAE = \lossVGAE + \frac{\beta}{2m} \sum_{i,j=1}^n \left[A_{ij} - \frac{d_id_j}{2m}\right] e^{-\gamma \Vert z_i - z_j \Vert^2_2}. \label{eq:tildeVAE}
\end{equation}

We note that $\lossMAGAE$ and $\lossMAVGAE$ are independent of the ground-truth community information. Consistently with previous efforts on community detection using GAE and VGAE (see Section~\ref{sec:comm_detection}), we only use ground-truth communities  for the final \textit{evaluation} of embedding vectors -- not to learn these vectors. 

In Equations \eqref{eq:tildeAE} and \eqnref{eq:tildeVAE}, the exponential term (taking values in $[0,1]$) acts as a \textit{soft counterpart of the common community indicator} $\delta(i,j) \in \{0,1\}$ in Equation~\eqref{eq:modularity}. It tends to 1 when nodes $i$ and $j$ get closer in the embedding space, and tends to 0 when they move apart. 

In essence, we expect the addition of such a \textit{global regularizer} to $\lossGAE$ and $\lossVGAE$ to encourage closer embedding vectors (in the $L_2$ distance) of densely connected parts of the original graph, and therefore to permit a  $k$-means-based \textit{detection of communities with higher modularity values}. On the other hand, the remaining presence of the original $\lossGAE$ or $\lossVGAE$ term\footnote{Our experiments will consider the original $\lossGAE$ or $\lossVGAE$ from Equations \eqref{lossGAE} and \eqref{elbo} and originally formulated by Kipf and Welling \cite{kipf2016-2}. Nonetheless, one can observe that our modularity-inspired global regularizer term could be optimized \textit{in conjunction with other reconstruction losses.} For instance, modularity-inspired terms could be added to the variant formulation of ELBO loss from Choong et al. \cite{choong2018learning}, that incorporates \textit{Gaussian mixtures} in the Kullback-Leibler divergence.} in the loss aims to \textit{preserve good performances on link prediction}. The hyperparameter $\beta$ balances the relative importance of the modularity regularizer w.r.t. the pairwise node pairs reconstruction loss, while the hyperparameter $\gamma$ regulates the magnitude of $\Vert z_i - z_j \Vert^2_2$ in the exponential term. Our experiments will show that proper tuning of $\beta$ and $\gamma$ permits us to improve community detection while jointly preserving performances on link prediction.

The use of a modularity-inspired regularizer in the loss of the \textit{Modularity-Aware GAE and VGAE} builds upon several studies, which were not studying the GAE/VGAE frameworks but emphasized the benefits of various modularity-inspired losses for learning community-preserving node embedding representations \cite{wang2017community,yang2016modularity,lobov2019unsupervised}. In our setting, we favor the use of a \textit{soft} modularity instead of the term in Equation~\eqref{eq:modularity}, as it permits 1) to obtain a differentiable loss, and 2) to avoid the actual reconstruction of node communities at each training iteration, which would incur a larger computational expense.

To conclude we note that, as for a complete evaluation of $\lossGAE$ and $\lossVGAE$, computing the modularity-inspired terms in Equations~\eqref{eq:tildeAE}~and~\eqref{eq:tildeVAE} \textit{on the entire graph} would be of quadratic complexity w.r.t. the number of nodes in the graph. In some of our experiments where such a complexity would be unaffordable (roughly, when $n \geq$ 50 000), we will rely on the FastGAE method \cite{salha2021fastgae} to approximate modularity-inspired terms. This method was already mentioned in Section~\ref{s213}, to approximate $\lossGAE$ and $\lossVGAE$ losses on large graphs by computing them only on strategically-selected random sampled \textit{subgraphs} of $O(n)$ size, drawn at each training iteration. We will subsequently approximate modularity-inspired terms on the same subgraphs. This ensures a linear complexity for each evaluation of our proposed $\lossMAGAE$ and $\lossMAVGAE$ losses in Equations~\eqref{eq:tildeAE}~and~\eqref{eq:tildeVAE}, respectively. As our revised encoders from Section~\ref{s32} also exhibit a linear complexity w.r.t. $n,$ our whole \textit{Modularity-Aware GAE and VGAE} are scalable to graphs with hundreds of thousands to millions of nodes.

%
%
%




\subsubsection{On the Selection of Hyperparameters}
\label{s332}

We expect our modularity-inspired losses to improve the training of our linear/GCN encoders for community detection, i.e., the tuning of their \textit{weight matrices}. However, in addition to these weight matrices, our \textit{Modularity-Aware GAE and VGAE} involve several other hyperparameters, that also play a key role. This includes the standard hyperparameters of GAE and VGAE models (e.g., the number of training iterations, the learning rate, the dimensions of encoding layers and, potentially, the dropout rate \cite{srivastava2014dropout}), but also our newly introduced hyperparameters:  $\lambda$ and $\dregc$ from our encoders, as well as $\beta$ and $\gamma$ from our losses.

In previous research, the selection procedure for such important hyperparameters was sometimes solely based on the optimization of AUC or AP scores on link prediction \textit{validation} sets \cite{salha2019-1,salha2021fastgae}, following the train/validation/test splitting procedure initially adopted by Kipf and Welling \cite{kipf2016-2} and previously described in Section~\ref{s231}. However, intuitively, the best hyperparameters for community detection might differ from the best ones for link prediction. Such a selection procedure might therefore be suboptimal for community detection problems.

To tackle this issue, and to complement our novel encoders (Section~\ref{s32}) and losses (Section~\ref{s331}), we propose an alternative hyperparameter selection procedure w.r.t. previous practices. As community detection is an \textit{unsupervised} downstream task, we cannot rely on train/validation/test splits as for the \textit{supervised} link prediction binary classification task\footnote{We recall that the ground-truth communities of each node will be \textit{unavailable} during training. They will only be ultimately revealed for model evaluation, to compare the agreement of the node partition proposed by our GAE or VGAE model to the ground-truth partition.}. Consistent with our already described contributions, we rather propose to rely on \textit{modularity} scores, as it is an unsupervised criterion computed independently of the unobserved ground-truth clusters.
More precisely, to select relevant hyperparameters, we will:
\begin{itemize}
        \item firstly, construct link prediction train/validation/test sets, as in Section~\ref{s231};
        \item then, select hyperparameters that maximize the average of:
        \begin{itemize}
            \item the \textit{AUC score} obtained for link prediction on the validation set; 
            \item the \textit{modularity score} $Q$ defined in Equation~\eqref{eq:modularity}. This score is obtained from the communities extracted by running a $k$-means on the final vectors $z_i,$ learned from the train graph (all nodes are visible but some edges - the validation and test ones - are masked).
        \end{itemize}
    \end{itemize} 
We expect this dual criterion 
to facilitate the identification of hyperparameters that will be jointly relevant for link prediction and community detection downstream applications.

\section{Experimental Evaluation}
\label{s4}

We now present an in-depth experimental evaluation of our proposed \textit{Modularity-Aware GAE and VGAE} models together with relevant baselines. In Section~\ref{s41} we first describe our experimental setting. Then in Section~\ref{s42}, we report and discuss our results. 

\subsection{\textbf{Experimental Setting}}
\label{s41}

\subsubsection{Datasets}
\label{s411}

In the following, we provide an experimental evaluation on seven graphs of various origins, characteristics, and sizes. Their statistics are summarized in Table~\ref{tab:datasetstats}. 

\begin{table}[t]
\centering
\caption{Statistics of graph datasets}
\vspace{0.2cm}
\label{tab:datasetstats}
   \resizebox{0.85\textwidth}{!}{
\begin{tabular}{c|c|c|c}
\toprule
\textbf{Dataset} & \textbf{Number of nodes} & \textbf{Number of edges} & \textbf{Number of communities} \\
\midrule
\midrule
\textbf{Blogs} & 1 224  & 19 025 & 2  \\
\textbf{Cora} & 2 708 & 5 429 & 6 \\
\textbf{Citeseer} & 3 327 & 4 732 & 7 \\
\textbf{Pubmed} & 19 717 & 44 338 & 3 \\
\textbf{Cora-Large} & 23 166  & 91 500 & 70 \\
\textbf{SBM} & 100 000 & 1 498 844 & 100 \\
\textbf{Deezer-Album} &  2 503 985  & 25 039 155 & 20 \\
\bottomrule
\end{tabular}}

\end{table}

First and foremost, we consider the \textit{Cora, Citeseer and Pubmed citation networks} \cite{kipf2016-1}. We study two versions of each of these datasets, \textit{with} and \textit{without} node features that correspond to bag-of-words vectors of dimensions $f =$ 1433, 3703 and 500, respectively. In these datasets, nodes are clustered in 6, 7
and 3 topic classes, respectively, that will act as the communities to be detected in our experiments. These three citations networks are by far the most commonly used graph datasets to evaluate GAE and VGAE models \cite{wang2017mgae,kipf2016-2,choong2018learning,tran2018multi,salha2019-1,huang2019rwr,grover2019graphite,semiimplicit2019,aaai20,choong2020optimizing,pei2021generalization,li2020dirichlet}. We therefore see value in studying them as well, especially in their \textit{featureless} version where, as explained in Section~\ref{sec:limits}, previous GAE and VGAE extensions fall short on community detection.

We complete our experimental evaluation with four other datasets. Firstly, we consider the ten times larger version of Cora used in \cite{salha2020simple} for community detection, and referred to as \textit{Cora-Large} in the following. Nodes are documents clustered in 70 topic-related communities. Additionally, we consider the \textit{Blogs web graph} also used in \cite{salha2020simple}, where nodes correspond to webpages of political blogs connected through hyperlinks. The blogs are clustered in two communities corresponding to politically left-leaning or right-leaning blogs.
Thirdly, as in Salha et al. \cite{salha2021fastgae}, we examine a graph generated from a \textit{stochastic block model}, which is a generative model for community-based random graphs \cite{abbe2017community}. We follow the parameterization of Salha et al. \cite{salha2021fastgae} and generate this graph, which we refer to as \textit{SBM}, as follows. Nodes are clustered in 100 ground-truth communities of 1000 nodes each. Nodes from the same community are connected with probability $p = 2 \times 10^{-2}$, while nodes from different
communities are connected with probability $q = 2 \times 10^{-4} < p$. Albeit being synthetic, this graph includes actual node communities by design, and is, therefore, relevant to evaluate community~detection~methods.

Lastly, we consider an industrial-scale private graph provided by the global music streaming service Deezer\footnote{\href{https://www.deezer.com/}{https://www.deezer.com/} (accessed October 13, 2021).}. Graph-based methods are at the core of Deezer's recommender systems \cite{salha2021cold}. In the graph under consideration in this study, nodes correspond to 2.5~million \textit{music albums} available on the service. They are connected through an undirected edge when they are regularly \textit{co-listened} by Deezer users (as assessed by internal usage metrics computed from millions of users,  but undisclosed in this work for privacy reasons). Deezer is jointly interested in 1) predicting new connections in the graph corresponding to new albums pairs that users would enjoy listening to together, which is achieved by performing the \textit{link prediction} task; and 2) learning groups of similar albums, with the aim of providing usage-based recommendations (i.e., if users listen to several albums from a community, other unlistened albums from this same community could be recommended to them), which is achieved by performing the \textit{community detection} task. In such an industrial application, learning high-quality album representations that would jointly enable effective link prediction and community detection would therefore be desirable. For evaluation, node communities will be compared to a ground-truth clustering of albums in 20 groups defined by their main \textit{music genre}, allowing us to assess the musical homogeneity of the node~communities~proposed~by~each~model.

\subsubsection{Tasks}
\label{s412}

For each of these seven graphs, we assess the performance of our models on two downstream tasks.
\begin{itemize}
    \item \textbf{Task 1:} We first consider a pure \textit{community detection} task, consisting in the extraction of a partition of the node set $\mathcal{V}$ which ideally agrees with the ground-truth communities of each graph. Communities will be retrieved by running the $k$-means algorithm (with $k$-means++ initialization \cite{arthur2017kmeans}) in the final embedding space of each model to cluster the vectors $z_i$ (with $k$ matching the known number of communities from Table~\ref{tab:datasetstats}); except for some baseline methods that explicitly incorporate another strategy to partition nodes (see Section \ref{s413}). We compare the obtained partitions to the ground-truth using the popular \textit{Adjusted Mutual Information
(AMI)} and \textit{Adjusted Rand Index (ARI)} scores\footnote{\label{footnote:sklearn} Scores are computed via scikit-learn \cite{pedregosa2011scikit}, using formulas provided here: \href{https://scikit-learn.org/stable/modules/classes.html?highlight=metrics\#module-sklearn.metrics}{https://scikit-learn.org/stable/modules/classes.html?highlight=metrics\#module-sklearn.metrics} (accessed October 13, 2021).} for clustering~evaluation. 

\item \textbf{Task 2:} We also consider a \textit{joint link prediction and community detection} task. In such a setting, we learn all node embedding spaces from \textit{incomplete} versions of the seven graphs, where 15\% of edges were randomly masked. We create a validation and
a test set from these masked edges (from 5\% and 10\% of edges, respectively, as in Kipf and Welling~\cite{kipf2016-2}) and the same number of randomly picked unconnected node pairs acting as ``non-edge'' negative pairs. Then, using decoder predictions $\hat{A}_{
ij}$ computed from vectors $z_i$ and $z_j,$ we evaluate each model's ability to distinguish edges from non-edges, i.e., \textit{link prediction}, from the embedding space, using the \textit{Area
Under the ROC Curve (AUC)} and \textit{Average Precision (AP)} scores\footref{footnote:sklearn} on the test sets. Jointly, we also evaluate the community detection performance obtained from such incomplete graphs, using the same methodology and AMI/ARI scores as in Task~1.
\end{itemize}

In the case of Task~2, we expect AMI and ARI scores to slightly decrease w.r.t. Task~1, as models will only observe \textit{incomplete} versions of the graphs when learning embedding spaces. Task~2 will further assess whether empirically improving community detection inevitably leads to deteriorating the original good performances of GAE and VGAE models on link prediction. As our proposed Modularity-Inspired GAE and VGAE are designed for \textit{joint link prediction and community detection}, we expect them to 1) reach comparable (or, ideally, identical) AUC/AP link prediction scores w.r.t. standard GAE and VGAE, while 2) reaching better community~detection~scores.
 
\subsubsection{Details on Models}
\label{s413}

For the aforementioned evaluation tasks and graphs, we will compare the performances of our proposed \textit{Modularity-Aware GAE and VGAE} models to standard GAE and VGAE and to several other baselines. All results reported below will verify $d = 16,$ i.e., all node embedding models will learn embedding vectors $z_i$ of dimension 16. We also tested models with $d \in \{32, 64\}$ by including them in our grid search space and reached similar conclusions to the $d = 16$ setting (we report and further discuss the impact of $d$ in Section~\ref{s42}. Note, the dimension $d$ is a selectable parameter in our public implementation, permitting direct model training on any node embedding dimension).

\paragraph{\textbf{Modularity-Aware GAE and VGAE}} We trained two versions of our Modularity-Aware GAE and VGAE: one with the \textit{linear encoder} described in Section~\ref{s322}, and one with the \textit{2-layer GCN encoder} ($\text{GCN}^{(2)}$). The latter encoder includes a 32-dimensional hidden layer. We recall that link prediction is performed from inner product decoding $\hat{A}_{ij} = \sigma(z^T_i z_j)$, and that community detection is performed via a $k$-means on the final vectors $z_i$ learned by each model. 

During training, we used the Adam optimizer~\cite{kingma2014adam}, without dropout (but we tested models with dropout values in $\{0,0.1,0.2\}$ in our grid search optimization). All hyperparameters were carefully tuned following the procedure described in Section~\ref{s332}. For each graph, we tested learning rates from the grid $\{0.001,0.005,0.01,0.05,0.1,0.2\}$, number of training iterations in $\{100, 200, 300, ..., 800\}$, with $\lambda \in \{0, 0.01,0.05, 0.1, 0.2, 0.3, ..., 1.0\}$,  $\beta \in \{0, 0.01,0.05, 0.1, 0.25, 0.5, 1.0, 1.5, 2.0\}$, $\gamma \in \{0.1, 0.2, 0.5, 1.0, 2, 5, 10\}$ and $\dregc \in \{1, 2, 5, 10\}$. The best hyperparameters for each graph are reported in Table~\ref{tab:hyperparameterstable}. We adopted the same optimal hyperparameters for GAE \textit{and} VGAE variants (a result which is consistent with the literature \cite{kipf2016-2}). Lastly, as exact loss computation was computationally unaffordable for our two largest graphs, SBM and Deezer-Album, their corresponding models were trained by using the FastGAE method \cite{salha2021fastgae}, approximating losses by reconstructing degree-based sampled subgraphs of $n =$ 10 000 nodes (a different one at each training iteration).

\begin{table}[t]
\centering
\caption{Complete list of optimal hyperparameters of Modularity-Aware GAE and VGAE models}
 \vspace{0.2cm}
    \label{tab:hyperparameterstable}
   \resizebox{\textwidth}{!}{
\begin{tabular}{c|cccccccc}
\toprule
\textbf{Dataset} & \textbf{Learning} & \textbf{Number of} & \textbf{Dropout} & \textbf{Use of FastGAE \cite{salha2021fastgae}} & $\lambda$ & $\beta$ & $\gamma$ & $\dregc$ \\
& \textbf{rate} & \textbf{iterations} & \textbf{rate} & \textbf{(if yes: subgraphs size)} & & & \\
\midrule
\midrule
\textbf{Blogs} & 0.01 & 200 & 0.0 & No & 0.5 & 0.75 & 2 & 10\\
\textbf{Cora (featureless)}  & 0.01 & 500 & 0.0 & No & 0.25 & 1.0 & 0.25 & 1\\
\textbf{Cora (with features)}  & 0.01 & 300 & 0.0 & No & 0.001 & 0.01 & 1 & 1\\
\textbf{Citeseer (featureless)}  & 0.01 & 500 & 0.0 & No & 0.75 & 0.5 & 0.5 & 2\\
\textbf{Citeseer (with features)}  & 0.01 & 500 & 0.0 & No & 0.75 & 0.5 & 0.5 & 2\\
\textbf{Pubmed (featureless)}  & 0.01 & 500 & 0.0 & No & 0.1 & 0.5 & 0.1 & 5\\ 
\textbf{Pubmed (with features)}  & 0.01 & 700 & 0.0 & No & 0.1 & 0.5 & 10 & 2\\
\textbf{Cora-Large}  & 0.01 & 500 & 0.0 & No & 0.001 & 0.1 & 0.1 & 10 \\
\textbf{SBM}  & 0.01 & 300 & 0.0 & Yes (10 000) & 0.5 & 0.1 & 2 & 10 \\
\textbf{Deezer-Album} & 0.005 & 600 & 0.0 & Yes (10 000) & 0.25 & 0.25 & 1 & 5\\
\bottomrule
\end{tabular}}
\end{table}

We used Tensorflow~\cite{abadi2016tensorflow}, training our models (as well as GAE/VGAE baselines described below) on an NVIDIA GTX 1080 GPU, and running other operations on a double Intel Xeon Gold 6134 CPU\footnote{On our machines, running times of the Modularity-Aware GAE and VGAE models were comparable to running times of their standard GAE and VGAE counterparts. For example, training each variant of VGAE on the Pubmed graph for 500 training iterations and with $\dregc = 5$ approximately takes 25 minutes on a single GPU (without the FastGAE method, which significantly speeds up training \cite{salha2021fastgae}). This is consistent with our claims on the comparable complexity of Modularity-Aware and standard models.}. Along with this paper, we will publicly release our source code on GitHub for reproducibility and to encourage future usage of our method\footnote{
\href{https://github.com/GuillaumeSalhaGalvan/modularity_aware_gae}{https://github.com/GuillaumeSalhaGalvan/modularity\_aware\_gae}}.

\paragraph{\textbf{Standard GAE and VGAE}} We compare the above Modularity-Aware GAE and VGAE to two variants of the standard GAE and VGAE: one with 2-layer GCN encoders with 32-dimensional hidden layer (which is equal to the seminal GAE and VGAE from Kipf and Welling~\cite{kipf2016-2}) and one with a linear encoder (which equals the linear GAE and VGAE from Salha et al.~\cite{salha2020simple}). We note that these are particular cases of our Modularity-Aware GAE/VGAE with GCN or linear encoder and with $\lambda = 0$ and $\beta = 0$. 

As for our Modularity-Aware models, link prediction is performed from inner product decoding, and community detection via a $k$-means on vectors $z_i.$ We also adopt a similar model selection procedure as for our Modularity-Aware GAE and VGAE to select hyperparameters (see Section~\ref{s332}). We selected similar learning rates and number of iterations to the values reported~in~Table~\ref{tab:hyperparameterstable}.

\paragraph{\textbf{Other baselines}}
For completeness, we also compare the standard and Modularity-Aware GAE/VGAE to several other relevant baselines. First and foremost, we report experiments on the VGAECD \cite{choong2018learning} and VGAECD-OPT \cite{choong2020optimizing} models, designed for community detection and discussed in Section~\ref{sec:CD_GAE_extensions}. We use our own Tensorflow reimplementation of these models\footnote{Authors of VGAECD/VGAECD-OPT did not release any public implementation of their models, and we were unable to reach them by e-mail. We note that we obtained some inconsistent results w.r.t. their original performances (specifically, we reached better performances on featureless graphs, and lower performances on graphs with node features), even when adopting their set of hyperparameters. For the sake of transparency, we will thereafter 1)~report scores obtained through our own re-implementation, and 2)~also specify scores reported in their original work, when they were significantly different. Authors followed an experimental setting and pure community detection task similar~to~ours.}. We set similar hyperparameters to the above other GAE/VGAE-based models. In all models, the number of Gaussian mixtures matches the ground-truth number of communities of each graph. 
Besides, we also report experiments on the DGVAE \cite{li2020dirichlet} model also discussed in Section \ref{sec:CD_GAE_extensions}, setting similar learning rates and layer dimensions to the above GAE/VGAE-based models, and using the authors' public implementation. In the case of DGVAE, we use 2-layer GCN encoders for consistency with other models of our experiments; we nonetheless acknowledge that \citet{li2020dirichlet} also proposed another encoding scheme, denoted Heatts in their paper (but unavailable in their public code at the time of writing) that could replace GCNs both in DGVAE and in Modularity-Aware GAE and VGAE. We also report experiments on the ARGA and ARVGA models from Pan et al. \cite{pan2018arga} that incorporate an adversarial regularization scheme, with similar hyperparameters and using the authors' implementation. ARGA and ARVGA emerged as some of the most cited GAE/VGAE extensions and, while they were not specifically introduced for community detection, Pan et al. \cite{pan2018arga} reported empirical gains on this task w.r.t.~standard~GAE/VGAE (on graphs with node features).

We furthermore consider three additional baselines not utilizing the autoencoder paradigm. Firstly, we report results obtained from the popular node embedding methods \textit{node2vec}~\cite{grover2016node2vec} and \textit{DeepWalk} \cite{perozzi2014deepwalk}. We used the authors' respective implementations, training models from 10 random walks with length 80 per node, window size of 5 and on a single epoch. For node2vec, we further set $p=q=1$. We use a similar strategy to our aforementioned GAE/VGAE models ($k$-means/inner products) for community detection and link prediction from embedding spaces. Lastly, we also compare to the \textit{Louvain} community detection method, using the authors' implementation~\cite{blondel2008louvain}. 
We see value in comparing our methods to a direct use of Louvain, as this method 1) often emerged as a simple but competitive alternative to GAE/VGAE for community detection (see Section~\ref{s242}), and 2) is directly leveraged in our proposed \textit{Modularity-Aware GAE/VGAE} as a pre-processing step for the computation of $A_c$ and $\Ao$ (see Section~\ref{s32}).

\subsection{\textbf{Results}}
\label{s42}

We now present our experimental results. In Section~\ref{s421} we analyze the impact of our proposed hyperparameter selection procedure. In Section~\ref{s422} and ~\ref{s423}, we discuss results on Task~1 and Task~2, respectively. Finally, we mention limitations and possible extensions of our approach in Section~\ref{s424}.

\subsubsection{On the Selection of Hyperparameters}
\label{s421}

In Section~\ref{s332}, we proposed an alternative hyperparameter selection procedure w.r.t. previous practices in the literature. Based on the joint maximization of AUC validation scores for link prediction and modularity scores $Q$, it aims to identify more relevant GAE/VGAE hyperparameters for joint link prediction and community detection. Recall, that the resulting optimal parameters are displayed in Table \ref{tab:hyperparameterstable}.

In our experiments, this procedure did not modify our choices of learning rates and dropout rates for the different GAE/VGAE models under consideration, w.r.t. a standard selection solely relying on AUC validation scores. It had a more noticeable impact on the choices of clustering-related hyperparameters in Modularity-Aware GAE and VGAE (i.e., $\lambda$, $\beta$, $\gamma$, and $\dregc$) as well as on the required number of training iterations in~the~gradient~descent. 

\begin{figure*}[t]
\centering
\resizebox{0.85\textwidth}{!}{
  \subfigure[Cora]{
  \scalebox{0.47}{\includegraphics{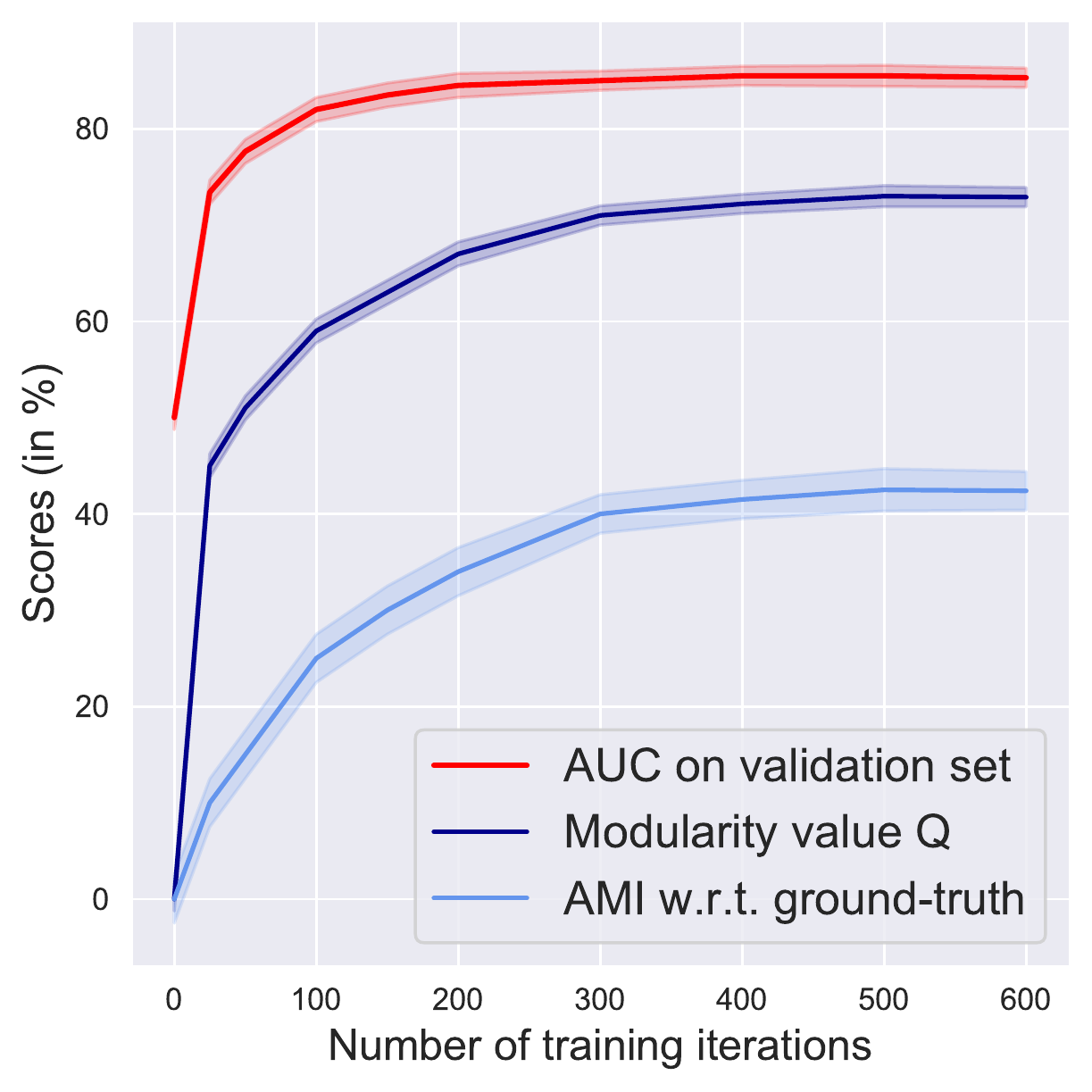}}}\subfigure[Pubmed]{
  \scalebox{0.47}{\includegraphics{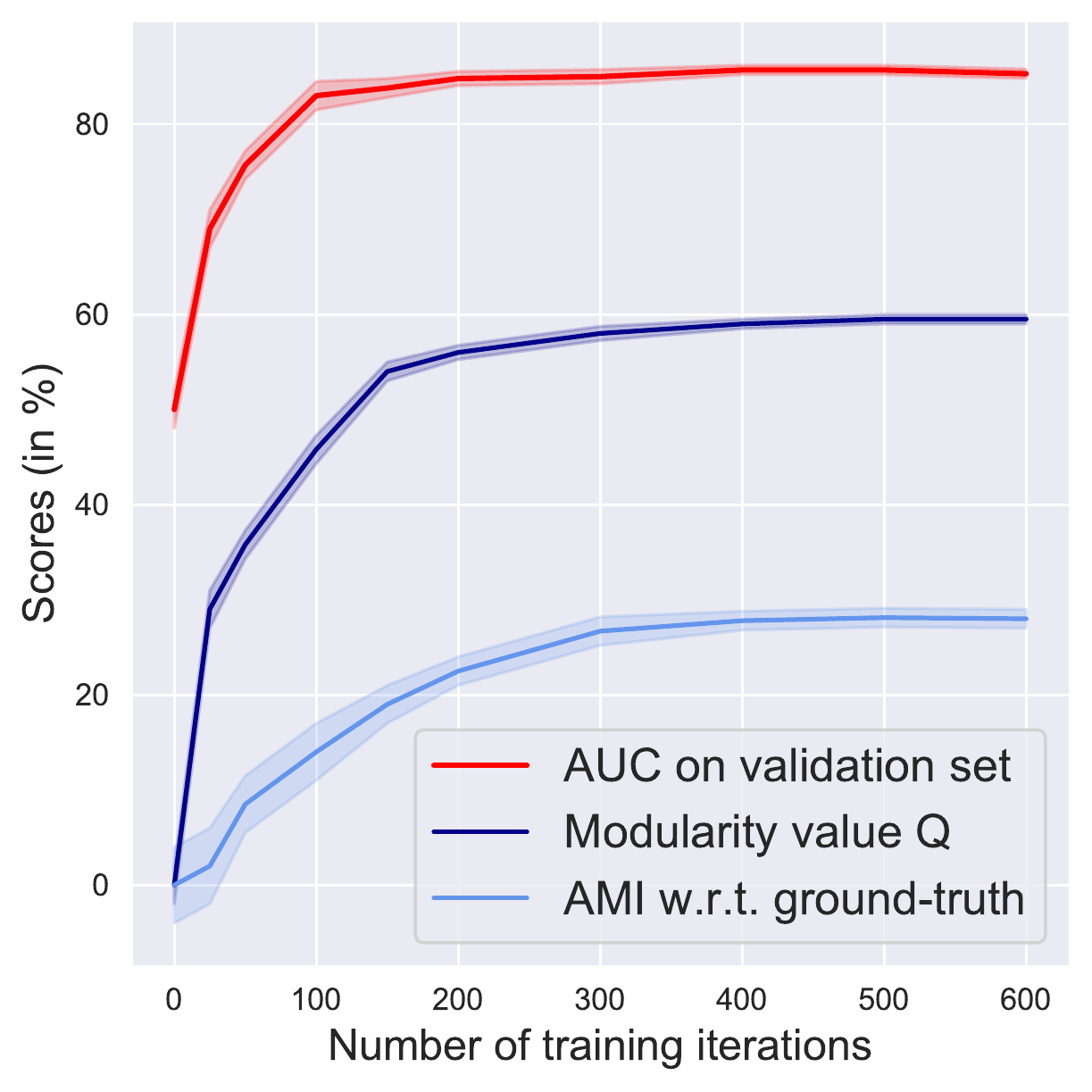}}}}
  \caption{Identification of the required number of training iterations, for Modularity-Aware VGAE with linear encoders trained on the featureless (a) Cora, and (b) Pubmed graphs. The plots report the evolution of the modularity $Q$ (\textcolor{Blue}{dark blue}) and AUC link prediction scores on validation sets (\textcolor{red}{red}) w.r.t. the number of model training iterations in gradient descent. By looking at the red curves only, one might choose to stop training models after 200 iterations as in \cite{kipf2016-2}, as the AUC validation scores have almost stabilized. However, the dark blue curves emphasize that $Q$ still increases up to 400-500 training iterations for both graphs. By also using $Q$ for hyperparameter selection (as we proposed), one will therefore continue training VGAE models up to 400-500 iterations. The \textcolor{MidnightBlue}{light blue} curves confirm that such a strategy eventually leads to better AMI final scores w.r.t. ground-truth communities. Note, that the light blue curves could \textit{not} be directly used for tuning, as ground-truth communities are assumed to be unavailable at training time.}
  \label{fig:optimization}
\end{figure*}

Figure~\ref{fig:optimization} provides an example of this phenomenon, for the number of training iterations required to train Modularity-Aware VGAE models on the featureless Cora and Pubmed graphs. The figure shows that, unlike our proposed procedure jointly based on AUC and $Q$, a hyperparameter selection based solely on AUC validation scores leads to earlier stopping of the model training and \textit{suboptimal} performances on community detection. This reaffirms the empirical relevance of our proposed procedure, and that optimal hyperparameters for joint link prediction and community detection might differ from those for link prediction only. Moreover, we note that, while Figure~\ref{fig:optimization} focuses on Modularity-Aware VGAE, our procedure also leads to the selection of a larger number of training iterations for the other GAE/VGAE-based methods under consideration in this work (values are similar to those in Table~\ref{tab:hyperparameterstable}), which explains why, on some occasions, we will report slightly improved results w.r.t. those obtained in the original papers.

\subsubsection{Results for Community Detection on Original Graphs (Task 1)}
\label{s422}

\begin{table}[t]
\begin{center}
\begin{small}
\centering
\caption{Results for Task 1 and Task 2 on the featureless Cora graph, using Modularity-Aware GAE and VGAE with Linear and GCN encoders, their standard GAE and VGAE counterparts, and other baselines. All node embedding models learn embedding vectors of dimension $d =16$, with other hyperparameters set as described in Section~\ref{s413}. Scores are averaged over 100 runs. For Task 2, link prediction results are reported from test sets (edges masked for the original graph in addition to the same number of randomly picked unconnected node pairs). \textbf{Bold} numbers correspond to the best performance for each score. Scores \textit{in italic} are within one standard deviation range from the best score.}
    \label{tab:coraresults}
   \resizebox{1.0\textwidth}{!}{
\begin{tabular}{r||cc||cc|cc}
\toprule
\textbf{Models} &  \multicolumn{2}{c}{\textbf{Task 1: Community Detection}} & \multicolumn{4}{c}{\textbf{Task 2: Joint Link Prediction and Community Detection}}\\
(Dimension $d=16$) & \multicolumn{2}{c}{\textbf{on complete graph}} & \multicolumn{4}{c}{\textbf{on graph with 15\% of edges being masked}}\\
\midrule
 & \textbf{AMI (in \%)} & \textbf{ARI (in \%)} &  \textbf{AMI (in \%)} &  \textbf{ARI (in \%)} &  \textbf{AUC (in \%)} & \textbf{AP (in \%)} \\ 
\midrule
\midrule
\underline{\textit{Modularity-Aware GAE/VGAE Models}} &  &  &  &  &  &  \\
Linear Modularity-Aware VGAE & \textbf{46.65} $\pm$ \textbf{0.94} & \textit{39.43} $\pm$ \textit{1.15} & \textit{42.86} $\pm$ \textit{1.65} & \textit{34.53} $\pm$ \textit{1.97} & \textit{85.96} $\pm$ \textit{1.24} & \textit{87.21} $\pm$ \textit{1.39} \\
Linear Modularity-Aware GAE & \textit{46.58} $\pm$ \textit{0.40} & \textbf{39.71} $\pm$ \textbf{0.41} & \textbf{43.48} $\pm$ \textbf{1.12} & \textbf{35.51} $\pm$ \textbf{1.20} & \textbf{87.18} $\pm$ \textbf{1.05} & \textit{88.53} $\pm$ \textit{1.33} \\
GCN-based Modularity-Aware VGAE & 43.25 $\pm$ 1.62 & 35.08 $\pm$ 1.88 & 41.03 $\pm$ 1.55 & \textit{33.43} $\pm$ \textit{2.17} & 84.87 $\pm$ 1.14 & 85.16 $\pm$ 1.23  \\
GCN-based Modularity-Aware GAE & 44.39 $\pm$ 0.85 & 38.70 $\pm$ 0.94 & 41.13 $\pm$ 1.35 & \textit{35.01} $\pm$ \textit{1.58} & \textit{86.90} $\pm$ \textit{1.16} & \textit{87.55} $\pm$ \textit{1.26} \\
\midrule
\underline{\textit{Standard GAE/VGAE Models}} &  &  &  &  &  &  \\
Linear VGAE & 37.12 $\pm$ 1.46 & 26.83 $\pm$ 1.68 & 32.22 $\pm$ 1.76 & 21.82 $\pm$ 1.80 & 85.69 $\pm$ 1.17 & \textbf{89.12} $\pm$ \textbf{0.82} \\
Linear GAE & 35.05 $\pm$ 2.55 & 24.32 $\pm$ 2.99 & 28.41 $\pm$ 1.68 & 19.45 $\pm$ 1.75 & 84.46 $\pm$ 1.64 & \textit{88.42} $\pm$ \textit{1.07} \\
GCN-based VGAE & 34.36 $\pm$ 3.66 & 23.98 $\pm$ 5.01 & 28.62 $\pm$ 2.76 & 19.70 $\pm$ 3.71 & 85.47 $\pm$ 1.18 & \textit{88.90} $\pm$ \textit{1.11} \\
GCN-based GAE & 35.64 $\pm$ 3.67 & 25.33 $\pm$ 4.06 & 31.30 $\pm$ 2.07 & 19.89 $\pm$ 3.07 & 85.31 $\pm$ 1.35 & \textit{88.67} $\pm$ \textit{1.24} \\
\midrule
\midrule
\underline{\textit{Other Baselines}} &  &  &  &  &  &  \\
Louvain & 42.70 $\pm$ 0.65 & 24.01 $\pm$ 1.70 & 39.09 $\pm$ 0.73 & 20.19 $\pm$ 1.73 & -- & -- \\
VGAECD& 36.11 $\pm$ 1.07 & 27.15 $\pm$ 2.05 & 33.54 $\pm$ 1.46 & 24.32 $\pm$ 2.25 & 83.12 $\pm$ 1.11 & 84.68 $\pm$ 0.98 \\
VGAECD-OPT & 38.93 $\pm$ 1.21 & 27.61 $\pm$ 1.82 & 34.41 $\pm$ 1.62 & 24.66 $\pm$ 1.98 & 82.89 $\pm$ 1.20 & 83.70 $\pm$ 1.16 \\
ARGVA & 34.97 $\pm$ 3.01 & 23.29 $\pm$ 3.21 & 28.96 $\pm$ 2.64 & 19.74 $\pm$ 3.02 & 85.85 $\pm$ 0.87 & \textit{88.94} $\pm$ \textit{0.72} \\
ARGA & 35.91 $\pm$ 3.11 & 25.88 $\pm$ 2.89 & 31.61 $\pm$ 2.05 & 20.18 $\pm$ 2.92 & 85.95 $\pm$ 0.85 & \textit{89.07} $\pm$ \textit{0.70} \\
DVGAE & 35.02 $\pm$ 2.73 & 25.03 $\pm$ 4.32 & 30.46 $\pm$ 4.12 & 21.06 $\pm$ 5.06 & 85.58 $\pm$ 1.31 & \textit{88.77} $\pm$ \textit{1.29} \\
DeepWalk & 36.58 $\pm$ 1.69 & 27.92 $\pm$ 2.93 & 30.26 $\pm$ 2.32 & 20.24 $\pm$ 3.91 & 80.67 $\pm$ 1.50 & 80.48 $\pm$ 1.28 \\
node2vec & 41.64 $\pm$ 1.25 & 34.30 $\pm$ 1.92 & 36.25 $\pm$ 1.38 & 29.43 $\pm$ 2.21 & 82.43 $\pm$ 1.23 & 81.60 $\pm$ 0.91 \\
\bottomrule
\end{tabular}}
\end{small}
\end{center}
\end{table}

\begin{figure*}[t]
\centering
\resizebox{1.03\textwidth}{!}{
  \subfigure[Linear Standard VGAE]{
  \scalebox{0.48}{\includegraphics{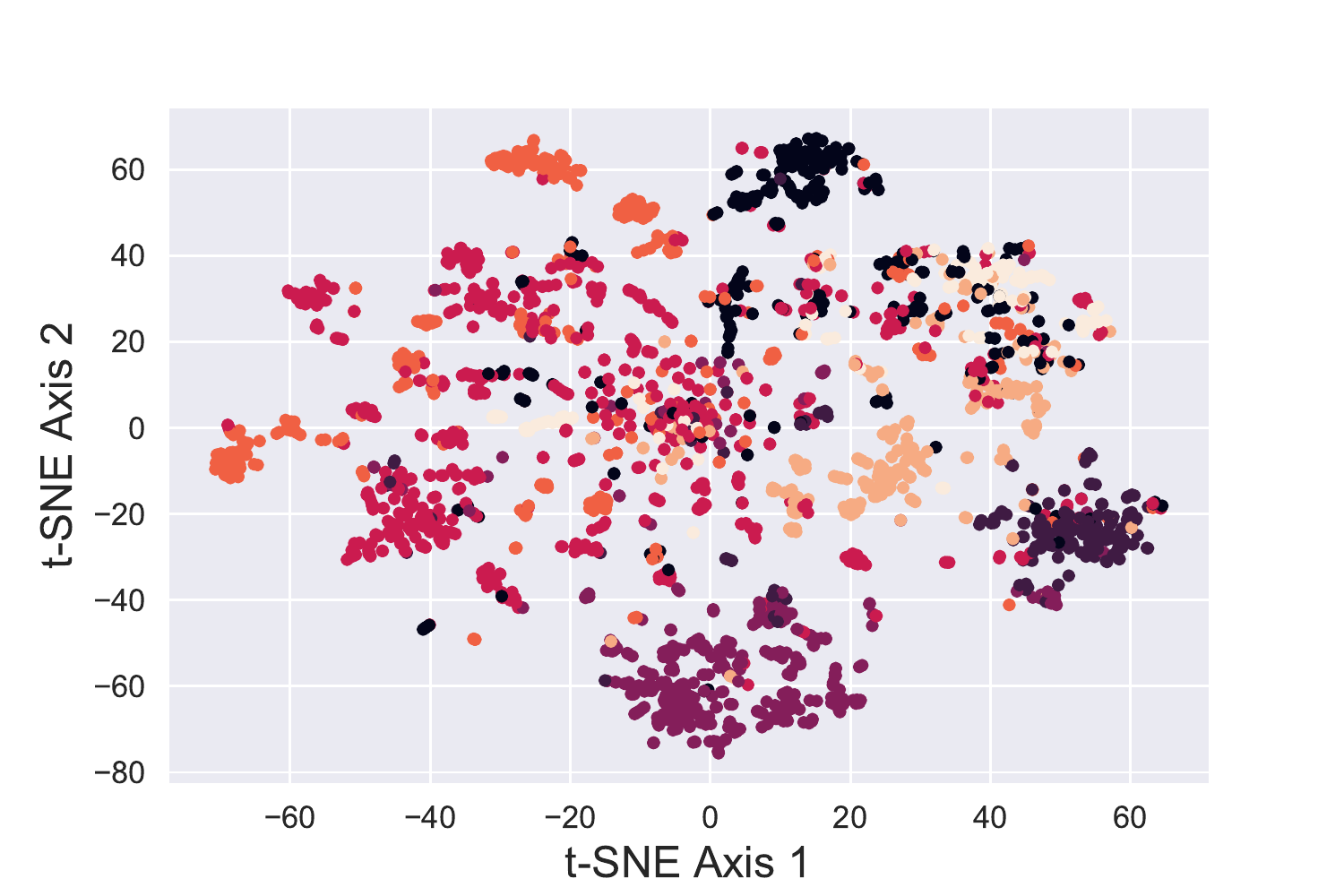}}}\subfigure[Linear Modularity-Aware VGAE]{
  \scalebox{0.48}{\includegraphics{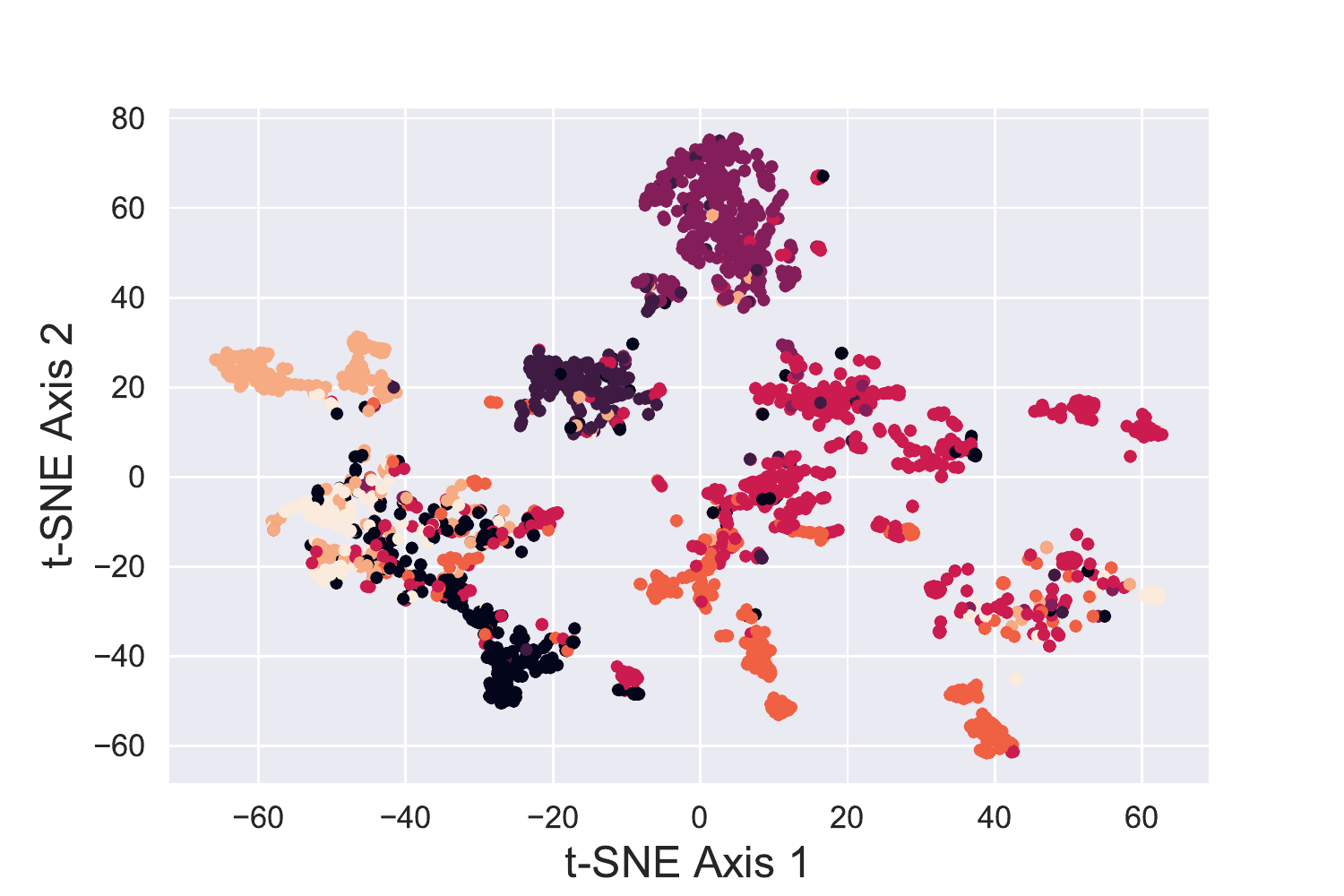}}}}
  \caption{Visualization of node embedding representations for the featureless Cora graph, learned by (a)~Standard VGAE, and (b)~Modularity-Aware VGAE, with linear encoders. The plots were obtained using the t-SNE method for high-dimensional data visualization. Colors denote ground-truth communities, that were not available during training. Although community detection is not perfect (both methods return AMI scores $<$ 50\% in Table~\ref{tab:coraresults}), node embedding representations from (b) provide a more visible separation of these communities. Specifically, in Table~\ref{tab:coraresults}, using  Linear Modularity-Aware VGAE for community detection leads to an increase of 9 AMI points (Task 1) to 10 AMI points (Task 2) for community detection w.r.t. Linear Standard VGAE, while preserving comparable performances on link prediction (Task 2).}
  \label{visucora}
\end{figure*}

We now focus on the ``pure'' community detection task (Task 1), performed by models trained on graphs, where no edges are removed for model training as previously introduced in Section~\ref{s412}. The second and third column of Table~\ref{tab:coraresults} reports mean AMI and ARI scores on Cora for this task along with standard deviations over 100 runs, for Modularity-Aware GAE and VGAE models (with linear or GCN encoders), their standard counterparts and~other~baselines. 

We draw several conclusions from Table \ref{tab:coraresults}. Foremost, previous conclusions~\cite{choong2018learning,salha2019-1,choong2020optimizing,salha2021fastgae} on the limitations of standard GAE and VGAE for community detection are confirmed: in Table~\ref{tab:coraresults}, these methods are notably outperformed by a direct use of the Louvain method (e.g., 42.70\% vs 34.36\% mean AMI scores for Louvain vs GCN-based VGAE). We also observe that previous GAE/VGAE extensions, reported as baselines, actually provide few empirical benefits w.r.t. standard GAE and VGAE for this \textit{featureless} graph (e.g., only +1.81 AMI points for VGAECD-OPT\footnote{The increase is even smaller when replacing AMI scores obtained via our re-implementation of VGAECD and VGAECD-OPT (i.e., 36.11\% and 38.93\%, respectively) by AMI scores originally reported in \cite{choong2020optimizing} for these methods (i.e., 28.22\% and 37.35\%, respectively), which are lower than ours.} vs Linear VGAE). Such a result, in conjunction with the improved performances of these same baselines on graphs \textit{with features} (see thereafter), tends to confirm our initial diagnosis that various GAE/VGAE extensions for community detection mainly benefit from the presence of node features.

On the contrary, our proposed Modularity-Aware GAE and VGAE models, incorporating Louvain clusters as a prior signal in the GAE's and VGAE's encoders, significantly outperform both the use of the Louvain method alone, and the use of GAE and VGAE alone (e.g., with a top 46.65\% mean AMI for Modularity-Aware VGAE with linear encoders, and a top 39.71\% mean ARI score for Modularity-Aware GAE with linear encoders). Modularity-Aware models also compare favorably to the baselines under consideration (e.g., with +12.1 ARI points for Linear Modularity-Aware GAE w.r.t. VGAECD-OPT), while also providing less volatile results w.r.t. standard GAE/VGAE.  Furthermore, we note that Modularity-Aware models with linear encoders tend to outperform their GCN-based counterparts (consistently with previous findings from \cite{salha2020simple,choong2020optimizing} on Cora) and that GAE and VGAE reach comparable scores. In addition to these results, Figure~\ref{visucora} visualizes the node embeddings learned by our models using t-SNE\footnote{We used the scikit-learn \cite{pedregosa2011scikit} implementation of this data visualization method: \href{https://scikit-learn.org/stable/modules/generated/sklearn.manifold.TSNE.html}{https://scikit-learn.org/stable/modules/generated/sklearn.manifold.TSNE.html} (accessed October 13, 2021).}~\cite{van2008visualizing}.

Overall, we obtain similar conclusions on the other graph datasets. Following the format of Table~\ref{tab:coraresults}, columns two and three of Table~\ref{tab:pubmedresults} present detailed community detection results for the featureless Pubmed graph. Table~\ref{tab:allresults} reports more summarized results for all other graph datasets under consideration, with and without node features (when available).
While Louvain outperforms standard GAE/VGAE in 5 featureless graphs out of 7 in Table~\ref{tab:allresults} (e.g., 19.81\% vs 15.79\% mean AMI scores for Louvain vs GCN-based VGAE on Deezer-Album), our Modularity-Aware models manage to achieve either comparable or better performances w.r.t. standard models, Louvain and other baselines in the wide majority of experiments. Furthermore, throughout Table~\ref{tab:allresults}, we observe that linear encoders outperform their GCN-based counterparts in 8/10 experiments, and that VGAE models outperform GAE models in 8/10 experiments (even though performances are often relatively close, as for Cora). Moreover we emphasize that, while all tables report results for fixed embedding dimensions of $d = 16$, we reached similar conclusions for $d \in \{32, 64\}$. Although performances sometimes improved by increasing $d$, the \textit{ranking} of methods under consideration remained similar: for instance, by setting $d = 64$, Deepwalk's mean AMI score increased from 36.58\% to roughly 41\% on the featureless Cora graph, while the mean AMI score from our Linear Modularity-Aware GAE simultaneously increased from 46.58\% to 47.80\%. 
Lastly, while we observed that our results were quite sensitive to our choice of hyperparameters $\lambda$, $\beta$, and $\gamma$, no direct link with the number of nodes, of edges, or of ground-truth communities in the graph seems to emerge from our experiments.

\begin{table}[t]
\begin{center}
\begin{small}
\centering
\caption{Results for Task 1 and Task 2 on the featureless Pubmed graph, using Modularity-Aware GAE and VGAE with Linear and GCN encoders, their standard GAE and VGAE counterparts, and other baselines. All node embedding models learn embedding vectors of dimension $d =16$, with other hyperparameters set as described in Section~\ref{s413}. Scores are averaged over 100 runs. For Task 2, link prediction results are reported from test sets (edges masked during training + same number of randomly picked unconnected node pairs).  \textbf{Bold} numbers correspond to the best performance for each score. Scores \textit{in italic} are within one standard deviation range from the best score.}
    \label{tab:pubmedresults}
   \resizebox{1.0\textwidth}{!}{
\begin{tabular}{r||cc||cc|cc}
\toprule
\textbf{Models} &  \multicolumn{2}{c}{\textbf{Task 1: Community Detection}} & \multicolumn{4}{c}{\textbf{Task 2: Joint Link Prediction and Community Detection}}\\
(Dimension $d=16$) & \multicolumn{2}{c}{\textbf{on complete graph}} & \multicolumn{4}{c}{\textbf{on graph with 15\% of edges being masked}}\\
\midrule
 & \textbf{AMI (in \%)} & \textbf{ARI (in \%)} &  \textbf{AMI (in \%)} &  \textbf{ARI (in \%)} &  \textbf{AUC (in \%)} & \textbf{AP (in \%)} \\ 
\midrule
\midrule
\underline{Modularity-Aware GAE/VGAE Models} &  &  &  &  &  &  \\
Linear Modularity-Aware VGAE & 28.12 $\pm$ 0.29 & 29.01 $\pm$ 0.51 & 25.93 $\pm$ 0.65 & 23.76 $\pm$ 0.49 & \textbf{85.76} $\pm$ \textbf{0.37} & 87.77 $\pm$ 0.31 \\
Linear Modularity-Aware GAE & \textit{28.54} $\pm$ \textit{0.24} & 26.36 $\pm$ 0.34 & \textbf{26.38} $\pm$ \textbf{0.43} & 21.30 $\pm$ 0.59 & 84.39 $\pm$ 0.32 & \textit{87.92} $\pm$ \textit{0.40} \\
GCN-based Modularity-Aware VGAE & 28.08 $\pm$ 0.27 & 28.14 $\pm$ 0.33 & 25.70 $\pm$ 0.86 & 22.65 $\pm$ 0.80 & 84.70 $\pm$ 0.24 & 86.64 $\pm$ 0.15 \\
GCN-based Modularity-Aware GAE & \textbf{28.74} $\pm$ \textbf{0.28} & 26.71 $\pm$ 0.47 & 25.52 $\pm$ 0.45 & 20.52 $\pm$ 0.31 & 85.07 $\pm$ 0.35 & \textit{88.27} $\pm$ \textit{0.39} \\
\midrule
\underline{Standard GAE/VGAE Models} &  &  &  &  &  &  \\
Linear VGAE & 22.16 $\pm$ 2.02 & 13.90 $\pm$ 3.47  & 21.78 $\pm$ 2.57 & 13.81 $\pm$ 3.17 & 84.57 $\pm$ 0.51 & \textbf{88.31} $\pm$ \textbf{0.44} \\
Linear GAE & 12.61 $\pm$ 4.61 & 6.37 $\pm$ 3.86  & 12.60 $\pm$ 4.67 & 6.21 $\pm$ 1.75 & 82.03 $\pm$ 0.32 & 87.71 $\pm$ 0.24 \\
GCN-based VGAE & 20.11 $\pm$ 3.05 & 13.12 $\pm$ 3.10 & 17.34 $\pm$ 2.99 & 8.71 $\pm$ 3.05 & 82.19 $\pm$ 0.88 & 87.51 $\pm$ 0.55 \\
GCN-based GAE & 20.12 $\pm$ 2.89 & 14.21 $\pm$ 2.78 & 16.75 $\pm$ 3.36 & 9.18 $\pm$ 2.71 & 82.33 $\pm$ 1.32 & 87.20 $\pm$ 0.58 \\

\midrule
\midrule
\underline{Other Baselines} &  &  &  &  &  &  \\
Louvain & 20.06 $\pm$ 0.27 & 10.34 $\pm$ 0.99 & 16.71 $\pm$ 0.46 & 8.32 $\pm$ 0.79 & -- & -- \\
VGAECD & 20.32 $\pm$ 2.95 & 13.54 $\pm$ 2.98 & 17.39 $\pm$ 3.04 & 9.21 $\pm$ 3.12 & 82.05 $\pm$ 0.90 & 87.30 $\pm$ 0.53 \\
VGAECD-OPT & 22.50 $\pm$ 1.99 & 14.58 $\pm$ 2.86 & 21.98 $\pm$ 2.46 & 15.22 $\pm$ 2.92 & 82.03 $\pm$ 0.82 &  87.41 $\pm$ 0.53 \\
ARGVA & 20.73 $\pm$ 3.10 & 13.94 $\pm$ 3.12 & 17.63 $\pm$ 3.19 & 9.19 $\pm$ 3.09 & 84.07 $\pm$ 0.55 & 87.73 $\pm$ 0.49 \\
ARGA & 20.98 $\pm$ 2.90 & 14.79 $\pm$ 2.80 & 17.21 $\pm$ 3.01 & 9.59 $\pm$ 2.76 & 83.73 $\pm$ 0.53 & \textit{87.90} $\pm$ \textit{0.45} \\
DVGAE & 23.15 $\pm$ 2.52 & 15.02 $\pm$ 3.33 & 22.10 $\pm$ 2.50 & 14.62 $\pm$ 2.96 & 83.21 $\pm$ 0.92 & \textit{88.17} $\pm$ \textit{0.49} \\
DeepWalk & \textit{28.53} $\pm$ \textit{0.43} & 29.61 $\pm$ 0.33 & 15.80 $\pm$ 1.05 & 16.16 $\pm$ 1.75 & 80.63 $\pm$ 0.42 & 81.03 $\pm$ 0.54 \\
node2vec & \textit{28.52} $\pm$ \textit{1.12} & \textbf{30.63} $\pm$ \textbf{1.14} & 23.88 $\pm$ 0.54 & \textbf{25.90} $\pm$ \textbf{0.65} & 81.03 $\pm$ 0.30 & 82.33 $\pm$ 0.41 \\
\bottomrule
\end{tabular}}
\end{small}
\end{center}
\end{table}

Interestingly, we also observe that combining the Louvain method and GAE/VGAE in our Modularity-Aware models might be empirically beneficial \textit{even when standard GAE/VGAE initially outperform the Louvain method}. For instance in Table~\ref{tab:pubmedresults}, our Linear Modularity-Aware VGAE outperforms Linear Standard VGAE (e.g., with 28.12\% vs 22.16\% mean AMI scores), despite the fact that this standard model initially outperformed the Louvain method (20.06\% mean AMI score). This tends to confirm that modularity-based clustering \textit{à la} Louvain complements the encoding-decoding paradigm of GAE and VGAE, and that learning node embedding spaces from complementary criteria is empirically beneficial. On a more negative note, we nonetheless acknowledge that, on Cora, Citeseer and Pubmed in Table~\ref{tab:allresults}, empirical gains of Modularity-Aware models are less visible on graphs \textit{equipped with node features} than on featureless graphs, which we will further discuss in our limitation section.

As our Modularity-Aware models include two main novel components (namely our community-preserving encoders from Section~\ref{s32} and our revised loss from Section~\ref{s33}), one might wonder what the contribution of each of these components to the performance gains is. To study this question, we report in Figure~\ref{fig:ablation} the results of an \textit{ablation study}, that consisted in training variant versions of our models leveraging one of these components only\footnote{A Modularity-Aware GAE or VGAE model that leverages the novel encoder only (respectively, the novel loss only) corresponds to a particular case of a ``complete'' Modularity-Aware GAE or VGAE model, where the hyperparameter $\beta$ (respectively, the hyperparameter $\lambda$) is set~to~0.} (i.e., the encoder but not the loss, or the loss but not the encoder). Figure~\ref{fig:ablation} shows that incorporating any of these two individual components into the model improves community detection. The gain is larger for the \textit{loss} in the Cora example from Figure~\ref{fig:ablation}(a), while it is larger for the \textit{encoder} in the Deezer-Album example from Figure~\ref{fig:ablation}(b). A simultaneous use of the encoder and of the loss leads to the best results in both examples, which we also confirmed on the other graphs under consideration.

\begin{table}[t]
\begin{center}
\begin{small}
\centering
\caption{Summarized results for Task 1 and Task 2 on all graphs. For each graph, for brevity, we only report the \textbf{best} Modularity-Inspired model (best on Task 2, among GCN \textbf{or} Linear encoder, and GAE \textbf{or} VGAE), its standard counterpart, and a comparison to the Louvain baseline as well as the best other baseline (among VGAECD, VGAECD-OPT, ARGA, ARGVA, DVGAE, DeepWalk and node2vec). All node embedding models learn embedding vectors of dimension $d =16$, with other hyperparameters set as described in Section~\ref{s413}. Scores are averaged over 100 runs except for the larger SBM and Deezer-Album graphs (10 runs). \textbf{Bold} numbers correspond to the best performance for each score. Scores \textit{in italic} are within one standard deviation range from the best score.}
    \label{tab:allresults}
     \vspace{0.2cm}
   \resizebox{1.0\textwidth}{!}{
\begin{tabular}{c|r||cc||cc|cc}
\toprule
\textbf{Datasets} & \textbf{Models} &  \multicolumn{2}{c}{\textbf{Task 1: Community Detection}} & \multicolumn{4}{c}{\textbf{Task 2: Joint Link Prediction and Community Detection}}\\
& (Dimension $d=16$) & \multicolumn{2}{c}{\textbf{on complete graph}} & \multicolumn{4}{c}{\textbf{on graph with 15\% of edges being masked}}\\
\midrule
&  & \textbf{AMI (in \%)} & \textbf{ARI (in \%)} &  \textbf{AMI (in \%)} &  \textbf{ARI (in \%)} &  \textbf{AUC (in \%)} & \textbf{AP (in \%)} \\ 
\midrule
\midrule
& GCN-based Modularity-Aware VGAE & \textbf{73.74} $\pm$ \textbf{1.32} & \textbf{82.78} $\pm$ \textbf{1.27} & \textbf{70.42} $\pm$ \textbf{1.28} & \textbf{79.80} $\pm$ \textbf{1.12} & \textbf{91.67} $\pm$ \textbf{0.39} & \textit{92.37} $\pm$ \textit{0.41} \\
& GCN-based Standard VGAE & \textit{73.42} $\pm$ \textit{0.95} & \textit{82.58} $\pm$ \textit{0.93} & 66.90 $\pm$ 3.32 & \textit{77.23} $\pm$ \textit{3.89} & \textit{91.64} $\pm$ \textit{0.42} & \textbf{92.52} $\pm$ \textbf{0.51} \\ 
\textbf{Blogs} & Louvain & 63.43 $\pm$ 0.86 & 76.66 $\pm$ 0.70 & 57.25 $\pm$ 1.67 & 73.00 $\pm$ 1.56 & -- & -- \\
& \underline{Best other baseline:} &  &  &  &  &  & \\
& node2vec & 72.88 $\pm$ 0.87 & 82.08 $\pm$ 0.73 & 67.64 $\pm$ 1.23 & 77.03 $\pm$ 1.85 & 83.63 $\pm$ 0.34 & 79.60 $\pm$ 0.61\\
\midrule
& Linear Modularity-Aware GAE & \textbf{46.58} $\pm$ \textbf{0.40} & \textbf{39.71} $\pm$ \textbf{0.41} & \textbf{43.48} $\pm$ \textbf{1.12} & \textbf{35.51} $\pm$ \textbf{1.20} & \textbf{87.18} $\pm$ \textbf{1.05} & \textbf{88.53} $\pm$ \textbf{1.33} \\
& Linear Standard GAE & 35.05 $\pm$ 2.55 & 24.32 $\pm$ 2.99 & 28.41 $\pm$ 1.68 & 19.45 $\pm$ 1.75 & 84.46 $\pm$ 1.64 & \textit{88.42} $\pm$ \textit{1.07} \\
\textbf{Cora} & Louvain & 42.70 $\pm$ 0.65 & 24.01 $\pm$ 1.70 & 39.09 $\pm$ 0.73 & 20.19 $\pm$ 1.73 & -- & -- \\
& \underline{Best other baseline:} &  & &  &  & &  \\
& node2vec & 41.64 $\pm$ 1.25 & 34.30 $\pm$ 1.92 & 36.25 $\pm$ 1.38 & 29.43 $\pm$ 2.21 & 82.43 $\pm$ 1.23 & 81.60 $\pm$ 0.91 \\
\midrule
& Linear Modularity-Aware VGAE & \textbf{52.43} $\pm$ \textbf{1.87} & \textbf{44.82} $\pm$ \textbf{3.12} & \textbf{49.48} $\pm$ \textbf{2.15} & \textbf{43.05} $\pm$ \textbf{3.51} & \textbf{93.10} $\pm$ \textbf{0.88} & \textbf{94.06} $\pm$ \textbf{0.75} \\
\textbf{Cora} & Linear Standard VGAE & 49.98 $\pm$ 2.40 & \textit{43.15} $\pm$ \textit{4.35} & 46.90 $\pm$ 1.43 & 38.24 $\pm$ 3.56 & \textit{93.04} $\pm$ \textit{0.80} & \textit{94.04} $\pm$ \textit{0.75} \\ 
\textbf{with} & Louvain & 42.70 $\pm$ 0.65 & 24.01 $\pm$ 1.70 & 39.09 $\pm$ 0.73 & 20.19 $\pm$ 1.73 & -- & -- \\
\textbf{features} & \underline{Best other baseline:} &  &  &  &  &  & \\
& VGAECD-OPT & 50.32\tablefootnote{\label{snap} We note that authors of VGAECD-OPT \cite{choong2020optimizing} reported larger AMI scores w.r.t. those we managed to obtain from our re-implementation on Cora and Pubmed with features: 54.37\% and 35.52\%, respectively.} $\pm$ 1.95 & \textit{43.54} $\pm$ \textit{3.23} & 47.83 $\pm$ 1.64 & 39.45 $\pm$ 3.53 & \textit{92.25} $\pm$ \textit{1.07} & 92.60 $\pm$ 0.91 \\
\midrule

& Linear Modularity-Aware VGAE & 21.28 $\pm$ 1.03 & \textbf{15.39} $\pm$ \textbf{1.06} & 19.05 $\pm$ 1.47 & \textbf{12.19} $\pm$ \textbf{1.38} & \textbf{80.84} $\pm$ \textbf{1.64} & \textbf{84.21} $\pm$ \textbf{1.21} \\
& Linear Standard VGAE & 13.83 $\pm$ 1.00 & 8.31 $\pm$ 0.89 & 11.11 $\pm$ 1.10 & 5.87 $\pm$ 0.87 & 78.26 $\pm$ 1.55 & \textit{82.93} $\pm$ \textit{1.39} \\ 
\textbf{Citeseer} & Louvain & \textbf{24.72} $\pm$ \textbf{0.27} & 9.21 $\pm$ 0.75 & \textbf{22.71} $\pm$ \textbf{0.47} & 7.70 $\pm$ 0.67 & -- & -- \\
& \underline{Best other baseline:} &  &  &  &  &  & \\
& node2vec & 18.68 $\pm$ 1.13  & \textit{14.93} $\pm$ \textit{1.15} & 14.40 $\pm$ 1.18 & \textit{12.13} $\pm$ \textit{1.53} & 76.05 $\pm$ 2.12 & 79.46 $\pm$ 1.65 \\
\midrule

& Linear Modularity-Aware VGAE  & \textbf{25.11} $\pm$ \textbf{0.94} & \textbf{15.55} $\pm$ \textbf{0.60} & \textit{22.21} $\pm$ \textit{1.24} & \textbf{12.59} $\pm$ \textbf{1.25} & 86.54 $\pm$ 1.20 & 88.07 $\pm$ 1.22 \\
\textbf{Citeseer} & Linear Standard VGAE & 17.80 $\pm$ 1.61 & 6.01 $\pm$ 1.46 & 17.38 $\pm$ 1.43 & 6.10 $\pm$ 1.51 & \textbf{89.08} $\pm$ \textbf{1.19} & \textbf{91.19} $\pm$ \textbf{0.98} \\ 
\textbf{with} & Louvain & 24.72 $\pm$ 0.27 & 9.21 $\pm$ 0.75 & \textbf{22.71} $\pm$ \textbf{0.47} & 7.70 $\pm$ 0.67 & -- & -- \\
\textbf{features} & \underline{Best other baseline:}&  &  &  &  &  & \\
& DVGAE & 20.09 $\pm$ 2.84 & 12.16 $\pm$ 2.74 & 16.02 $\pm$ 3.32 & \textit{10.03} $\pm$ \textit{4.48} & 86.85 $\pm$ 1.48 & 88.43 $\pm$ 1.23 \\
\midrule
& Linear Modularity-Aware GAE & \textbf{28.54} $\pm$ \textbf{0.24} & 26.36 $\pm$ 0.34 & \textbf{26.38} $\pm$ \textbf{0.43} & 21.30 $\pm$ 0.59 & \textbf{84.39} $\pm$ \textbf{0.32} & \textbf{87.92} $\pm$ \textbf{0.40} \\
& Linear Standard GAE & 12.61 $\pm$ 4.61 & 6.37 $\pm$ 3.86  & 12.60 $\pm$ 4.67 & 6.21 $\pm$ 1.75 & 82.03 $\pm$ 0.32 & \textit{87.71} $\pm$ \textit{0.24} \\
\textbf{Pubmed} & Louvain & 20.06 $\pm$ 0.27 & 10.34 $\pm$ 0.99 & 16.71 $\pm$ 0.46 & 8.32 $\pm$ 0.79 & -- & -- \\
& \underline{Best other baseline:} &  &  &  &  &  & \\
& node2vec & \textit{28.52} $\pm$ \textit{1.12} & \textbf{30.63} $\pm$ \textbf{1.14} & 23.88 $\pm$ 0.54 & \textbf{25.90} $\pm$ \textbf{0.65} & 81.03 $\pm$ 0.30 & 82.33 $\pm$ 0.41 \\
\midrule
& Linear Modularity-Aware VGAE & 30.09 $\pm$ 0.63 & \textbf{29.11} $\pm$ \textbf{0.65} & \textbf{29.60} $\pm$ \textbf{0.70} & \textbf{28.54} $\pm$ \textbf{0.74} & \textit{97.10} $\pm$ \textit{0.21} & \textbf{97.21} $\pm$ \textbf{0.18} \\
\textbf{Pubmed} & Linear Standard VGAE & 29.98 $\pm$ 0.41 & \textit{29.05} $\pm$ \textit{0.20} & \textit{29.51} $\pm$ \textit{0.52} & \textit{28.50} $\pm$ \textit{0.36} & \textbf{97.12} $\pm$ \textbf{0.20} & \textit{97.20} $\pm$ \textit{0.17} \\ 
\textbf{with}  & Louvain & 20.06 $\pm$ 0.27 & 10.34 $\pm$ 0.99 & 16.71 $\pm$ 0.46 & 8.32 $\pm$ 0.79 & -- & -- \\
\textbf{features} & \underline{Best other baseline:} &  &  &  &  &  & \\
& VGAECD-OPT & \textbf{32.47}\textsuperscript{\ref{snap}} $\pm$ \textbf{0.45} & \textit{29.09} $\pm$ \textit{0.42} & \textit{29.46} $\pm$ \textit{0.52} & \textit{28.43} $\pm$ \textit{0.61} & 94.27 $\pm$ 0.33 & 94.53 $\pm$ 0.36 \\
\midrule
& Linear Modularity-Aware VGAE & \textbf{48.55} $\pm$ \textbf{0.18} & \textbf{22.21} $\pm$ \textbf{0.39} & \textbf{46.10} $\pm$ \textbf{0.29} & \textbf{20.24} $\pm$ \textbf{0.41} & \textbf{95.76} $\pm$ \textbf{0.17} & \textbf{96.31} $\pm$ \textbf{0.12} \\
& Linear Standard VGAE & 46.07 $\pm$ 0.54 & 20.01 $\pm$ 0.90 & 43.38 $\pm$ 0.37 & 18.02 $\pm$ 0.66 & \textit{95.55} $\pm$ \textit{0.22} & \textit{96.30} $\pm$ \textit{0.18} \\ 
\textbf{Cora-Larger} & Louvain & 44.72 $\pm$ 0.50 & 19.46 $\pm$ 0.66 & 43.41 $\pm$ 0.52 & 19.29 $\pm$ 0.68 & -- & -- \\
& \underline{Best other baseline:}&  &  &  &  &  & \\
& DVGAE & 46.63 $\pm$ 0.56 & 20.72 $\pm$ 0.96 & 43.48 $\pm$ 0.61 & 18.45 $\pm$ 0.67 & 94.97 $\pm$ 0.23 & 95.98 $\pm$ 0.21 \\
\midrule
& Linear Modularity-Aware VGAE & \textbf{36.02} $\pm$ \textbf{0.13} & \textbf{8.12} $\pm$ \textbf{0.06} & \textbf{35.85} $\pm$ \textbf{0.20} & \textbf{8.06} $\pm$ \textbf{0.11} & \textit{82.34} $\pm$ \textit{0.38} & \textit{86.76} $\pm$ \textit{0.41} \\
& Linear Standard VGAE  & 35.01 $\pm$ 0.21 & 7.88 $\pm$ 0.15 & 30.79 $\pm$ 0.21 & 6.50 $\pm$ 0.13 & 80.11 $\pm$ 0.35 & 83.40 $\pm$ 0.36 \\ 
\textbf{SBM} & Louvain & \textit{36.00} $\pm$ \textit{0.15} & \textit{8.10} $\pm$ \textit{0.15} & \textit{35.84} $\pm$ \textit{0.18} & \textit{8.03} $\pm$ \textit{0.09} & -- & -- \\
& \underline{Best other baseline:} &  &  &  &  &  & \\
& DVGAE & \textit{35.90} $\pm$ \textit{0.18} & \textit{8.07} $\pm$ \textit{0.15} & 35.53 $\pm$ 0.23 & \textit{7.95} $\pm$ \textit{0.19} & \textbf{82.59} $\pm$ \textbf{0.36} & \textbf{87.08} $\pm$ \textbf{0.40} \\
\midrule
& GCN-Based Modularity-Aware VGAE & \textbf{21.64} $\pm$ \textbf{0.18} & \textbf{13.19} $\pm$ \textbf{0.09} & \textbf{19.10} $\pm$ \textbf{0.21} & \textbf{12.00} $\pm$ \textbf{0.17} & \textbf{85.40} $\pm$ \textbf{0.14} & \textit{86.38} $\pm$ \textit{0.15} \\
& GCN-Based Standard VGAE & 15.79 $\pm$ 0.32 & 9.75 $\pm$ 0.21 & 13.98 $\pm$ 0.35 & 8.81 $\pm$ 0.32 & \textit{85.37} $\pm$ \textit{0.12} & \textbf{86.41} $\pm$ \textbf{0.11} \\ 
\textbf{Deezer-Album} & Louvain & 19.81 $\pm$ 0.19 & 12.21 $\pm$ 0.09 & 17.68 $\pm$ 0.20 & 11.02 $\pm$ 0.13 & -- & -- \\
& \underline{Best other baseline:}&  &  &  &  &  & \\
& node2vec & 20.03 $\pm$ 0.24 & 12.20 $\pm$ 0.19 & 18.34 $\pm$ 0.29 & 11.27 $\pm$ 0.28 & 83.51 $\pm$ 0.17 & 84.12 $\pm$ 0.15 \\
\bottomrule
\end{tabular}}
\end{small}
\end{center}
\end{table}

\begin{figure*}[t]
\centering
\resizebox{1.0\textwidth}{!}{
  \subfigure[Cora]{
  \scalebox{0.48}{\includegraphics{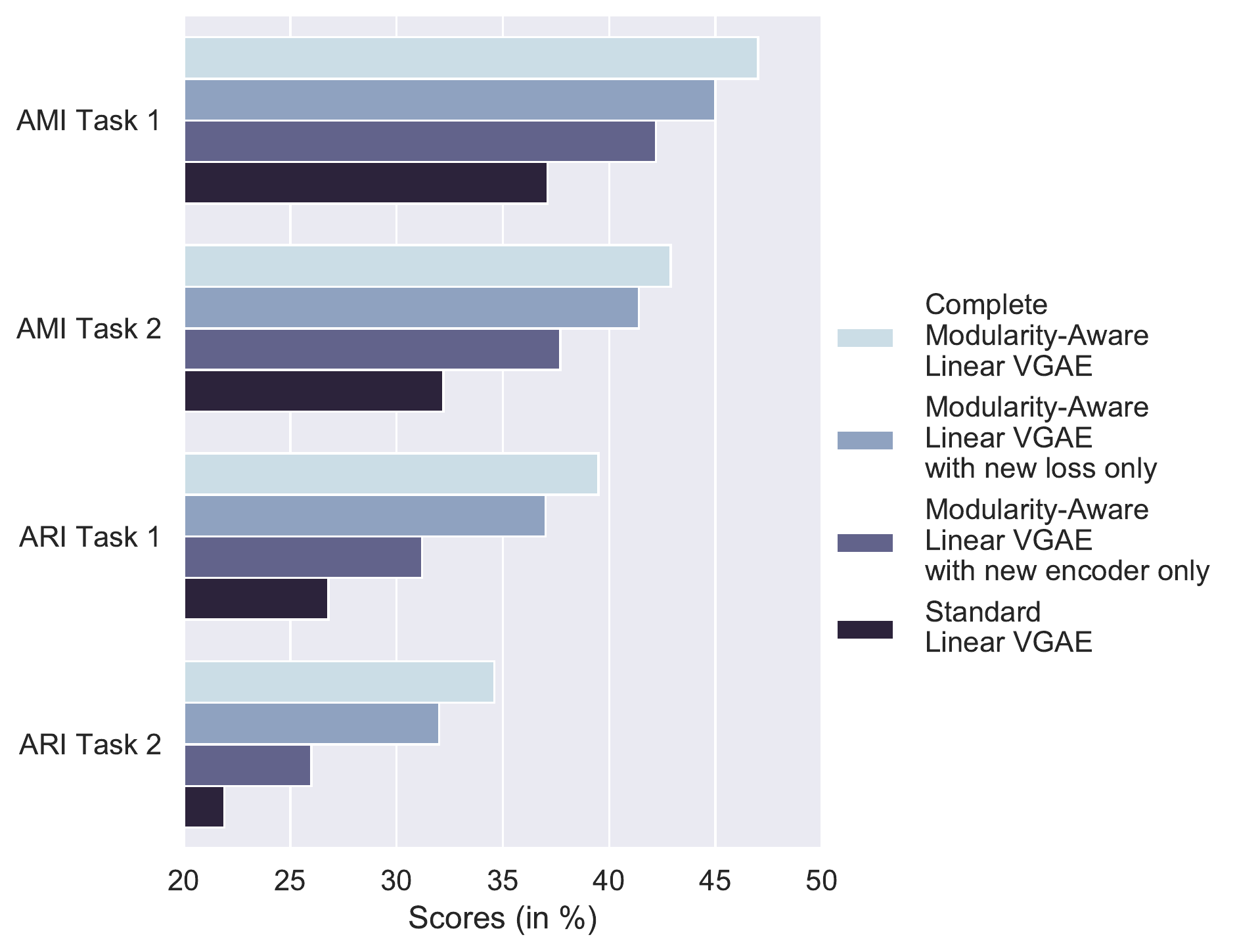}}}\subfigure[Deezer]{
  \scalebox{0.48}{\includegraphics{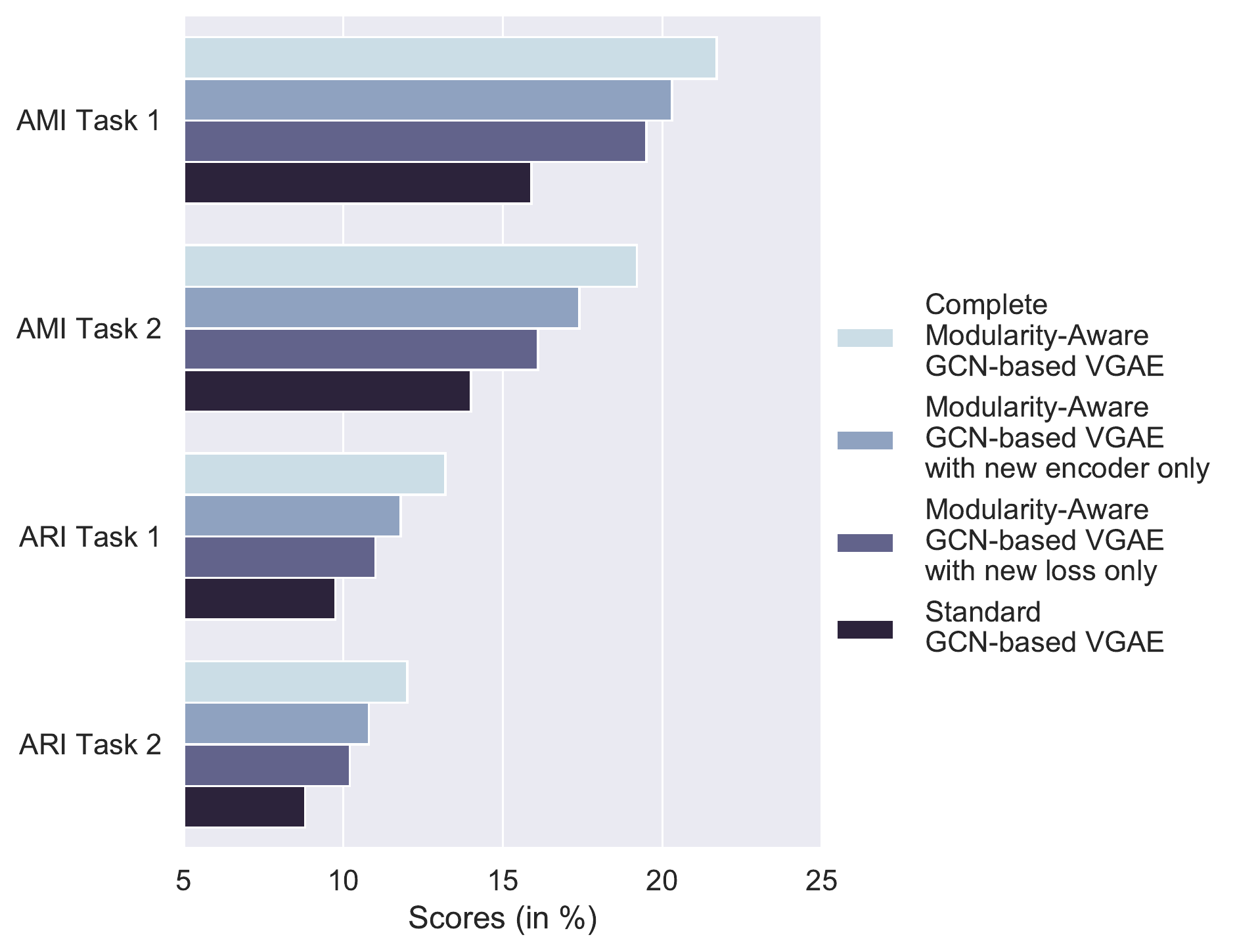}}}}
  \caption{Comparison of two ``complete'' Modularity-Aware VGAE, trained on (a) featureless Cora and (b) Deezer-Album with variants of these models only leveraging the new \textit{encoder} from Section~\ref{s32}, or the new \textit{loss} from Section~\ref{s33}. We observe that incorporating any of these two components improves community detection on these two graphs w.r.t. Standard VGAE. Moreover, using both components \textit{simultaneously} leads to the best results. Note, the optimal pair ($\lambda$, $\beta$) for complete models might differ from the optimal $\lambda$ (resp. $\beta$) when incorporating the new encoder (resp. loss) only.}
    \label{fig:ablation}
\end{figure*}

\subsubsection{Results for Joint Link Prediction and Community Detection (Task 2)}
\label{s423}

We now study results for Task 2, the joint link prediction and community detection task described in Section~\ref{s412}, and performed on incomplete versions of the graph datasets where 15\% of edges are randomly masked. Results for this task (i.e., AMI and ARI scores from community detection on incomplete graphs, and AUC and AP scores from link prediction on test sets) are reported in the four rightmost columns of Tables~\ref{tab:coraresults},~\ref{tab:pubmedresults}~and~\ref{tab:allresults}. AMI and ARI scores from this task are also included in the ablation study in Figure~\ref{fig:ablation}.

We draw several conclusions from these additional experiments. First of all, we confirm that AMI and ARI scores decrease slightly w.r.t. Task 1, which was expected due to the absence of part of the graph structure during the training phase (e.g., from 46.65\% to 42.86\% mean AMI, for Linear Modularity-Aware VGAE on Cora in Table~\ref{tab:coraresults}). Nonetheless, the ranking of the different methods under consideration remains consistent with Task~1. In particular, our Modularity-Aware models still outperform baselines in cases where they were already outperforming in Task~1 (e.g., with a top 43.48\% mean AMI for Linear Modularity-Aware GAE on Cora  in Table~\ref{tab:coraresults}, vs 28.41\% for the standard Linear GAE and 39.09\% for the Louvain method).

Besides these confirmations, the main goal of Task 2 was to address our second research question stated in Section \ref{introduction}: \textit{Do improvements on the community detection task necessarily incur a loss in the link prediction performance or can they be jointly addressed with high accuracy?} 
Indeed, as GAE and VGAE were originally recognized as effective link prediction methods (see Section~\ref{s2}), improving community detection while deteriorating link prediction might be undesirable, especially in problems requiring effective node embeddings for multitask applications (see the Deezer example from Section~\ref{s411}). By design, our proposed encoders, losses, and selection procedure specifically aimed to avoid such a deterioration.

Empirical results confirm the ability of Modularity-Aware models to preserve comparable link prediction performances w.r.t standard GAE and VGAE. For instance, in Table~\ref{tab:coraresults}, our Linear Modularity-Aware GAE reaches mean AUC and AP scores of 87.18\% and 88.53\%, respectively, which is comparable (or even slightly better in the case of AUC) to Linear Standard GAE (84.46\% and 88.42\%, respectively). We reach similar results for the three other Modularity-Aware models in Table~\ref{tab:coraresults}, while scores of several baselines deteriorate by a few points. Overall, all other Modularity-Aware models reported in the complete Table~\ref{tab:allresults} achieve comparable (either better, identical, or only a few points below) AUC and AP scores w.r.t. their GAE or VGAE counterparts.

\subsubsection{Limitations and Possible Extensions}
\label{s424}

As observed in Section \ref{s422}, empirical gains of Modularity-Aware models are less pronounced on graphs equipped with node features (although non-null in 2/3 cases). In Table~\ref{tab:allresults}, for Cora with features, we ``only'' report increases in Task 1 AMI scores of +2.63 points w.r.t. the corresponding standard VGAE model. For comparison, in the featureless case, we reported increases in Task 1 AMI scores of +11.53 points. Furthermore, our Modularity-Aware VGAE does not surpass the standard VGAE at all on Pubmed with features. We hypothesize that the incorporation of Louvain-based prior clusters in Modularity-Aware models might be less relevant for these attributed graphs. Indeed, while Louvain only leverages the graph structure for node clustering, node features seem to play a strong role in the identification of ground-truth communities for these graphs. 

Nevertheless, we recall that the use of the Louvain method was made without loss of generality. As explained in Section~\ref{s32}, our revised message passing operators would remain valid for other methods that alternatively derive a prior clustering signal. Future experiments on such alternatives (e.g., methods processing node features) could therefore improve community detection performances on these three attributed graphs. Overall, the empirical performance of our method directly depends on the quality of the underlying prior clustering method used to compute $\Ac$ and $\Ao$, which should therefore be carefully~selected. 

More broadly, our framework could also straightforwardly incorporate alternative encoders (besides linear and multi-layer GCN encoders), alternative decoders (e.g., decoders replacing inner~products by more refined graph reconstruction methods \cite{salha2019-2,aaai20}) and alternative losses (for instance, as explained in Section~\ref{s331}, our modularity-inspired regularizer could be optimized in conjunction with the ELBO loss from VGAECD/VGAECD-OPT \cite{choong2018learning,choong2020optimizing} involving Gaussian mixtures). One could also replace our $k$-means step, to cluster vectors $z_i,$ by another method such as $k$-medoids \cite{park2009simple} or spectral clustering \cite{von2007tutorial} (although our preliminary experiments in this direction did not reach significantly better results). Future work considering such alternative architectures for Modularity-Aware GAE and VGAE could definitely lead to the improvement of our models.

Lastly, we also plan to extend Modularity-Aware GAE and VGAE to dynamic graphs. Indeed, while our work considered fixed graph structures, real-world graphs often evolve over time. For instance, on the Deezer service, new albums should regularly appear in the musical catalog. New nodes will therefore appear in the Deezer-Album graph. Capturing such changes, e.g., through dynamic graph embedding methods \cite{hamilton2020graph}, might permit learning more refined representations and providing effective dynamic community detection.

\subsubsection{Towards More Comparisons to Non-GAE/VGAE Methods}
\label{s425}

Our experiments mainly aim to compare our proposed Modularity-Aware GAE and VGAE models to the Louvain method and to other autoencoders, i.e., 1) the standard GAE and VGAE models, and 2) alternative methods that improve community detection with GAE and VGAE. Giving us a total of 12 baseline models that we compare against. 
Nonetheless, while our scope is limited to autoencoders, we acknowledge that Modularity-Aware GAE and VGAE could also be seen as multi-task models, as we train them to perform community detection and link prediction simultaneously. As a consequence, in future research, it would be interesting to further study these models through the lens of the multi-task learning framework and thus compare them to other graph-based multi-task learning methods \cite{wu2022mtgcn,wang2020multi}.

In addition, there are also interesting parallels between our proposed method and pre-training methods for self-supervised learning (SSL) \cite{xie2021self,liu2021graph,wu2021self}, that might deserve further investigations in future research. There are, however, also several fundamental differences to be pointed out. More specifically, our proposed loss, as is common in the ``joint learning'' methods in the SSL literature, is also a combination of two loss terms stemming from different learning tasks. However, our method does not involve the distinction between a self-supervised or auxiliary and a downstream or main task; both tasks performed by our method are equally relevant and important. Specifically, with regards to the paradigm of pre-training an encoder on a self-supervised task and then fine-tuning it on a downstream task, our method is trained in a single-stage and does not involve pre-training of any kind. Also, in spirit, our effort in the Modularity-Aware GAE/VGAE is similar to the works of \citet{hu2019pre} incorporating clustering information in their cluster-preserving models, the self-supervised graph partition GCN by \citet{you2020does} and the models predicting degree, local node importance, and local clustering coefficient in \citet{jin2020self}. Also, the works addressing label scarcity problems by generating ``pseudo-labels'' are conceptually similar to our approach since pseudo-labeling often relies on clustering \cite{xie2021self,li2018deeper,sun2020multi}. However, unlike these works, we do not use the community labels in the loss function, but rather optimize the modularity of a clustering obtained from our embedding vectors. In addition, our models do not aim to predict the community membership of a node from its embedding vector, or any other node-level centrality measure such as the degree. Instead, they aim to estimate the likelihood of an edge between two nodes from their two embedding vectors. A more in-depth comparison of our method with this related literature has the potential to yield further improvements of the Modularity-Aware GAE and VGAE that we propose here.

\section{Conclusion}
\label{s5}

In this paper, we introduce a well-performing approach for simultaneous link prediction and community detection compatible with both the GAE and VGAE frameworks. 
This approach is based on a rigorous diagnosis of the shortcomings of existing approaches to this problem. 
Our approach takes advantage of two elements: A theoretically grounded variant of message passing operator in the GAE and VGAE encoders, that incorporates prior cluster information, and the addition of a modularity-based loss component to the usual existing loss functions. Both elements were experimentally shown to have an individual impact on the community detection performances. We furthermore introduce a revised hyperparameter selection procedure specifically designed for joint link prediction and community detection. We experimentally demonstrated the effectiveness of the approach on multiple datasets, including common benchmark graphs and large scale real-world ones, both with node features and, crucially featureless graphs: The results are consistently on par or better than the state-of-the-art for both link prediction and community detection.  This overcomes the common pitfall of most state-of-the-art methods that usually show high accuracy for either link prediction or community detection but not for both simultaneously.
Future research directions include the in-depth analysis of the utilization of the two loss terms and the relationship among them. Specifically, combining a dot-product similarity as a metric for the link prediction term and the Euclidean distance for the modularity-inspired term exhibited strong performance and a study on their behavior appears to be insightful.

\clearpage

\appendix
\section{Proofs of the Propositions in Section \ref{s324}}\label{app:spectral_results}
We begin by introducing several theoretical results which we will use in the majority of our proofs. The specific formulations of the results in this section, i.e., Definition \ref{def:matrix_poly} and Propositions \ref{thm:poly_transf}, \ref{thm:union_of_spectra} and \ref{thm:similar}, are adapted from \citet{Lutzeyer2020}.

When considering regular graphs, i.e., graphs containing only nodes of equal degree, their different graph representation matrices, such as the adjacency matrix, Laplacian matrices, and the GCN's message passing operator, are related via polynomial matrix transformations. These are now defined. 

\begin{definition} \label{def:matrix_poly}
\citet[p.~36]{Horn1985} define the evaluation of a polynomial $p(x) = c_l x^l + c_{l-1} x^{l-1} +\ldots + c_1 x + c_0$ at a matrix $\mati$ as
$$
p(\mati) =c_l \mati^l + c_{l-1} \mati^{l-1} +\ldots + c_1 \mati + c_0 I.
$$
\end{definition}

\citet{Horn1985} further discuss the influence of a polynomial matrix transformation on the matrices' eigenvalues and eigenvectors, which we reproduce below. 

\begin{proposition} \label{thm:poly_transf}
\citep{Horn1985} Let $p(\cdot)$ be a given polynomial. If $\evali$ is an eigenvalue of $\mati \in \mathbb{R}^{n \times n}$, while $\eveci$ is an associated eigenvector, then $p(\evali)$ is an eigenvalue of the matrix $p(\mati)$ and $\eveci$ is an eigenvector of $p(\mati)$ associated with $p(\evali)$.
\end{proposition}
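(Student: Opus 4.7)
The plan is to establish the result by a short induction on the degree, reducing the claim to the multiplicative behaviour of eigenvectors under repeated application of $\mati$. Concretely, I would first prove the auxiliary fact that $\mati^k \eveci = \evali^k \eveci$ for every integer $k \geq 0$, and then combine this with the linearity of the polynomial evaluation from Definition \ref{def:matrix_poly} to conclude.

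For the auxiliary fact, I would proceed by induction on $k$. The base case $k = 0$ is immediate, since $\mati^0 = I$ and $\evali^0 = 1$ by convention, so $\mati^0 \eveci = \eveci = \evali^0 \eveci$. For the inductive step, assuming $\mati^k \eveci = \evali^k \eveci$, I would write
\begin{equation*}
\mati^{k+1} \eveci = \mati \bigl( \mati^k \eveci \bigr) = \mati \bigl( \evali^k \eveci \bigr) = \evali^k (\mati \eveci) = \evali^k \evali \eveci = \evali^{k+1} \eveci,
\end{equation*}
using only the eigenvector relation $\mati \eveci = \evali \eveci$ and the fact that scalars commute with matrices.

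Given this, the main statement follows by linearity. Writing $p(x) = \sum_{j=0}^{l} c_j x^j$, Definition \ref{def:matrix_poly} gives $p(\mati) = \sum_{j=0}^{l} c_j \mati^j$, and therefore
\begin{equation*}
p(\mati) \eveci = \sum_{j=0}^{l} c_j \mati^j \eveci = \sum_{j=0}^{l} c_j \evali^j \eveci = \Bigl( \sum_{j=0}^{l} c_j \evali^j \Bigr) \eveci = p(\evali) \eveci,
\end{equation*}
which is exactly the claim that $\eveci$ is an eigenvector of $p(\mati)$ associated with the eigenvalue $p(\evali)$.

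Since this is a classical result that appears in \citet{Horn1985}, there is no real obstacle; the only point requiring slight care is to remember that the constant term $c_0$ is interpreted as $c_0 I$ in $p(\mati)$, so that it contributes $c_0 \eveci = c_0 \evali^0 \eveci$ to the sum above, consistent with the convention $\mati^0 = I$ used in the induction base case. Given that the proof is essentially a two-line verification, it would be reasonable to either present it as above or simply cite \citet[p.~36]{Horn1985} and omit the calculation.
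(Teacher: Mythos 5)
Your proof is correct and complete: the induction establishing $\mati^k \eveci = \evali^k \eveci$ followed by linearity is the standard argument, and your remark about the constant term $c_0 I$ contributing $c_0\eveci$ is the right point of care. The paper itself gives no proof of this proposition and simply cites \citet{Horn1985}, so there is no alternative route to compare against; your write-up supplies exactly the verification the citation stands in for.
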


Since we consider graphs consisting of several connected components in a multitude of our propositions, we now provide a theorem which relates the eigenvalues and eigenvectors of the whole graph to those of its connected components. 

\begin{proposition} \label{thm:union_of_spectra}
Let $\mathcal{G}$ be a graph with corresponding adjacency matrix $A$ and assume $\mathcal{G}$ to consist of $\ncluster$ connected components each with corresponding adjacency matrix $A_\clustersubscript$ for $\clustersubscript\in\{1, \ldots, \ncluster\}.$ Then, the eigenvalues of $\agcn{A}$ are equal to the union of the eigenvalues of $\agcn{A_\clustersubscript}$ over $\clustersubscript\in\{1, \ldots, \ncluster\}.$ Further, a set of eigenvectors of $\agcn{A}$ can be constructed from the eigenvector sets of $\agcn{A_\clustersubscript}$ for $\clustersubscript\in\{1, \ldots, \ncluster\}.$ 
\end{proposition}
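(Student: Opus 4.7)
The plan is to prove the statement by showing that, after an appropriate relabelling of the nodes, the GCN message passing operator $\agcn{A}$ has a block-diagonal structure with blocks exactly $\agcn{A_\clustersubscript}$, and then to invoke the standard fact that the spectrum of a block-diagonal matrix is the union of the spectra of its blocks. Since permuting nodes corresponds to a similarity transformation by a permutation matrix $P$, it preserves eigenvalues and transforms eigenvectors in a controlled manner, so I may without loss of generality assume the nodes are ordered so that the connected components occupy consecutive index ranges.

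First, I would observe that because $\mathcal{G}$ decomposes into $\ncluster$ connected components, there is no edge between nodes of different components. Hence, in the chosen ordering, the adjacency matrix takes the block form $A = \mathrm{diag}(A_1, \ldots, A_\ncluster)$. The degree matrix is diagonal, so it trivially decomposes as $D = \mathrm{diag}(D_1, \ldots, D_\ncluster)$ where $D_\clustersubscript = \mathrm{diag}(A_\clustersubscript \mathbf{1}_{n_\clustersubscript})$ is the degree matrix associated to the component $A_\clustersubscript$. Adding the identity and taking inverse square roots preserves the block-diagonal structure, so $(D+I_n)^{-1/2} = \mathrm{diag}((D_1+I_{n_1})^{-1/2}, \ldots, (D_\ncluster+I_{n_\ncluster})^{-1/2})$, and similarly $A + I_n$ is block-diagonal with blocks $A_\clustersubscript + I_{n_\clustersubscript}$.

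Next, since the product of block-diagonal matrices with matching block structure is itself block-diagonal with blocks equal to the products of the corresponding blocks, I conclude directly from Equation~\eqref{eq:norm} that
\begin{equation*}
\agcn{A} = \mathrm{diag}\bigl(\agcn{A_1}, \ldots, \agcn{A_\ncluster}\bigr).
\end{equation*}
From this block-diagonal form, the characteristic polynomial of $\agcn{A}$ factors as the product of the characteristic polynomials of the blocks, so its multiset of eigenvalues is exactly the union (with multiplicity) of the eigenvalue multisets of the $\agcn{A_\clustersubscript}$. Moreover, if $u^{(\clustersubscript)} \in \mathbb{R}^{n_\clustersubscript}$ is an eigenvector of $\agcn{A_\clustersubscript}$ with eigenvalue $\eta$, then the vector $\tilde{u} \in \mathbb{R}^n$ obtained by placing $u^{(\clustersubscript)}$ in the coordinates corresponding to component $\clustersubscript$ and zeros elsewhere is an eigenvector of $\agcn{A}$ with the same eigenvalue $\eta$. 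Taking the union of such zero-padded eigenvectors over all components and all blocks produces a full eigenbasis of $\agcn{A}$.

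Finally, I would note that if the proposition is stated for a graph whose nodes are not already ordered by component, it suffices to conjugate the argument above by the permutation matrix $P$ realising the reordering: $\agcn{A} = P^T \agcn{PAP^T} P$, so spectra coincide and eigenvectors of the original operator are obtained by applying $P^T$ to the zero-padded eigenvectors described above. I do not anticipate a substantive obstacle here; the only care needed is to verify that the block decomposition passes cleanly through the $(D+I_n)^{-1/2}$ factors, which follows because these matrices are diagonal and therefore automatically respect any block structure of the node set.
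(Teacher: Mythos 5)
Your proof is correct and follows essentially the same route as the paper's: both establish that $\agcn{A}$ is block-diagonal with blocks $\agcn{A_\clustersubscript}$ (using that adding $I_n$ and multiplying by the diagonal matrix $(D+I_n)^{-1/2}$ preserve block-diagonality), invoke the factorisation of the characteristic polynomial over the blocks, and construct eigenvectors of $\agcn{A}$ by zero-padding eigenvectors of the components. Your additional care about the permutation conjugation and the decomposition of the degree matrix only makes explicit what the paper leaves implicit.
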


\begin{proof}
For a graph $\mathcal{G}$ consisting of several connected components there exists a node ordering such that its corresponding adjacency matrix $A$ is block diagonal. Since, the addition of the identity matrix $I_n$ and the multiplication by a diagonal matrix $(D+I_n)^{-\frac{1}{2}}$ does not affect the block diagonal property of a matrix, the matrix $\agcn{A}$ is also block diagonal. 

Now, the characteristic equation of block diagonal matrices factorizes into polynomials corresponding to the different blocks \citep[p.~291]{Bernstein2009}. Therefore, the set of eigenvalues of any block diagonal matrix is equal to the union of the set of eigenvalues of matrices containing only the blocks. Consequently,  the eigenvalues of $\agcn{A}$ are equal to the union of the eigenvalues of $\agcn{A_\clustersubscript}$ over $\clustersubscript\in\{1, \ldots, \ncluster\}.$

Furthermore, any eigenvector of a given connected component, described by $\agcn{A_\clustersubscript},$ can be modified to be an eigenvector of $\agcn{A}$ by the insertion of zero values in all entries corresponding to nodes not contained in the connected component described by $\agcn{A_\clustersubscript}.$
\end{proof}

To allow us to relate the spectra and eigenvectors of the well studied adjacency matrix $A$ to the more novel GCN message passing operator $\agcn{A}$ we frequently make use of the matrix similarity relationship. The consequences of a matrix similarity relationship between matrices on their eigenvalues and eigenvectors is discussed in Proposition \ref{thm:similar}.

\begin{proposition}\label{thm:similar}
\citep[pp.~45,60]{Horn1985} If two matrices $\mati$ and $\matii$ are related via a nonsingular matrix $S$ as follows, $\mati = S^{-1}\matii S.$ Then, $\mati$ and $\matii$ have the same multiset of eigenvalues. Further, for eigenvector $v$ with corresponding eigenvalue $\evali$ of $\mati$ gives rise to an eigenvector $Sv$ of $\matii$ with equal corresponding eigenvalue $\evali.$
\end{proposition}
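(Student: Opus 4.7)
}}

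The plan is to handle the two assertions in sequence, first showing equality of the multisets of eigenvalues, then verifying the claimed correspondence between eigenvectors. Both arguments are short and rely only on the multiplicativity of the determinant and the nonsingularity of $S$.

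First, I would establish that $\mati$ and $\matii$ have the same characteristic polynomial. Starting from $\mati = S^{-1}\matii S$, I would write
\begin{equation*}
\det(\mati - \evali I) = \det\bigl(S^{-1}\matii S - \evali S^{-1} S\bigr) = \det\bigl(S^{-1}(\matii - \evali I) S\bigr),
\end{equation*}
and then apply multiplicativity of the determinant together with $\det(S^{-1}) = 1/\det(S)$ to conclude that $\det(\mati - \evali I) = \det(\matii - \evali I)$. Since the multiset of eigenvalues of any square matrix is determined by its characteristic polynomial (counted with algebraic multiplicity), this yields the first claim.

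Next, for the eigenvector correspondence, I would take an eigenpair $(\evali, v)$ of $\mati$, i.e., $\mati v = \evali v$ with $v \neq 0$. Substituting the similarity relation gives $S^{-1}\matii S v = \evali v$, and left-multiplying both sides by $S$ yields $\matii (S v) = \evali (S v)$. The only remaining point to verify is that $Sv$ is a genuine eigenvector, i.e., $Sv \neq 0$; this is immediate because $S$ is nonsingular and $v \neq 0$, so $\ker(S) = \{0\}$ forces $Sv \neq 0$.

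There is no real obstacle here; the proposition is a classical textbook fact and the only subtle point is remembering to note the nonsingularity of $S$ when asserting $Sv\neq 0$, which justifies calling $Sv$ an eigenvector rather than merely a vector satisfying the eigenvalue equation. I would keep the writeup to a few lines, citing \citet[pp.~45,60]{Horn1985} for completeness.
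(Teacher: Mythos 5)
Your argument is correct: the paper itself gives no proof of this proposition, merely citing \citet[pp.~45,60]{Horn1985}, and your two steps (equality of characteristic polynomials via multiplicativity of the determinant, and the substitution $\mati v = \evali v \Rightarrow \matii(Sv) = \evali(Sv)$ with $Sv \neq 0$ guaranteed by the nonsingularity of $S$) constitute the standard textbook proof of exactly this fact. Nothing is missing.
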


We can now begin to prove the propositions discussed in Section \ref{s324}.

\subsection{Proof of Proposition \ref{thm:Acspectrum}} \label{app:proof_Acspectrum}

\textbf{}\begin{proof}
\citet{Teke2017}, among many others, state that the unnormalised Laplacian matrix $L=D-A$ corresponding to a complete graph has eigenvalue $0$ with multiplicity $1$ and eigenvalue $n$ with multiplicity $n-1.$ 
Furthermore, the eigenspace corresponding to the eigenvalue $n$ is spanned by a 2-sparse set of orthogonal eigenvectors \citep{Teke2017}.
In addition, the eigenvector corresponding to the eigenvalue $0$ of the unnormalised graph Laplacian describing a connected graph is well known to be the constant eigenvector \citep{von2007tutorial}. Since the complete graph is regular, its degree matrix is a multiple of the identity matrix, i.e., $D=(n-1)I_n.$ Therefore, for complete graphs the following relationship holds $\agcn{A} = I_n - \frac{1}{n}L.$ Hence, from Proposition \ref{thm:poly_transf} the eigenvectors of $\agcn{A}$ and $L$ are equal and $\agcn{A}$ has the eigenvalue $1$ with multiplicity 1 and eigenvalue $0$ with multiplicity $n-1.$

Now, since $\Ac$ corresponds to a graph composed of several complete graphs, we can invoke Proposition \ref{thm:union_of_spectra} to construct the spectrum and eigenvectors of $\agcn{\Ac}$ from the the spectrum and eigenvectors of $\agcn{A}$ corresponding to a complete graph, which we just derived. Consequently, $\agcn{\Ac}$ has eigenvalues $\{\{1\}^{\ncluster}, \{0\}^{n-\ncluster}\}$ and a set of eigenvectors as described in the statement of Proposition \ref{thm:Acspectrum}.




\end{proof}

\subsection{Proof of Proposition  \ref{thm:spectral_relation_A_Ac_AAc}} \label{app:proof_spectral_relation_A_Ac_AAc}

\begin{proof}
Proposition \ref{thm:union_of_spectra} can be used to extend the required result from one connected component of $\mathcal{G}$ to the full graph. Therefore, we consider only one connected component on the graph from now on. 
Let $\Acomp$ denote the adjacency matrix of this connected component, containing nodes of degree $b,$ and $\Accomp$ denote the corresponding complete component of $\Ac,$ containing nodes of degree $\ncomp-1.$ Then, the adjacency matrix of our connected component under consideration $\Acomp+\weight \Accomp$ is related to $\agcn{\Acomp + \weight \Accomp}$ as follows, $$\agcn{\Acomp+\weight \Accomp} = \frac{1}{b+\lambda (\ncomp-1) +1} \left(\Acomp+\weight \Accomp+I\right).$$ 
Therefore, from Proposition \ref{thm:poly_transf} it follows that $\Acomp+\weight \Accomp$ and $\agcn{\Acomp + \weight \Accomp}$ share eigenvectors. Similarly, the relations $\agcn{\Acomp} = \frac{1}{b+1} (\Acomp+I)$ and $\agcn{\Accomp} = \frac{1}{\ncomp} (\Accomp+I)$ together with Proposition \ref{thm:poly_transf} allow us to establish that both $\agcn{\Acomp}$ and $\Acomp$ as well as $\agcn{\Accomp}$ and $\Accomp$ each have a common set of eigenvectors.

We now make use of a result by \citet[p.~25]{Godsil1993}, which states that the adjacency matrix of a graph commutes with the matrix of all ones, i.e., $\Ac+I_n,$ if and only if the graph under consideration is regular. Further, a family of matrices is a commuting family if and only if they are simultaneously diagonalizable, i.e., they share a set of eigenvectors \citep[p.~52]{Horn1985}.
Hence, $\Acomp$ and $\Accomp$ share a set of eigenvectors. Furthermore, this shared set of eigenvectors is also a valid set of eigenvectors for $\Acomp+\weight \Accomp,$ which, in conjunction with the above polynomial relationships, establishes the needed eigenvector relation.

In addition the eigenvalues of the sum of two simultaneously diagonalizable matrices $\mati, \matii$ with eigenvalues denoted by $\evali$ and $\evalii,$ respectively, are related \citep[p.~54]{Horn1985}  as follows
\begin{equation}\label{eq:diagonalisable_eval_relation}
\mathcal{S}(\mati +\matii) = \{\evali_1 + \evalii_{s(1)}, \ldots, \evali_n + \evalii_{s(n)}\},
\end{equation}
for some permutation $s(\cdot)$ defined on the set $\{1, \ldots, n\}.$ Since $\Acomp$ and $\Accomp+I_n$ are simultaneously diagonalizable their eigenvalues follow the relation in Equation \eqnref{eq:diagonalisable_eval_relation}. Now the above polynomial relationships of the GCN message passing operators to the corresponding adjacency matrices gives us the desired eigenvalue result and establish that $g_1(\mu) = \frac{b+1}{b+\lambda (\ncomp-1) +1}(\mu-1)$ and $g_2(\eta) = \frac{\lambda(\ncomp-1)+1}{b+\lambda (\ncomp-1) +1}(\eta-1)+1.$  










\end{proof}

\subsection{Proof of Proposition \ref{thm:Aospectrum}}\label{app:proof_Aospectrum}
\begin{proof}

\citet[p. 453]{Hoory2006} state that for $\dregc$-regular graphs the largest eigenvalue of the corresponding adjacency matrix $\Acomp$ equals $\dregc$ and the corresponding eigenvector is constant. Now the relation $\agcn{\Acomp} = \frac{1}{o+1}\left(\Acomp +I_n\right)$ in conjunction with Proposition \ref{thm:poly_transf} establish that the largest eigenvalue of $\agcn{\Acomp}$ equals 1 with a corresponding constant eigenvector. This spectrum and eigenvectors can be extended to the matrix $\agcn{\Ao}$ corresponding to a graph of several $\dregc$-regular connected components using Proposition \ref{thm:union_of_spectra}. The comparison of the derived spectrum and eigenvectors to those derived in Proposition \ref{thm:Acspectrum} completes this proof.


\end{proof}

\clearpage

\bibliography{references}

\begin{thebibliography}{111}
\expandafter\ifx\csname natexlab\endcsname\relax\def\natexlab#1{#1}\fi
\providecommand{\url}[1]{\texttt{#1}}
\providecommand{\href}[2]{#2}
\providecommand{\path}[1]{#1}
\providecommand{\DOIprefix}{doi:}
\providecommand{\ArXivprefix}{arXiv:}
\providecommand{\URLprefix}{URL: }
\providecommand{\Pubmedprefix}{pmid:}
\providecommand{\doi}[1]{\href{http://dx.doi.org/#1}{\path{#1}}}
\providecommand{\Pubmed}[1]{\href{pmid:#1}{\path{#1}}}
\providecommand{\bibinfo}[2]{#2}
\ifx\xfnm\relax \def\xfnm[#1]{\unskip,\space#1}\fi
\bibitem[{Hamilton et~al.(2017)Hamilton, Ying, and
  Leskovec}]{hamilton2017representation}
\bibinfo{author}{W.~L. Hamilton}, \bibinfo{author}{R.~Ying},
  \bibinfo{author}{J.~Leskovec},
\newblock \bibinfo{title}{Representation learning on graphs: Methods and
  applications},
\newblock \bibinfo{journal}{IEEE Data Engineering Bulletin}
  (\bibinfo{year}{2017}).
\bibitem[{Hamilton(2020)}]{hamilton2020graph}
\bibinfo{author}{W.~L. Hamilton},
\newblock \bibinfo{title}{Graph representation learning},
\newblock \bibinfo{journal}{Synthesis Lectures on Artifical Intelligence and
  Machine Learning} \bibinfo{volume}{14} (\bibinfo{year}{2020})
  \bibinfo{pages}{1--159}.
\bibitem[{Zhang et~al.(2018)Zhang, Yin, Zhu, and Zhang}]{zhang2018network}
\bibinfo{author}{D.~Zhang}, \bibinfo{author}{J.~Yin}, \bibinfo{author}{X.~Zhu},
  \bibinfo{author}{C.~Zhang},
\newblock \bibinfo{title}{Network representation learning: A survey},
\newblock \bibinfo{journal}{IEEE transactions on Big Data}
  (\bibinfo{year}{2018}).
\bibitem[{Wu et~al.(2019)Wu, Pan, Chen, Long, Zhang, and
  Yu}]{wu2019comprehensive}
\bibinfo{author}{Z.~Wu}, \bibinfo{author}{S.~Pan}, \bibinfo{author}{F.~Chen},
  \bibinfo{author}{G.~Long}, \bibinfo{author}{C.~Zhang}, \bibinfo{author}{P.~S.
  Yu},
\newblock \bibinfo{title}{A comprehensive survey on graph neural networks},
\newblock \bibinfo{journal}{arXiv preprint arXiv:1901.00596}
  (\bibinfo{year}{2019}).
\bibitem[{Liben-Nowell and Kleinberg(2007)}]{liben2007link}
\bibinfo{author}{D.~Liben-Nowell}, \bibinfo{author}{J.~Kleinberg},
\newblock \bibinfo{title}{The link-prediction problem for social networks},
\newblock \bibinfo{journal}{Journal of the American society for information
  science and technology} \bibinfo{volume}{58} (\bibinfo{year}{2007})
  \bibinfo{pages}{1019--1031}.
\bibitem[{Kumar et~al.(2020)Kumar, Singh, Singh, and Biswas}]{kumar2020link}
\bibinfo{author}{A.~Kumar}, \bibinfo{author}{S.~S. Singh},
  \bibinfo{author}{K.~Singh}, \bibinfo{author}{B.~Biswas},
\newblock \bibinfo{title}{Link prediction techniques, applications, and
  performance: A survey},
\newblock \bibinfo{journal}{Physica A: Statistical Mechanics and its
  Applications} \bibinfo{volume}{553} (\bibinfo{year}{2020})
  \bibinfo{pages}{124289}.
\bibitem[{Blondel et~al.(2008)Blondel, Guillaume, Lambiotte, and
  Lefebvre}]{blondel2008louvain}
\bibinfo{author}{V.~D. Blondel}, \bibinfo{author}{J.-L. Guillaume},
  \bibinfo{author}{R.~Lambiotte}, \bibinfo{author}{E.~Lefebvre},
\newblock \bibinfo{title}{Fast unfolding of communities in large networks},
\newblock \bibinfo{journal}{Journal of Statistical Mechanics: Theory and
  Experiments} \bibinfo{volume}{2008} (\bibinfo{year}{2008})
  \bibinfo{pages}{P10008}.
\bibitem[{Malliaros and Vazirgiannis(2013)}]{malliaros2013clustering}
\bibinfo{author}{F.~D. Malliaros}, \bibinfo{author}{M.~Vazirgiannis},
\newblock \bibinfo{title}{Clustering and community detection in directed
  networks: A survey},
\newblock \bibinfo{journal}{Physics reports} \bibinfo{volume}{533}
  (\bibinfo{year}{2013}) \bibinfo{pages}{95--142}.
\bibitem[{Kipf et~al.(2020)}]{kipf2020phd}
\bibinfo{author}{T.~N. Kipf}, et~al.,
\newblock \bibinfo{title}{Deep learning with graph-structured representations},
\newblock \bibinfo{journal}{PhD Thesis, University of Amsterdam}
  (\bibinfo{year}{2020}).
\bibitem[{Wang et~al.(2017)Wang, Pan, Long, Zhu, and Jiang}]{wang2017mgae}
\bibinfo{author}{C.~Wang}, \bibinfo{author}{S.~Pan}, \bibinfo{author}{G.~Long},
  \bibinfo{author}{X.~Zhu}, \bibinfo{author}{J.~Jiang},
\newblock \bibinfo{title}{Mgae: Marginalized graph autoencoder for graph
  clustering},
\newblock \bibinfo{journal}{ACM Conference on Information and Knowledge
  Management}  (\bibinfo{year}{2017}).
\bibitem[{Kipf and Welling(2016)}]{kipf2016-2}
\bibinfo{author}{T.~N. Kipf}, \bibinfo{author}{M.~Welling},
\newblock \bibinfo{title}{Variational graph auto-encoders},
\newblock \bibinfo{journal}{NeurIPS Workshop on Bayesian Deep Learning}
  (\bibinfo{year}{2016}).
\bibitem[{Choong et~al.(2018)Choong, Liu, and Murata}]{choong2018learning}
\bibinfo{author}{J.~J. Choong}, \bibinfo{author}{X.~Liu},
  \bibinfo{author}{T.~Murata},
\newblock \bibinfo{title}{Learning community structure with variational
  autoencoder},
\newblock in: \bibinfo{booktitle}{2018 IEEE International Conference on Data
  Mining (ICDM)}, \bibinfo{organization}{IEEE}, \bibinfo{year}{2018}, pp.
  \bibinfo{pages}{69--78}.
\bibitem[{Tian et~al.(2014)Tian, Gao, Cui, Chen, and Liu}]{tian2014learning}
\bibinfo{author}{F.~Tian}, \bibinfo{author}{B.~Gao}, \bibinfo{author}{Q.~Cui},
  \bibinfo{author}{E.~Chen}, \bibinfo{author}{T.-Y. Liu},
\newblock \bibinfo{title}{Learning deep representations for graph clustering},
\newblock \bibinfo{journal}{AAAI Conference on Artificial Intelligence}
  (\bibinfo{year}{2014}).
\bibitem[{Wang et~al.(2016)Wang, Cui, and Zhu}]{wang2016structural}
\bibinfo{author}{D.~Wang}, \bibinfo{author}{P.~Cui}, \bibinfo{author}{W.~Zhu},
\newblock \bibinfo{title}{Structural deep network embedding},
\newblock \bibinfo{journal}{ACM SIGKDD International Conference on Knowledge
  Discovery and Data Mining}  (\bibinfo{year}{2016}).
\bibitem[{Pan et~al.(2018)Pan, Hu, Long, Jiang, Yao, and Zhang}]{pan2018arga}
\bibinfo{author}{S.~Pan}, \bibinfo{author}{R.~Hu}, \bibinfo{author}{G.~Long},
  \bibinfo{author}{J.~Jiang}, \bibinfo{author}{L.~Yao},
  \bibinfo{author}{C.~Zhang},
\newblock \bibinfo{title}{Adversarially regularized graph autoencoder for graph
  embedding},
\newblock \bibinfo{journal}{International Joint Conference on Artificial
  Intelligence}  (\bibinfo{year}{2018}).
\bibitem[{Tran(2018)}]{tran2018multi}
\bibinfo{author}{P.~V. Tran},
\newblock \bibinfo{title}{Multi-task graph autoencoders},
\newblock \bibinfo{journal}{arXiv preprint arXiv:1811.02798}
  (\bibinfo{year}{2018}).
\bibitem[{Salha et~al.(2019{\natexlab{a}})Salha, Hennequin, Tran, and
  Vazirgiannis}]{salha2019-1}
\bibinfo{author}{G.~Salha}, \bibinfo{author}{R.~Hennequin},
  \bibinfo{author}{V.~A. Tran}, \bibinfo{author}{M.~Vazirgiannis},
\newblock \bibinfo{title}{A degeneracy framework for scalable graph
  autoencoders},
\newblock \bibinfo{journal}{International Joint Conference on Artificial
  Intelligence}  (\bibinfo{year}{2019}{\natexlab{a}}).
\bibitem[{Salha et~al.(2019{\natexlab{b}})Salha, Limnios, Hennequin, Tran, and
  Vazirgiannis}]{salha2019-2}
\bibinfo{author}{G.~Salha}, \bibinfo{author}{S.~Limnios},
  \bibinfo{author}{R.~Hennequin}, \bibinfo{author}{V.~A. Tran},
  \bibinfo{author}{M.~Vazirgiannis},
\newblock \bibinfo{title}{Gravity-inspired graph autoencoders for directed link
  prediction},
\newblock \bibinfo{journal}{ACM International Conference on Information and
  Knowledge Management}  (\bibinfo{year}{2019}{\natexlab{b}}).
\bibitem[{Huang et~al.(2019)Huang, Frederking et~al.}]{huang2019rwr}
\bibinfo{author}{P.-Y. Huang}, \bibinfo{author}{R.~Frederking}, et~al.,
\newblock \bibinfo{title}{Rwr-gae: Random walk regularization for graph auto
  encoders},
\newblock \bibinfo{journal}{arXiv preprint arXiv:1908.04003}
  (\bibinfo{year}{2019}).
\bibitem[{Grover et~al.(2019)Grover, Zweig, and Ermon}]{grover2019graphite}
\bibinfo{author}{A.~Grover}, \bibinfo{author}{A.~Zweig},
  \bibinfo{author}{S.~Ermon},
\newblock \bibinfo{title}{Graphite: Iterative generative modeling of graphs},
\newblock \bibinfo{journal}{International Conference on Machine Learning}
  (\bibinfo{year}{2019}).
\bibitem[{Hasanzadeh et~al.(2019)Hasanzadeh, Hajiramezanali, Narayanan,
  Duffield, Zhou, and Qian}]{semiimplicit2019}
\bibinfo{author}{A.~Hasanzadeh}, \bibinfo{author}{E.~Hajiramezanali},
  \bibinfo{author}{K.~Narayanan}, \bibinfo{author}{N.~Duffield},
  \bibinfo{author}{M.~Zhou}, \bibinfo{author}{X.~Qian},
\newblock \bibinfo{title}{Semi-implicit graph variational auto-encoders},
\newblock \bibinfo{journal}{Advances in Neural Information Processing Systems}
  (\bibinfo{year}{2019}).
\bibitem[{Shi et~al.(2020)Shi, Fan, and Kwok}]{aaai20}
\bibinfo{author}{H.~Shi}, \bibinfo{author}{H.~Fan}, \bibinfo{author}{J.~T.
  Kwok},
\newblock \bibinfo{title}{Effective decoding in graph auto-encoder using
  triadic closure},
\newblock \bibinfo{journal}{AAAI Conference on Artificial Intelligence}
  (\bibinfo{year}{2020}).
\bibitem[{Choong et~al.(2020)Choong, Liu, and Murata}]{choong2020optimizing}
\bibinfo{author}{J.~J. Choong}, \bibinfo{author}{X.~Liu},
  \bibinfo{author}{T.~Murata},
\newblock \bibinfo{title}{Optimizing variational graph autoencoder for
  community detection with dual optimization},
\newblock \bibinfo{journal}{Entropy} \bibinfo{volume}{22}
  (\bibinfo{year}{2020}) \bibinfo{pages}{197}.
\bibitem[{Salha et~al.(2021)Salha, Hennequin, Remy, Moussallam, and
  Vazirgiannis}]{salha2021fastgae}
\bibinfo{author}{G.~Salha}, \bibinfo{author}{R.~Hennequin},
  \bibinfo{author}{J.-B. Remy}, \bibinfo{author}{M.~Moussallam},
  \bibinfo{author}{M.~Vazirgiannis},
\newblock \bibinfo{title}{Fastgae: Scalable graph autoencoders with stochastic
  subgraph decoding},
\newblock \bibinfo{journal}{Neural Networks} \bibinfo{volume}{142}
  (\bibinfo{year}{2021}) \bibinfo{pages}{1--19}.
\bibitem[{Brandes et~al.(2007)Brandes, Delling, Gaertler, Gorke, Hoefer,
  Nikoloski, and Wagner}]{brandes2007modularity}
\bibinfo{author}{U.~Brandes}, \bibinfo{author}{D.~Delling},
  \bibinfo{author}{M.~Gaertler}, \bibinfo{author}{R.~Gorke},
  \bibinfo{author}{M.~Hoefer}, \bibinfo{author}{Z.~Nikoloski},
  \bibinfo{author}{D.~Wagner},
\newblock \bibinfo{title}{On modularity clustering},
\newblock \bibinfo{journal}{IEEE Transactions on Knowledge and Data
  Engineering} \bibinfo{volume}{20} (\bibinfo{year}{2007})
  \bibinfo{pages}{172--188}.
\bibitem[{Shiokawa et~al.(2013)Shiokawa, Fujiwara, and
  Onizuka}]{shiokawa2013fast}
\bibinfo{author}{H.~Shiokawa}, \bibinfo{author}{Y.~Fujiwara},
  \bibinfo{author}{M.~Onizuka},
\newblock \bibinfo{title}{Fast algorithm for modularity-based graph
  clustering},
\newblock in: \bibinfo{booktitle}{Proceedings of the AAAI Conference on
  Artificial Intelligence}, volume~\bibinfo{volume}{27}, \bibinfo{year}{2013}.
\bibitem[{Salha et~al.(2020)Salha, Hennequin, and
  Vazirgiannis}]{salha2020simple}
\bibinfo{author}{G.~Salha}, \bibinfo{author}{R.~Hennequin},
  \bibinfo{author}{M.~Vazirgiannis},
\newblock \bibinfo{title}{Simple and effective graph autoencoders with one-hop
  linear models},
\newblock \bibinfo{journal}{arXiv preprint arXiv:2001.07614}
  (\bibinfo{year}{2020}).
\bibitem[{Huang et~al.(2021)Huang, Pei, Menkovski, and
  Pechenizkiy}]{pei2021generalization}
\bibinfo{author}{T.~Huang}, \bibinfo{author}{Y.~Pei},
  \bibinfo{author}{V.~Menkovski}, \bibinfo{author}{M.~Pechenizkiy},
\newblock \bibinfo{title}{On generalization of graph autoencoders with
  adversarial training},
\newblock \bibinfo{journal}{arXiv preprint arXiv:2107.02658}
  (\bibinfo{year}{2021}).
\bibitem[{Li et~al.(2020)Li, Yu, Li, Zhang, Zhao, Rong, Cheng, and
  Huang}]{li2020dirichlet}
\bibinfo{author}{J.~Li}, \bibinfo{author}{J.~Yu}, \bibinfo{author}{J.~Li},
  \bibinfo{author}{H.~Zhang}, \bibinfo{author}{K.~Zhao},
  \bibinfo{author}{Y.~Rong}, \bibinfo{author}{H.~Cheng},
  \bibinfo{author}{J.~Huang},
\newblock \bibinfo{title}{Dirichlet graph variational autoencoder},
\newblock \bibinfo{journal}{Advances in Neural Information Processing Systems}
  \bibinfo{volume}{33} (\bibinfo{year}{2020}).
\bibitem[{Kipf and Welling(2017)}]{kipf2016-1}
\bibinfo{author}{T.~N. Kipf}, \bibinfo{author}{M.~Welling},
\newblock \bibinfo{title}{Semi-supervised classification with graph
  convolutional networks},
\newblock \bibinfo{journal}{International Conference on Learning
  Representations}  (\bibinfo{year}{2017}).
\bibitem[{Bruna et~al.(2014)Bruna, Zaremba, Szlam, and
  LeCun}]{bruna2013spectral}
\bibinfo{author}{J.~Bruna}, \bibinfo{author}{W.~Zaremba},
  \bibinfo{author}{A.~Szlam}, \bibinfo{author}{Y.~LeCun},
\newblock \bibinfo{title}{Spectral networks and locally connected networks on
  graphs},
\newblock \bibinfo{journal}{International Conference on Learning
  Representations}  (\bibinfo{year}{2014}).
\bibitem[{Chen et~al.(2018)Chen, Ma, and Xiao}]{chen2018fastgcn}
\bibinfo{author}{J.~Chen}, \bibinfo{author}{T.~Ma}, \bibinfo{author}{C.~Xiao},
\newblock \bibinfo{title}{Fastgcn: fast learning with graph convolutional
  networks via importance sampling},
\newblock \bibinfo{journal}{International Conference on Learning
  Representations}  (\bibinfo{year}{2018}).
\bibitem[{Chiang et~al.(2019)Chiang, Liu, Si, Li, Bengio, and
  Hsieh}]{chiang2019cluster}
\bibinfo{author}{W.-L. Chiang}, \bibinfo{author}{X.~Liu},
  \bibinfo{author}{S.~Si}, \bibinfo{author}{Y.~Li},
  \bibinfo{author}{S.~Bengio}, \bibinfo{author}{C.-J. Hsieh},
\newblock \bibinfo{title}{Cluster-gcn: An efficient algorithm for training deep
  and large graph convolutional networks},
\newblock \bibinfo{journal}{ACM SIGKDD International Conference on Knowledge
  Discovery and Data Mining}  (\bibinfo{year}{2019}).
\bibitem[{Hamilton et~al.(2017)Hamilton, Ying, and
  Leskovec}]{hamilton2017inductive}
\bibinfo{author}{W.~L. Hamilton}, \bibinfo{author}{Z.~Ying},
  \bibinfo{author}{J.~Leskovec},
\newblock \bibinfo{title}{Inductive representation learning on large graphs},
\newblock \bibinfo{journal}{Advances in Neural Information Processing Systems}
  (\bibinfo{year}{2017}).
\bibitem[{Defferrard et~al.(2016)Defferrard, Bresson, and
  Vandergheynst}]{defferrard2016}
\bibinfo{author}{M.~Defferrard}, \bibinfo{author}{X.~Bresson},
  \bibinfo{author}{P.~Vandergheynst},
\newblock \bibinfo{title}{Convolutional neural networks on graphs with fast
  localized spectral filtering},
\newblock \bibinfo{journal}{Advances in Neural Information Processing Systems}
  (\bibinfo{year}{2016}).
\bibitem[{Veli{\v{c}}kovi{\'c} et~al.(2018)Veli{\v{c}}kovi{\'c}, Cucurull,
  Casanova, Romero, Lio, and Bengio}]{velivckovic2019graph}
\bibinfo{author}{P.~Veli{\v{c}}kovi{\'c}}, \bibinfo{author}{G.~Cucurull},
  \bibinfo{author}{A.~Casanova}, \bibinfo{author}{A.~Romero},
  \bibinfo{author}{P.~Lio}, \bibinfo{author}{Y.~Bengio},
\newblock \bibinfo{title}{Graph attention networks},
\newblock in: \bibinfo{booktitle}{International Conference on Learning
  Representations}, \bibinfo{year}{2018}.
\bibitem[{Park et~al.(2019)Park, Lee, Chang, Lee, and Choi}]{park2019symmetric}
\bibinfo{author}{J.~Park}, \bibinfo{author}{M.~Lee}, \bibinfo{author}{H.~J.
  Chang}, \bibinfo{author}{K.~Lee}, \bibinfo{author}{J.~Y. Choi},
\newblock \bibinfo{title}{Symmetric graph convolutional autoencoder for
  unsupervised graph representation learning},
\newblock in: \bibinfo{booktitle}{Proceedings of the IEEE/CVF International
  Conference on Computer Vision}, \bibinfo{year}{2019}, pp.
  \bibinfo{pages}{6519--6528}.
\bibitem[{Li et~al.(2020)Li, Yu, Juan, Gopalan, Cheng, and
  Tomkins}]{li2020graph}
\bibinfo{author}{J.~Li}, \bibinfo{author}{T.~Yu}, \bibinfo{author}{D.-C. Juan},
  \bibinfo{author}{A.~Gopalan}, \bibinfo{author}{H.~Cheng},
  \bibinfo{author}{A.~Tomkins},
\newblock \bibinfo{title}{Graph autoencoders with deconvolutional networks},
\newblock \bibinfo{journal}{arXiv preprint arXiv:2012.11898}
  (\bibinfo{year}{2020}).
\bibitem[{Liu et~al.(2018)Liu, Allamanis, Brockschmidt, and Gaunt}]{molecule1}
\bibinfo{author}{Q.~Liu}, \bibinfo{author}{M.~Allamanis},
  \bibinfo{author}{M.~Brockschmidt}, \bibinfo{author}{A.~Gaunt},
\newblock \bibinfo{title}{Constrained graph variational autoencoders for
  molecule design},
\newblock \bibinfo{journal}{Advances in Neural Information Processing Systems}
  (\bibinfo{year}{2018}).
\bibitem[{Simonovsky and Komodakis(2018)}]{simonovsky2018graphvae}
\bibinfo{author}{M.~Simonovsky}, \bibinfo{author}{N.~Komodakis},
\newblock \bibinfo{title}{Graphvae: Towards generation of small graphs using
  variational autoencoders},
\newblock \bibinfo{journal}{International Conference on Artificial Neural
  Networks}  (\bibinfo{year}{2018}).
\bibitem[{Goodfellow et~al.(2016)Goodfellow, Bengio, and
  Courville}]{goodfellow2016deep}
\bibinfo{author}{I.~Goodfellow}, \bibinfo{author}{Y.~Bengio},
  \bibinfo{author}{A.~Courville}, \bibinfo{title}{Deep learning},
  \bibinfo{publisher}{MIT press}, \bibinfo{year}{2016}.
\bibitem[{Fey and Lenssen(2019)}]{pytorchgeometric}
\bibinfo{author}{M.~Fey}, \bibinfo{author}{J.~E. Lenssen},
\newblock \bibinfo{title}{Fast graph representation learning with {PyTorch
  Geometric}},
\newblock \bibinfo{journal}{ICLR Workshop on Representation Learning on Graphs
  and Manifolds}  (\bibinfo{year}{2019}).
\bibitem[{Kingma and Welling(2014)}]{kingma2013vae}
\bibinfo{author}{D.~P. Kingma}, \bibinfo{author}{M.~Welling},
\newblock \bibinfo{title}{Auto-encoding variational bayes},
\newblock \bibinfo{journal}{International Conference on Learning
  Representations}  (\bibinfo{year}{2014}).
\bibitem[{Jin et~al.(2018)Jin, Barzilay, and Jaakkola}]{molecule3}
\bibinfo{author}{W.~Jin}, \bibinfo{author}{R.~Barzilay},
  \bibinfo{author}{T.~Jaakkola},
\newblock \bibinfo{title}{Junction tree variational autoencoder for molecular
  graph generation},
\newblock \bibinfo{journal}{International Conference on Machine Learning}
  (\bibinfo{year}{2018}).
\bibitem[{Kullback and Leibler(1951)}]{kullback1951information}
\bibinfo{author}{S.~Kullback}, \bibinfo{author}{R.~A. Leibler},
\newblock \bibinfo{title}{On information and sufficiency},
\newblock \bibinfo{journal}{The Annals of Mathematical Statistics}
  \bibinfo{volume}{22-1} (\bibinfo{year}{1951}) \bibinfo{pages}{79--86}.
\bibitem[{Doersch(2016)}]{doersch2016tutorial}
\bibinfo{author}{C.~Doersch},
\newblock \bibinfo{title}{Tutorial on variational autoencoders},
\newblock \bibinfo{journal}{arXiv preprint arXiv:1606.05908}
  (\bibinfo{year}{2016}).
\bibitem[{Sen et~al.(2008)Sen, Namata, Bilgic, Getoor, Galligher, and
  Eliassi-Rad}]{sen2008collective}
\bibinfo{author}{P.~Sen}, \bibinfo{author}{G.~Namata},
  \bibinfo{author}{M.~Bilgic}, \bibinfo{author}{L.~Getoor},
  \bibinfo{author}{B.~Galligher}, \bibinfo{author}{T.~Eliassi-Rad},
\newblock \bibinfo{title}{Collective classification in network data},
\newblock \bibinfo{journal}{AI magazine} \bibinfo{volume}{29}
  (\bibinfo{year}{2008}) \bibinfo{pages}{93--93}.
\bibitem[{Pedregosa et~al.(2011)Pedregosa, Varoquaux, Gramfort, Michel,
  Thirion, Grisel, Blondel, Prettenhofer, Weiss, Dubourg
  et~al.}]{pedregosa2011scikit}
\bibinfo{author}{F.~Pedregosa}, \bibinfo{author}{G.~Varoquaux},
  \bibinfo{author}{A.~Gramfort}, \bibinfo{author}{V.~Michel},
  \bibinfo{author}{B.~Thirion}, \bibinfo{author}{O.~Grisel},
  \bibinfo{author}{M.~Blondel}, \bibinfo{author}{P.~Prettenhofer},
  \bibinfo{author}{R.~Weiss}, \bibinfo{author}{V.~Dubourg}, et~al.,
\newblock \bibinfo{title}{Scikit-learn: Machine learning in python},
\newblock \bibinfo{journal}{Journal of Machine Learning Research}
  \bibinfo{volume}{12} (\bibinfo{year}{2011}) \bibinfo{pages}{2825--2830}.
\bibitem[{Perozzi et~al.(2014)Perozzi, Al-Rfou, and
  Skiena}]{perozzi2014deepwalk}
\bibinfo{author}{B.~Perozzi}, \bibinfo{author}{R.~Al-Rfou},
  \bibinfo{author}{S.~Skiena},
\newblock \bibinfo{title}{Deepwalk: Online learning of social representations},
\newblock \bibinfo{journal}{ACM SIGKDD International Conference on Knowledge
  Discovery and Data Mining}  (\bibinfo{year}{2014}).
\bibitem[{Von~Luxburg(2007)}]{von2007tutorial}
\bibinfo{author}{U.~Von~Luxburg},
\newblock \bibinfo{title}{A tutorial on spectral clustering},
\newblock \bibinfo{journal}{Statistics and computing} \bibinfo{volume}{17}
  (\bibinfo{year}{2007}) \bibinfo{pages}{395--416}.
\bibitem[{Hao et~al.(2020)Hao, Cao, Fang, Xie, and Wang}]{hao2020inductive}
\bibinfo{author}{Y.~Hao}, \bibinfo{author}{X.~Cao}, \bibinfo{author}{Y.~Fang},
  \bibinfo{author}{X.~Xie}, \bibinfo{author}{S.~Wang},
\newblock \bibinfo{title}{Inductive link prediction for nodes having only
  attribute information},
\newblock \bibinfo{journal}{International Joint Conference on Artificial
  Intelligence}  (\bibinfo{year}{2020}).
\bibitem[{Rennard et~al.(2020)Rennard, Nikolentzos, and
  Vazirgiannis}]{rennard2020graph}
\bibinfo{author}{V.~Rennard}, \bibinfo{author}{G.~Nikolentzos},
  \bibinfo{author}{M.~Vazirgiannis},
\newblock \bibinfo{title}{Graph auto-encoders for learning edge
  representations},
\newblock in: \bibinfo{booktitle}{International Conference on Complex Networks
  and Their Applications}, \bibinfo{year}{2020}, pp. \bibinfo{pages}{117--129}.
\bibitem[{Berg et~al.(2018)Berg, Kipf, and Welling}]{berg2018matrixcomp}
\bibinfo{author}{R.~v.~d. Berg}, \bibinfo{author}{T.~N. Kipf},
  \bibinfo{author}{M.~Welling},
\newblock \bibinfo{title}{Graph convolutional matrix completion},
\newblock \bibinfo{journal}{KDD Deep Learning Day}  (\bibinfo{year}{2018}).
\bibitem[{Wu and Cheng(2021)}]{wu2021deepened}
\bibinfo{author}{X.~Wu}, \bibinfo{author}{Q.~Cheng},
\newblock \bibinfo{title}{Deepened graph auto-encoders help stabilize and
  enhance link prediction},
\newblock \bibinfo{journal}{arXiv preprint arXiv:2103.11414}
  (\bibinfo{year}{2021}).
\bibitem[{Salha-Galvan et~al.(2021)Salha-Galvan, Hennequin, Chapus, Tran, and
  Vazirgiannis}]{salha2021cold}
\bibinfo{author}{G.~Salha-Galvan}, \bibinfo{author}{R.~Hennequin},
  \bibinfo{author}{B.~Chapus}, \bibinfo{author}{V.-A. Tran},
  \bibinfo{author}{M.~Vazirgiannis},
\newblock \bibinfo{title}{Cold start similar artists ranking with
  gravity-inspired graph autoencoders},
\newblock \bibinfo{journal}{15th ACM Conference on Recommender Systems}
  (\bibinfo{year}{2021}).
\bibitem[{Kaloga et~al.(2021)Kaloga, Borgnat, Chepuri, Abry, and
  Habrard}]{kaloga2021multiview}
\bibinfo{author}{Y.~Kaloga}, \bibinfo{author}{P.~Borgnat},
  \bibinfo{author}{S.~P. Chepuri}, \bibinfo{author}{P.~Abry},
  \bibinfo{author}{A.~Habrard},
\newblock \bibinfo{title}{Multiview variational graph autoencoders for
  canonical correlation analysis},
\newblock in: \bibinfo{booktitle}{IEEE International Conference on Acoustics,
  Speech and Signal Processing (ICASSP)}, \bibinfo{organization}{IEEE},
  \bibinfo{year}{2021}, pp. \bibinfo{pages}{5320--5324}.
\bibitem[{Ma et~al.(2018)Ma, Chen, and Xiao}]{molecule2}
\bibinfo{author}{T.~Ma}, \bibinfo{author}{J.~Chen}, \bibinfo{author}{C.~Xiao},
\newblock \bibinfo{title}{Constrained generation of semantically valid graphs
  via regularizing variational autoencoders},
\newblock \bibinfo{journal}{Advances in Neural Information Processing Systems}
  (\bibinfo{year}{2018}).
\bibitem[{Fortunato(2010)}]{fortunato2010community}
\bibinfo{author}{S.~Fortunato},
\newblock \bibinfo{title}{Community detection in graphs},
\newblock \bibinfo{journal}{Physics Reports} \bibinfo{volume}{486}
  (\bibinfo{year}{2010}) \bibinfo{pages}{75--174}.
\bibitem[{Cavallari et~al.(2017)Cavallari, Zheng, Cai, Chang, and
  Cambria}]{cavallari2017learning}
\bibinfo{author}{S.~Cavallari}, \bibinfo{author}{V.~W. Zheng},
  \bibinfo{author}{H.~Cai}, \bibinfo{author}{K.~C.-C. Chang},
  \bibinfo{author}{E.~Cambria},
\newblock \bibinfo{title}{Learning community embedding with community detection
  and node embedding on graphs},
\newblock in: \bibinfo{booktitle}{Proceedings of the 2017 ACM on Conference on
  Information and Knowledge Management}, \bibinfo{year}{2017}, pp.
  \bibinfo{pages}{377--386}.
\bibitem[{Tu et~al.(2018)Tu, Zeng, Wang, Zhang, Liu, Sun, Zhang, and
  Lin}]{tu2018unified}
\bibinfo{author}{C.~Tu}, \bibinfo{author}{X.~Zeng}, \bibinfo{author}{H.~Wang},
  \bibinfo{author}{Z.~Zhang}, \bibinfo{author}{Z.~Liu},
  \bibinfo{author}{M.~Sun}, \bibinfo{author}{B.~Zhang},
  \bibinfo{author}{L.~Lin},
\newblock \bibinfo{title}{A unified framework for community detection and
  network representation learning},
\newblock \bibinfo{journal}{IEEE Transactions on Knowledge and Data
  Engineering} \bibinfo{volume}{31} (\bibinfo{year}{2018})
  \bibinfo{pages}{1051--1065}.
\bibitem[{Sun et~al.(2019)Sun, Qu, Hoffmann, Huang, and Tang}]{sun2019vgraph}
\bibinfo{author}{F.-Y. Sun}, \bibinfo{author}{M.~Qu},
  \bibinfo{author}{J.~Hoffmann}, \bibinfo{author}{C.-W. Huang},
  \bibinfo{author}{J.~Tang},
\newblock \bibinfo{title}{vgraph: A generative model for joint community
  detection and node representation learning},
\newblock \bibinfo{journal}{Advances in Neural Information Processing Systems}
  \bibinfo{volume}{32} (\bibinfo{year}{2019}).
\bibitem[{He et~al.(2021)He, Song, Jin, Feng, Zhang, Yu, and
  Zhang}]{he2021community}
\bibinfo{author}{D.~He}, \bibinfo{author}{Y.~Song}, \bibinfo{author}{D.~Jin},
  \bibinfo{author}{Z.~Feng}, \bibinfo{author}{B.~Zhang},
  \bibinfo{author}{Z.~Yu}, \bibinfo{author}{W.~Zhang},
\newblock \bibinfo{title}{Community-centric graph convolutional network for
  unsupervised community detection},
\newblock in: \bibinfo{booktitle}{Proceedings of the Twenty-Ninth International
  Conference on International Joint Conferences on Artificial Intelligence},
  \bibinfo{year}{2021}, pp. \bibinfo{pages}{3515--3521}.
\bibitem[{MacQueen et~al.(1967)}]{macqueen1967some}
\bibinfo{author}{J.~MacQueen}, et~al.,
\newblock \bibinfo{title}{Some methods for classification and analysis of
  multivariate observations},
\newblock in: \bibinfo{booktitle}{Proceedings of the fifth Berkeley symposium
  on mathematical statistics and probability}, volume~\bibinfo{volume}{1},
  \bibinfo{organization}{Oakland, CA, USA}, \bibinfo{year}{1967}, pp.
  \bibinfo{pages}{281--297}.
\bibitem[{Wu et~al.(2019)Wu, Souza, Zhang, Fifty, Yu, and
  Weinberger}]{wu2019simplifying}
\bibinfo{author}{F.~Wu}, \bibinfo{author}{A.~Souza},
  \bibinfo{author}{T.~Zhang}, \bibinfo{author}{C.~Fifty},
  \bibinfo{author}{T.~Yu}, \bibinfo{author}{K.~Weinberger},
\newblock \bibinfo{title}{Simplifying graph convolutional networks},
\newblock \bibinfo{journal}{International Conference on Machine Learning}
  (\bibinfo{year}{2019}).
\bibitem[{Greff et~al.(2017)Greff, Van~Steenkiste, and
  Schmidhuber}]{greff2017neural}
\bibinfo{author}{K.~Greff}, \bibinfo{author}{S.~Van~Steenkiste},
  \bibinfo{author}{J.~Schmidhuber},
\newblock \bibinfo{title}{Neural expectation maximization},
\newblock \bibinfo{journal}{Advances in Neural Information Processing Systems}
  (\bibinfo{year}{2017}).
\bibitem[{Dasoulas et~al.(2021)Dasoulas, Lutzeyer, and
  Vazirgiannis}]{Dasoulas2021}
\bibinfo{author}{G.~Dasoulas}, \bibinfo{author}{J.~F. Lutzeyer},
  \bibinfo{author}{M.~Vazirgiannis},
\newblock \bibinfo{title}{Learning parametrised graph shift operators},
\newblock in: \bibinfo{booktitle}{International Conference on Learning
  Representations}, \bibinfo{year}{2021}.
\bibitem[{Waradpande et~al.(2020)Waradpande, Kudenko, and
  Khosla}]{waradpande2020graph}
\bibinfo{author}{V.~Waradpande}, \bibinfo{author}{D.~Kudenko},
  \bibinfo{author}{M.~Khosla},
\newblock \bibinfo{title}{Graph-based state representation for deep
  reinforcement learning},
\newblock \bibinfo{journal}{arXiv preprint arXiv:2004.13965}
  (\bibinfo{year}{2020}).
\bibitem[{Shin et~al.(2020)Shin, Kim, Park, Kim, and Kang}]{shin2020bipartite}
\bibinfo{author}{J.~Shin}, \bibinfo{author}{K.~Kim}, \bibinfo{author}{D.~Park},
  \bibinfo{author}{S.~Kim}, \bibinfo{author}{J.~Kang},
\newblock \bibinfo{title}{Bipartite link prediction by intra-class connection
  based triadic closure},
\newblock \bibinfo{journal}{IEEE Access} \bibinfo{volume}{8}
  (\bibinfo{year}{2020}) \bibinfo{pages}{140194--140204}.
\bibitem[{Salha et~al.(2019)Salha, Hennequin, and Vazirgiannis}]{salha2019keep}
\bibinfo{author}{G.~Salha}, \bibinfo{author}{R.~Hennequin},
  \bibinfo{author}{M.~Vazirgiannis},
\newblock \bibinfo{title}{{Keep it Simple: Graph Autoencoders Without Graph
  Convolutional Networks}},
\newblock \bibinfo{journal}{NeurIPS 2019 Workshop on Graph Representation
  Learning}  (\bibinfo{year}{2019}).
\bibitem[{Newman(2006)}]{Newman8577}
\bibinfo{author}{M.~E.~J. Newman},
\newblock \bibinfo{title}{Modularity and community structure in networks},
\newblock \bibinfo{journal}{Proceedings of the National Academy of Sciences}
  \bibinfo{volume}{103} (\bibinfo{year}{2006}) \bibinfo{pages}{8577--8582}.
\bibitem[{{Shuman} et~al.(2013){Shuman}, {Narang}, {Frossard}, {Ortega}, and
  {Vandergheynst}}]{Shuman2013}
\bibinfo{author}{D.~I. {Shuman}}, \bibinfo{author}{S.~K. {Narang}},
  \bibinfo{author}{P.~{Frossard}}, \bibinfo{author}{A.~{Ortega}},
  \bibinfo{author}{P.~{Vandergheynst}},
\newblock \bibinfo{title}{The emerging field of signal processing on graphs:
  Extending high-dimensional data analysis to networks and other irregular
  domains},
\newblock \bibinfo{journal}{IEEE Signal Processing Magazine}
  (\bibinfo{year}{2013}) \bibinfo{pages}{83 -- 98}.
\bibitem[{Bruna et~al.(2014)Bruna, Zaremba, Szlam, and LeCun}]{Bruna2014}
\bibinfo{author}{J.~Bruna}, \bibinfo{author}{W.~Zaremba},
  \bibinfo{author}{A.~Szlam}, \bibinfo{author}{Y.~LeCun},
\newblock \bibinfo{title}{Spectral networks and deep locally connected networks
  on graphs},
\newblock in: \bibinfo{booktitle}{2nd International Conference on Learning
  Representations (ICLR)}, \bibinfo{year}{2014}.
\bibitem[{Levie et~al.(2019)Levie, Monti, Bresson, and Bronstein}]{Levie2019}
\bibinfo{author}{R.~Levie}, \bibinfo{author}{F.~Monti},
  \bibinfo{author}{X.~Bresson}, \bibinfo{author}{M.~M. Bronstein},
\newblock \bibinfo{title}{Cayleynets: Graph convolutional neural networks with
  complex rational spectral filters},
\newblock \bibinfo{journal}{IEEE Transactions on Signal Processing}
  \bibinfo{volume}{67} (\bibinfo{year}{2019}) \bibinfo{pages}{97--109}.
\bibitem[{Chung(1997)}]{Chung1997}
\bibinfo{author}{F.~R.~K. Chung}, \bibinfo{title}{Spectral graph theory},
  \bibinfo{number}{92}, \bibinfo{publisher}{Providence, R.I.: American
  Mathematical Society}, \bibinfo{year}{1997}.
\bibitem[{Spielman(2012)}]{Spielman2012}
\bibinfo{author}{D.~Spielman},
\newblock \bibinfo{title}{Spectral graph theory},
\newblock in: \bibinfo{booktitle}{Combinatorial scientific computing},
  \bibinfo{publisher}{CRC Press}, \bibinfo{year}{2012}.
\bibitem[{Sandryhaila and Moura(2014)}]{Sandryhaila2014}
\bibinfo{author}{A.~Sandryhaila}, \bibinfo{author}{J.~M. Moura},
\newblock \bibinfo{title}{Big data analysis with signal processing on graphs:
  Representation and processing of massive data sets with irregular structure},
\newblock \bibinfo{journal}{IEEE Signal Processing Magazine}
  \bibinfo{volume}{31} (\bibinfo{year}{2014}) \bibinfo{pages}{80--90}.
\bibitem[{Ortega et~al.(2018)Ortega, Frossard, Kova{\v{c}}evi{\'c}, Moura, and
  Vandergheynst}]{Ortega2018}
\bibinfo{author}{A.~Ortega}, \bibinfo{author}{P.~Frossard},
  \bibinfo{author}{J.~Kova{\v{c}}evi{\'c}}, \bibinfo{author}{J.~M. Moura},
  \bibinfo{author}{P.~Vandergheynst},
\newblock \bibinfo{title}{Graph signal processing: Overview, challenges, and
  applications},
\newblock \bibinfo{journal}{Proceedings of the IEEE} \bibinfo{volume}{106}
  (\bibinfo{year}{2018}) \bibinfo{pages}{808--828}.
\bibitem[{Gama et~al.(2020)Gama, Ribeiro, and Bruna}]{gama2020}
\bibinfo{author}{F.~Gama}, \bibinfo{author}{A.~Ribeiro},
  \bibinfo{author}{J.~Bruna},
\newblock \bibinfo{title}{Stability of graph neural networks to relative
  perturbations},
\newblock in: \bibinfo{booktitle}{ICASSP 2020 - 2020 IEEE International
  Conference on Acoustics, Speech and Signal Processing (ICASSP)},
  \bibinfo{year}{2020}, pp. \bibinfo{pages}{9070--9074}.
\bibitem[{Balcilar et~al.(2021)Balcilar, Renton, H{\'e}roux, Ga{\"u}z{\`e}re,
  Adam, and Honeine}]{Balcilar2021}
\bibinfo{author}{M.~Balcilar}, \bibinfo{author}{G.~Renton},
  \bibinfo{author}{P.~H{\'e}roux}, \bibinfo{author}{B.~Ga{\"u}z{\`e}re},
  \bibinfo{author}{S.~Adam}, \bibinfo{author}{P.~Honeine},
\newblock \bibinfo{title}{Analyzing the expressive power of graph neural
  networks in a spectral perspective},
\newblock in: \bibinfo{booktitle}{International Conference on Learning
  Representations}, \bibinfo{year}{2021}.
\bibitem[{Teke and Vaidyanathan(2017)}]{Teke2017}
\bibinfo{author}{O.~Teke}, \bibinfo{author}{P.~P. Vaidyanathan},
\newblock \bibinfo{title}{Uncertainty principles and sparse eigenvectors of
  graphs},
\newblock \bibinfo{journal}{IEEE Transactions on Signal Processing}
  \bibinfo{volume}{65} (\bibinfo{year}{2017}) \bibinfo{pages}{5406--5420}.
\bibitem[{Lutzeyer(2020)}]{Lutzeyer2020}
\bibinfo{author}{J.~Lutzeyer}, \bibinfo{title}{Network representation matrices
  and their eigenproperties: A comparative study}, \bibinfo{howpublished}{PhD
  Thesis: Imperial College London}, \bibinfo{year}{2020}.
\bibitem[{Weyl(1912)}]{Weyl1912}
\bibinfo{author}{H.~Weyl},
\newblock \bibinfo{title}{Das asymptotische verteilungsgesetz der eigenwerte
  linearer partieller differentialgleichungen (mit einer anwendung auf die
  theorie der hohlraumstrahlung)},
\newblock \bibinfo{journal}{Mathematische Annalen} \bibinfo{volume}{71}
  (\bibinfo{year}{1912}) \bibinfo{pages}{441--479}.
\bibitem[{Kolotilina(2005)}]{Kolotilina2005}
\bibinfo{author}{L.~Y. Kolotilina},
\newblock \bibinfo{title}{The strengthened versions of the additive and
  multiplicative weyl inequalities},
\newblock \bibinfo{journal}{Journal of Mathematical Sciences}
  \bibinfo{volume}{127} (\bibinfo{year}{2005}) \bibinfo{pages}{1976--1987}.
\bibitem[{Lutzeyer and Walden(2019)}]{Lutzeyer2019}
\bibinfo{author}{J.~F. Lutzeyer}, \bibinfo{author}{A.~T. Walden},
\newblock \bibinfo{title}{Extending the davis-kahan theorem for comparing
  eigenvectors of two symmetric matrices i: Theory},
\newblock \bibinfo{journal}{arXiv preprint arXiv:1908.03462}
  (\bibinfo{year}{2019}).
\bibitem[{Friedman(2003)}]{Friedman2003}
\bibinfo{author}{J.~Friedman},
\newblock \bibinfo{title}{A proof of alon's second eigenvalue conjecture},
\newblock in: \bibinfo{booktitle}{Proceedings of the thirty-fifth annual ACM
  symposium on Theory of computing}, \bibinfo{year}{2003}, pp.
  \bibinfo{pages}{720--724}.
\bibitem[{Wang et~al.(2017)Wang, Cui, Wang, Pei, Zhu, and
  Yang}]{wang2017community}
\bibinfo{author}{X.~Wang}, \bibinfo{author}{P.~Cui}, \bibinfo{author}{J.~Wang},
  \bibinfo{author}{J.~Pei}, \bibinfo{author}{W.~Zhu},
  \bibinfo{author}{S.~Yang},
\newblock \bibinfo{title}{Community preserving network embedding},
\newblock in: \bibinfo{booktitle}{Thirty-first AAAI conference on artificial
  intelligence}, \bibinfo{year}{2017}.
\bibitem[{Liu et~al.(2019)Liu, Zhuang, Murata, Kim, and
  Kertkeidkachorn}]{liu2019much}
\bibinfo{author}{X.~Liu}, \bibinfo{author}{C.~Zhuang},
  \bibinfo{author}{T.~Murata}, \bibinfo{author}{K.-S. Kim},
  \bibinfo{author}{N.~Kertkeidkachorn},
\newblock \bibinfo{title}{How much topological structure is preserved by graph
  embeddings?},
\newblock \bibinfo{journal}{Computer Science and Information Systems}
  \bibinfo{volume}{16} (\bibinfo{year}{2019}) \bibinfo{pages}{597--614}.
\bibitem[{Yang et~al.(2016)Yang, Cao, He, Wang, Wang, and
  Zhang}]{yang2016modularity}
\bibinfo{author}{L.~Yang}, \bibinfo{author}{X.~Cao}, \bibinfo{author}{D.~He},
  \bibinfo{author}{C.~Wang}, \bibinfo{author}{X.~Wang},
  \bibinfo{author}{W.~Zhang},
\newblock \bibinfo{title}{Modularity based community detection with deep
  learning.},
\newblock in: \bibinfo{booktitle}{International Joint Conference on Artificial
  Intelligence}, \bibinfo{year}{2016}.
\bibitem[{Lobov and Ivanov(2019)}]{lobov2019unsupervised}
\bibinfo{author}{I.~Lobov}, \bibinfo{author}{S.~Ivanov},
\newblock \bibinfo{title}{Unsupervised community detection with
  modularity-based attention model},
\newblock \bibinfo{journal}{arXiv preprint arXiv:1905.10350}
  (\bibinfo{year}{2019}).
\bibitem[{Srivastava et~al.(2014)Srivastava, Hinton, Krizhevsky, Sutskever, and
  Salakhutdinov}]{srivastava2014dropout}
\bibinfo{author}{N.~Srivastava}, \bibinfo{author}{G.~Hinton},
  \bibinfo{author}{A.~Krizhevsky}, \bibinfo{author}{I.~Sutskever},
  \bibinfo{author}{R.~Salakhutdinov},
\newblock \bibinfo{title}{Dropout: A simple way to prevent neural networks from
  overfitting},
\newblock \bibinfo{journal}{The Journal of Machine Learning Research}
  \bibinfo{volume}{15} (\bibinfo{year}{2014}) \bibinfo{pages}{1929--1958}.
\bibitem[{Abbe(2017)}]{abbe2017community}
\bibinfo{author}{E.~Abbe},
\newblock \bibinfo{title}{Community detection and stochastic block models:
  recent developments},
\newblock \bibinfo{journal}{The Journal of Machine Learning Research}
  \bibinfo{volume}{18} (\bibinfo{year}{2017}) \bibinfo{pages}{6446--6531}.
\bibitem[{Arthur and Vassilvitskii(2007)}]{arthur2017kmeans}
\bibinfo{author}{D.~Arthur}, \bibinfo{author}{S.~Vassilvitskii},
\newblock \bibinfo{title}{K-means++: The advantages of careful seeding},
\newblock in: \bibinfo{booktitle}{Proceedings of the Eighteenth Annual ACM-SIAM
  Symposium on Discrete Algorithms}, SODA '07, \bibinfo{year}{2007}, p.
  \bibinfo{pages}{1027–1035}.
\bibitem[{Kingma and Ba(2015)}]{kingma2014adam}
\bibinfo{author}{D.~P. Kingma}, \bibinfo{author}{J.~Ba},
\newblock \bibinfo{title}{Adam: A method for stochastic optimization},
\newblock \bibinfo{journal}{International Conference on Learning
  Representations}  (\bibinfo{year}{2015}).
\bibitem[{Abadi et~al.(2016)Abadi, Barham, Chen, Chen, Davis, Dean, Devin,
  Ghemawat, Irving, Isard et~al.}]{abadi2016tensorflow}
\bibinfo{author}{M.~Abadi}, \bibinfo{author}{P.~Barham},
  \bibinfo{author}{J.~Chen}, \bibinfo{author}{Z.~Chen},
  \bibinfo{author}{A.~Davis}, \bibinfo{author}{J.~Dean},
  \bibinfo{author}{M.~Devin}, \bibinfo{author}{S.~Ghemawat},
  \bibinfo{author}{G.~Irving}, \bibinfo{author}{M.~Isard}, et~al.,
\newblock \bibinfo{title}{Tensorflow: A system for large-scale machine
  learning},
\newblock in: \bibinfo{booktitle}{12th $\{$USENIX$\}$ symposium on operating
  systems design and implementation ($\{$OSDI$\}$ 16)}, \bibinfo{year}{2016},
  pp. \bibinfo{pages}{265--283}.
\bibitem[{Grover and Leskovec(2016)}]{grover2016node2vec}
\bibinfo{author}{A.~Grover}, \bibinfo{author}{J.~Leskovec},
\newblock \bibinfo{title}{node2vec: Scalable feature learning for networks},
\newblock \bibinfo{journal}{ACM SIGKDD International Conference on Knowledge
  Discovery and Data Mining}  (\bibinfo{year}{2016}).
\bibitem[{Van~der Maaten and Hinton(2008)}]{van2008visualizing}
\bibinfo{author}{L.~Van~der Maaten}, \bibinfo{author}{G.~Hinton},
\newblock \bibinfo{title}{Visualizing data using t-sne.},
\newblock \bibinfo{journal}{Journal of Machine Learning Research}
  \bibinfo{volume}{9(11)} (\bibinfo{year}{2008}).
\bibitem[{Park and Jun(2009)}]{park2009simple}
\bibinfo{author}{H.-S. Park}, \bibinfo{author}{C.-H. Jun},
\newblock \bibinfo{title}{A simple and fast algorithm for k-medoids
  clustering},
\newblock \bibinfo{journal}{Expert Systems with Applications}
  \bibinfo{volume}{36} (\bibinfo{year}{2009}) \bibinfo{pages}{3336--3341}.
\bibitem[{Wu et~al.(2022)Wu, Zhan, Zhang, Luo, and Tang}]{wu2022mtgcn}
\bibinfo{author}{Z.~Wu}, \bibinfo{author}{M.~Zhan}, \bibinfo{author}{H.~Zhang},
  \bibinfo{author}{Q.~Luo}, \bibinfo{author}{K.~Tang},
\newblock \bibinfo{title}{Mtgcn: A multi-task approach for node classification
  and link prediction in graph data},
\newblock \bibinfo{journal}{Information Processing \& Management}
  \bibinfo{volume}{59} (\bibinfo{year}{2022}) \bibinfo{pages}{102902}.
\bibitem[{Wang et~al.(2020)Wang, Wang, and Gong}]{wang2020multi}
\bibinfo{author}{S.~Wang}, \bibinfo{author}{Q.~Wang},
  \bibinfo{author}{M.~Gong},
\newblock \bibinfo{title}{Multi-task learning based network embedding},
\newblock \bibinfo{journal}{Frontiers in Neuroscience}  (\bibinfo{year}{2020})
  \bibinfo{pages}{1387}.
\bibitem[{Xie et~al.(2021)Xie, Xu, Zhang, Wang, and Ji}]{xie2021self}
\bibinfo{author}{Y.~Xie}, \bibinfo{author}{Z.~Xu}, \bibinfo{author}{J.~Zhang},
  \bibinfo{author}{Z.~Wang}, \bibinfo{author}{S.~Ji},
\newblock \bibinfo{title}{Self-supervised learning of graph neural networks: A
  unified review},
\newblock \bibinfo{journal}{arXiv preprint arXiv:2102.10757}
  (\bibinfo{year}{2021}).
\bibitem[{Liu et~al.(2021)Liu, Pan, Jin, Zhou, Xia, and Yu}]{liu2021graph}
\bibinfo{author}{Y.~Liu}, \bibinfo{author}{S.~Pan}, \bibinfo{author}{M.~Jin},
  \bibinfo{author}{C.~Zhou}, \bibinfo{author}{F.~Xia}, \bibinfo{author}{P.~S.
  Yu},
\newblock \bibinfo{title}{Graph self-supervised learning: A survey},
\newblock \bibinfo{journal}{arXiv preprint arXiv:2103.00111}
  (\bibinfo{year}{2021}).
\bibitem[{Wu et~al.(2021)Wu, Lin, Tan, Gao, and Li}]{wu2021self}
\bibinfo{author}{L.~Wu}, \bibinfo{author}{H.~Lin}, \bibinfo{author}{C.~Tan},
  \bibinfo{author}{Z.~Gao}, \bibinfo{author}{S.~Z. Li},
\newblock \bibinfo{title}{Self-supervised learning on graphs: Contrastive,
  generative, or predictive},
\newblock \bibinfo{journal}{IEEE Transactions on Knowledge and Data
  Engineering}  (\bibinfo{year}{2021}).
\bibitem[{Hu et~al.(2019)Hu, Fan, Chen, Chang, and Sun}]{hu2019pre}
\bibinfo{author}{Z.~Hu}, \bibinfo{author}{C.~Fan}, \bibinfo{author}{T.~Chen},
  \bibinfo{author}{K.-W. Chang}, \bibinfo{author}{Y.~Sun},
\newblock \bibinfo{title}{Pre-training graph neural networks for generic
  structural feature extraction},
\newblock \bibinfo{journal}{arXiv preprint arXiv:1905.13728}
  (\bibinfo{year}{2019}).
\bibitem[{You et~al.(2020)You, Chen, Wang, and Shen}]{you2020does}
\bibinfo{author}{Y.~You}, \bibinfo{author}{T.~Chen}, \bibinfo{author}{Z.~Wang},
  \bibinfo{author}{Y.~Shen},
\newblock \bibinfo{title}{When does self-supervision help graph convolutional
  networks?},
\newblock in: \bibinfo{booktitle}{international conference on machine
  learning}, \bibinfo{organization}{PMLR}, \bibinfo{year}{2020}, pp.
  \bibinfo{pages}{10871--10880}.
\bibitem[{Jin et~al.(2020)Jin, Derr, Liu, Wang, Wang, Liu, and
  Tang}]{jin2020self}
\bibinfo{author}{W.~Jin}, \bibinfo{author}{T.~Derr}, \bibinfo{author}{H.~Liu},
  \bibinfo{author}{Y.~Wang}, \bibinfo{author}{S.~Wang},
  \bibinfo{author}{Z.~Liu}, \bibinfo{author}{J.~Tang},
\newblock \bibinfo{title}{Self-supervised learning on graphs: Deep insights and
  new direction},
\newblock \bibinfo{journal}{arXiv preprint arXiv:2006.10141}
  (\bibinfo{year}{2020}).
\bibitem[{Li et~al.(2018)Li, Han, and Wu}]{li2018deeper}
\bibinfo{author}{Q.~Li}, \bibinfo{author}{Z.~Han}, \bibinfo{author}{X.-M. Wu},
\newblock \bibinfo{title}{Deeper insights into graph convolutional networks for
  semi-supervised learning},
\newblock in: \bibinfo{booktitle}{Thirty-Second AAAI conference on artificial
  intelligence}, \bibinfo{year}{2018}.
\bibitem[{Sun et~al.(2020)Sun, Lin, and Zhu}]{sun2020multi}
\bibinfo{author}{K.~Sun}, \bibinfo{author}{Z.~Lin}, \bibinfo{author}{Z.~Zhu},
\newblock \bibinfo{title}{Multi-stage self-supervised learning for graph
  convolutional networks on graphs with few labeled nodes},
\newblock in: \bibinfo{booktitle}{Proceedings of the AAAI Conference on
  Artificial Intelligence}, volume~\bibinfo{volume}{34}, \bibinfo{year}{2020},
  pp. \bibinfo{pages}{5892--5899}.
\bibitem[{Horn and Johnson(1985)}]{Horn1985}
\bibinfo{author}{R.~A. Horn}, \bibinfo{author}{C.~R. Johnson},
  \bibinfo{title}{Matrix Analysis}, \bibinfo{publisher}{Cambridge, UK:
  Cambridge University Press}, \bibinfo{year}{1985}.
\bibitem[{Bernstein(2009)}]{Bernstein2009}
\bibinfo{author}{D.~S. Bernstein}, \bibinfo{title}{Matrix mathematics: Theory,
  facts, and formulas (second edition)}, \bibinfo{publisher}{Princeton,
  NJ:Princeton University Press}, \bibinfo{year}{2009}.
\bibitem[{Godsil(1993)}]{Godsil1993}
\bibinfo{author}{C.~Godsil}, \bibinfo{title}{Algebraic combinatorics},
  volume~\bibinfo{volume}{6}, \bibinfo{publisher}{CRC Press},
  \bibinfo{year}{1993}.
\bibitem[{Hoory et~al.(2006)Hoory, Linial, and Wigderson}]{Hoory2006}
\bibinfo{author}{S.~Hoory}, \bibinfo{author}{N.~Linial},
  \bibinfo{author}{A.~Wigderson},
\newblock \bibinfo{title}{Expander graphs and their applications},
\newblock \bibinfo{journal}{Bulletin of the American Mathematical Society}
  (\bibinfo{year}{2006}) \bibinfo{pages}{439 -- 561}.

\end{thebibliography}

\end{document}